\documentclass{article}
\usepackage{times}
\usepackage{microtype}
\usepackage{graphicx}
\usepackage{subcaption}
\usepackage{booktabs}
\usepackage{hyperref}

\usepackage{configuration/iclr2026}

\iclrfinalcopy

\usepackage{amsmath,amssymb,amsthm}
\newtheorem{theorem}{Theorem}[section]

\newtheorem{definition}[theorem]{Definition}

\newtheorem{remark}{Remark}[section]
\newtheorem{assumption}{Assumption}

\DeclareMathOperator*{\argmin}{arg\,min\,}

\newenvironment{talign*}
{\csname align*\endcsname}
{\endalign}

\usepackage[utf8]{inputenc}
\usepackage[T1]{fontenc}
\usepackage{url}
\usepackage{booktabs}
\usepackage{amsfonts}
\usepackage{nicefrac}
\usepackage{microtype}
\usepackage{xcolor}
\usepackage{algorithm}
\usepackage{algorithmic}
\usepackage{graphicx}
\usepackage{subcaption}
\usepackage[flushleft]{threeparttable}
\usepackage{float}
\usepackage{multirow}
\usepackage{xspace}
\usepackage{natbib}
\usepackage{enumitem}
\usepackage{multicol}
\usepackage[font=small]{caption}
\usepackage{autobreak}
\usepackage{sidecap}
\usepackage{wrapfig}
\usepackage{bbding}
\usepackage[toc, page, header]{appendix}
\usepackage{tikz}
\usepackage{xcolor}
\usepackage{pifont}
\usepackage{mdframed}
\usepackage{xurl}
\usepackage{mathtools}
\usepackage{bm}
\usepackage{colortbl}

\usepackage[capitalize,noabbrev]{cleveref}
\usepackage{tcolorbox}
\tcbuselibrary{breakable, skins}
\newtcolorbox{takeaway}{
	enhanced jigsaw,
	breakable,
	colback=black!8,
	colframe=black!8,
	boxrule=0pt,
	borderline west={2.5pt}{0pt}{black!20},
	sharp corners,
	left=5pt,right=5pt,top=3pt,bottom=3pt,
	before skip=10pt,after skip=10pt,
}
\newtcolorbox{taskbox}{
	breakable,
	colback=blue!2,
	colframe=blue!20,
	boxrule=0.5pt,
	arc=0pt,
	outer arc=0pt,
	left=5pt,right=5pt,top=5pt,bottom=5pt,
	boxsep=0pt,
	width=0.98\columnwidth,
	center,
}

\tcolorboxenvironment{remark}{
	enhanced jigsaw,
	breakable,
	colback=blue!8,
	colframe=blue!8,
	boxrule=0pt,
	borderline west={2.5pt}{0pt}{blue!20},
	sharp corners,
	left=3pt,right=3pt,top=1pt,bottom=1pt,
	before skip=4pt,after skip=4pt,
}

\tcolorboxenvironment{definition}{
	enhanced jigsaw,
	breakable,
	colback=purple!8,
	colframe=purple!8,
	boxrule=0pt,
	borderline west={2.5pt}{0pt}{purple!20},
	sharp corners,
	left=3pt,right=3pt,top=1pt,bottom=1pt,
	before skip=4pt,after skip=4pt,
}

\makeatletter
\def\thm@space@setup{\thm@preskip=1pt plus 1pt minus 1pt
\thm@postskip=\thm@preskip
}
\makeatother

\hypersetup{
	colorlinks=true,
	linkcolor=blue,
	citecolor=blue,
	urlcolor=blue
}

\definecolor{coral}{RGB}{255,127,80}
\definecolor{darkgreen}{RGB}{0,100,0}
\definecolor{darkyellow}{RGB}{204,153,0}
\definecolor{salmon}{RGB}{250,128,114}
\definecolor{active_principles}{RGB}{133, 20, 44}
\definecolor{working_set}{RGB}{115, 23, 37}
\definecolor{anomalies}{RGB}{237, 71, 100}
\definecolor{coherent_augmentation}{RGB}{37, 87, 175}
\definecolor{true_principle}{RGB}{131, 107, 183}

\definecolor{superior}{HTML}{7D2AD1}
\definecolor{inferior}{HTML}{D14D4D}
\definecolor{headercolor}{gray}{0.90}
\definecolor{rowcolor}{gray}{0.97}

\newcommand{\method}{\textsc{CoEvolve}\xspace}
\newcommand{\methodm}{\textnormal{\scshape CoEvolve}}

\title{Collaborative Principle Evolution via Evidence Transfer for Scientific Discovery}

\author{Yingming Pu$^{1,2}$ \quad Hongyu Chen$^{2,*}$ \quad Tao Lin$^{2,*}$\\[3pt]
  \mdseries $^{1}$Zhejiang University, Hangzhou, Zhejiang, China\\
  \mdseries $^{2}$Westlake University, Hangzhou, Zhejiang, China\\
  \mdseries $^{*}$Corresponding authors \quad
  \texttt{\{puyingming, lintao\}@westlake.edu.cn}
}

\begin{document}

\maketitle

\begin{abstract}
Large Language Model (LLM)-based agents promise to automate scientific discovery, yet exploring the vast hypothesis space remains costly.
Existing principle-evolution methods accelerate this loop, but operate sequentially, which caps exploration breadth and wastes wall-clock time on challenging problems.
To address this, we formulate collaborative scientific discovery as \emph{evidence transfer} between parallel principle-evolution branches.
We present \textbf{\method}, which realizes this transfer through a coordination core over parallel branches. By integrating value-of-information-gated routing and context-discounted likelihood injection, \method enables branches to collaborate through shared measurements while keeping their principle posteriors separate.
Across six scientific-discovery tasks under a matched evaluation budget, \method attains a mean solution quality of $66.5\%$ versus $57.0\%$ for single-branch principle evolution, with a $1.80\times$ mean wall-clock speedup on the GPT-5.6-Terra backbone; on five auto-research tasks delegated to an autonomous research harness, it is the only arm whose mean stays above the published SOTA anchor on every task.
These results establish when evidence sharing accelerates parallel discovery and when transfer safeguards are necessary to limit negative or inert transfers.
\end{abstract}

\section{Introduction}
\label{sec:introduction}

\begin{wrapfigure}{r}{0.38\textwidth}
  \vspace{-15pt}
  \centering
  \includegraphics[width=\linewidth]{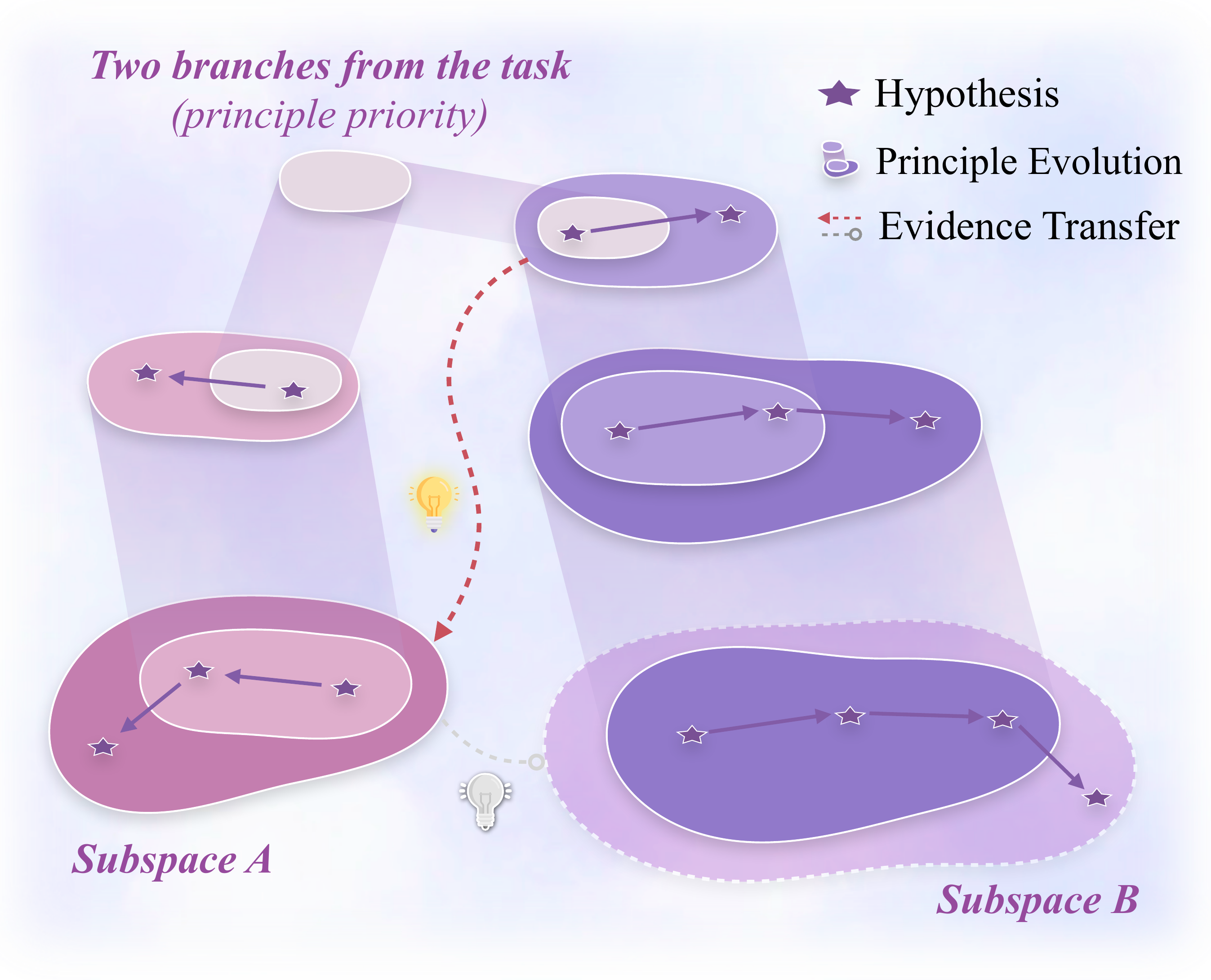}
  \vspace{-15pt}
  \caption{\textbf{Principle co-evolution.} Independent branches explore different principle subspaces through a hypothesis-testing loop. \method shares evidence, allowing the branches to collaborate while retaining separate search histories.}
  \label{fig:wrap_intro}
  \vspace{-18pt}
\end{wrapfigure}

The large language model (LLM)-based AI Scientist is often realized as a hypothesis-driven loop: an agent proposes a hypothesis, evaluates it, and designs the next from observations~\citep{Wei2025FromAF,Gridach2025AgenticAF,Herron2026FromRT,Boiko2023AutonomousCR,Mitchener2025KosmosAA}. While these systems automate discovery~\citep{Ghareeb2025RobinAM,Ghafarollahi2024SciAgentsAS,li2026agenticfusionlargeatomic}, each still operates alone. Yet real discovery is rarely solitary, so how could AI Scientists collaborate on the same problem?

Existing systems coordinate plans, roles, and natural-language states~\citep{Su2024ManyHA,Ghareeb2025RobinAM,Lyu2026EvoScientistTM,Feng2026InternAgent15AU,Xin2026EurekAgentAE,Tang2025AIResearcherAS}, leaving evidence-level transfer rules unspecified, whereas principle-aware methods~\citep{Pu2026PrincipleEvolvableSD,Pu2025PiFlowPS} adapt scientific principles as a structured medium, maintaining one context-specific evidence stream per cycle.
Naively extending this principle optimization to many branches invites failure: one branch's observation may duplicate an existing measurement, come from a different context, or need replication before its relevance is clear; principles then grow overconfident or noisy, and performance degrades. 
This creates a concrete requirement for existing systems: \emph{parallel AI Scientists must share measurements without treating redundant or context-shifted evidence as independent local evidence}.

These limitations culminate in three challenges: (a) concurrent branches must provide \textbf{complementary coverage}, not repetition; (b) transfers must reflect \textbf{source compatibility}; and (c) \textbf{wall-clock efficiency}: any speedup must exceed the cost of screening.

To bridge this gap, we propose \textbf{\method}, treating collaborative scientific discovery as an evidence transfer problem via \emph{principle co-evolution}~(Definition~\ref{def:principle_coevolution}), as shown in Figure~\ref{fig:wrap_intro}. Each branch retains its own principle posterior~\citep{Pu2026PrincipleEvolvableSD}. A \emph{coordination core} collects branch-level evidence, deduplicates and discounts each record, and routes them to the target branch for principle posterior updates. This \textsc{Collect--Merge--Score--Route--Inject} loop fosters a parallel exploration that maximizes the information gained from each branch based on information theory. Consequently, \textbf{\method} connects concurrent exploration to local updates while preserving branch autonomy.

\begin{definition}[Principle co-evolution]
\label{def:principle_coevolution}
    In this paper, we use the term \emph{principle co-evolution} to describe a parallel hypothesis-testing protocol in which $K\geq 2$ branches run independent principle-evolution from different sub-domains, over a shared Universal Principle Space~\citep{Pu2026PrincipleEvolvableSD}. The branches are coupled exclusively through a shared pool of tested evidence. Evidence may update a target branch only as a likelihood factor~(Section~\ref{subsec:theoretical_framework}). Branches thus co-evolve through shared measurements only, no global posterior or belief is formed.
\end{definition}

We evaluate \textbf{\method} on six scientific-discovery tasks and five autoresearch tasks. Empirical results show that \textbf{\method}:
\textbf{(a)} \textbf{attains the best average solution quality}, a mean of $66.5\%$ versus $57.0\%$ for PiEvo~\citep{Pu2026PrincipleEvolvableSD};
\textbf{(b)} \textbf{converts parallel throughput into quality}, completing the same evaluation budget $1.80\times$ faster than PiEvo on the GPT-5.6-Terra backbone with a $+8.3$ equal-time solution quality gap at a $0.97\times$ per-token conversion rate; and
\textbf{(c)} \textbf{generalizes to open-ended autoresearch}, standing as the only method above the published SOTA anchor on all five harness tasks (mean improvement $+16.7\%$ vs.\ PiEvo's $+5.8\%$), with the margin growing with task explorativeness: on the most explorative task, \method nearly triples the strongest baseline~\citep{Pu2026PrincipleEvolvableSD}'s improvement over the anchor ($+45.4\%$ vs.\ PiEvo's $+15.4\%$).

In summary, our contributions are:
\begin{enumerate}[nosep, leftmargin=16pt]
\item[\textbf{(a)}] We introduce the concept of \textbf{Principle Co-evolution}, transforming principle-evolving agents to parallel scientific discovery, widening exploration with high efficiency.
\item[\textbf{(b)}] We propose \textbf{\method}, a framework that evolves scientific principles via evidence transfer in parallel, sharing branch-level evidence and speeding up discovery.
\item[\textbf{(c)}] We conduct \textbf{extensive evaluations} across six closed-world scientific tasks and five autoresearch tasks, showing that \method attains the best average solution quality and is the only system above the published SOTA anchor on all autoresearch tasks, with the gains traced to disciplined evidence transfer rather than parallelism alone.
\end{enumerate}

\section{Related Work}
\label{sec:related_work}
\noindent \textbf{Agentic AI for scientific discovery.}
Recent surveys describe a shift from workflows to \emph{agentic science}, where LLMs plan studies, invoke domain tools, and iterate over hypotheses~\citep{Wei2025FromAF,Gridach2025AgenticAF,Herron2026FromRT}. 
End-to-end systems instantiate this loop at increasing autonomy and horizon: AI Researcher and DeepScientist explore solution spaces, Co-Scientist organizes deliberative agents with experimental validation, and EvoScientist, EvoSci, and InternAgent-1.5 maintain evolving populations or long-horizon research state~\citep{Tang2025AIResearcherAS,Weng2025DeepScientistAF,Gottweis2025AcceleratingSD,Lyu2026EvoScientistTM,Xiong2026EvoSciAB,Feng2026InternAgent15AU}. 
Other systems redesign the loop through scientific-method controllers, stateful workflows, environment engineering, or outer-loop optimization~\citep{Smith2026TheLS,Zhao2026ScienceFlowAL,Xin2026EurekAgentAE,Qu2026BilevelAM}. Specifically, PiEvo~\citep{Pu2026PrincipleEvolvableSD} casts discovery as Bayesian optimization over an expanding set of principles; its single evidence stream, however, confines exploration to one evolution. \method lifts this confinement: evidence validated in one branch becomes evidence for all.

\noindent \textbf{Language model-based collaboration.}
Multi-agent discovery systems distribute roles, plans, or reasoning across specialized LLMs: SciAgents~\citep{Ghafarollahi2024SciAgentsAS} and Robin~\citep{Ghareeb2025RobinAM} automate workflow-level coordination, \citet{Su2024ManyHA} improves idea generation through agent plurality, and domain systems such as GenoMAS~\citep{Liu2025GenoMASAM} and modular materials agents couple collaboration to code or task structure~\citep{Chaudhari2026ModularLL}. PiFlow~\citep{Pu2025PiFlowPS} further frames search as principle-guided exploration-exploitation balance, while EDAgent addresses concurrent orchestration and incremental state synchronization~\citep{Yuan2026EDAgentAC}. 
These systems show that coordination improves discovery diversity, but communication remains prompt-level: which records may cross agents, and at what weight, is left unspecified. \method leverages a record-level protocol: what crosses agents is a tested observation whose weight is computed.

\noindent \textbf{Evidence transfer in parallel search.}
Classical parallel search provides analytical primitives for controlling information flow. Batch and asynchronous Bayesian optimization reduce redundant acquisitions and provide regret guarantees~\citep{Ren2024TSRSRAP,Sugiura2026RandomizedKB}; safe optimization constrains admissible actions~\citep{Wei2024SafeBO}; physics-informed composite models structure prediction targets~\citep{Dong2026BayesianOO}; multi-output Gaussian processes characterize conditions for avoiding negative transfer~\citep{Li2020OnNT}; and island-model evolution uses controlled migration to preserve population diversity~\citep{ychowski2024CultivatingAO}. In LLM-based search, DeltaEvolve records structured semantic deltas, but only along the successive nodes of one lineage~\citep{Jiang2026DeltaEvolveAS}. 
These primitives control information flow but leave the experiment-level protocol open: an import may be dependent or require replication. \method supplies that protocol: observations cross branches only as gated, discounted imports, certified before they write into a posterior.

\section{Methodology}
\label{sec:methodology}

\subsection{Problem Setup and Notations}
\label{subsec:problem_setup}
We formulate collaborative scientific discovery as a \emph{principle co-evolution} problem (Definition~\ref{def:principle_coevolution}). 
Each branch $j$ explores a hypothesis space $\mathcal H_j$ and maintains a posterior over a shared finite principle universe $\bar{\mathcal P}$, where a principle $P$ models how hypotheses produce outcomes and $P^\star$ denotes the true principle~\citep{Pu2026PrincipleEvolvableSD}. At round $t$, branch $j$ selects $h\in\mathcal H_j$, evaluates it with the task validator, and observes $y\sim p_j(\cdot\mid h,P^\star)$; the tested pairs $(h,y)$ form the local evidence set $\mathcal D_{j,t}$. 

\noindent \textbf{\method leaves the branch-level loop unchanged and adds a \emph{coordination core}.} As shown in Figure~\ref{fig:overview}, each branch owns its tested pairs, publishes them as candidate records, and may admit other branches' records as imports; Definition~\ref{def:evidence_objects} delimits this \emph{evidence-only} interface, and \cref{subsec:theoretical_framework} formulates these rules.

\begin{definition}[Evidence objects and system history]
    \label{def:evidence_objects}
    A \emph{local observation} is a pair $(h,y)\in\mathcal D_{j,t}$ tested by branch $j$ itself. To share one, source branch $i$ publishes a \emph{candidate record}
    $
    C=(i,h,y,E_i,m)
    $
    carrying source context $E_i$ and metadata $m$. An \emph{import} for target branch $j$ is a record that passes $j$'s routing decision, with the discount $\alpha_{j\leftarrow i,t}(h)$ attached at scoring time; those accepted by round $t$ form $\mathcal I_{j,t}$. 
    Let $H_t^{\mathrm{sys}}$ denote the branch histories and evidence pool available at round $t$; discounts and routing decisions use only the history available before an import is injected.
\end{definition}

\begin{figure}[t!]
    \centering
    \includegraphics[width=0.954\linewidth]{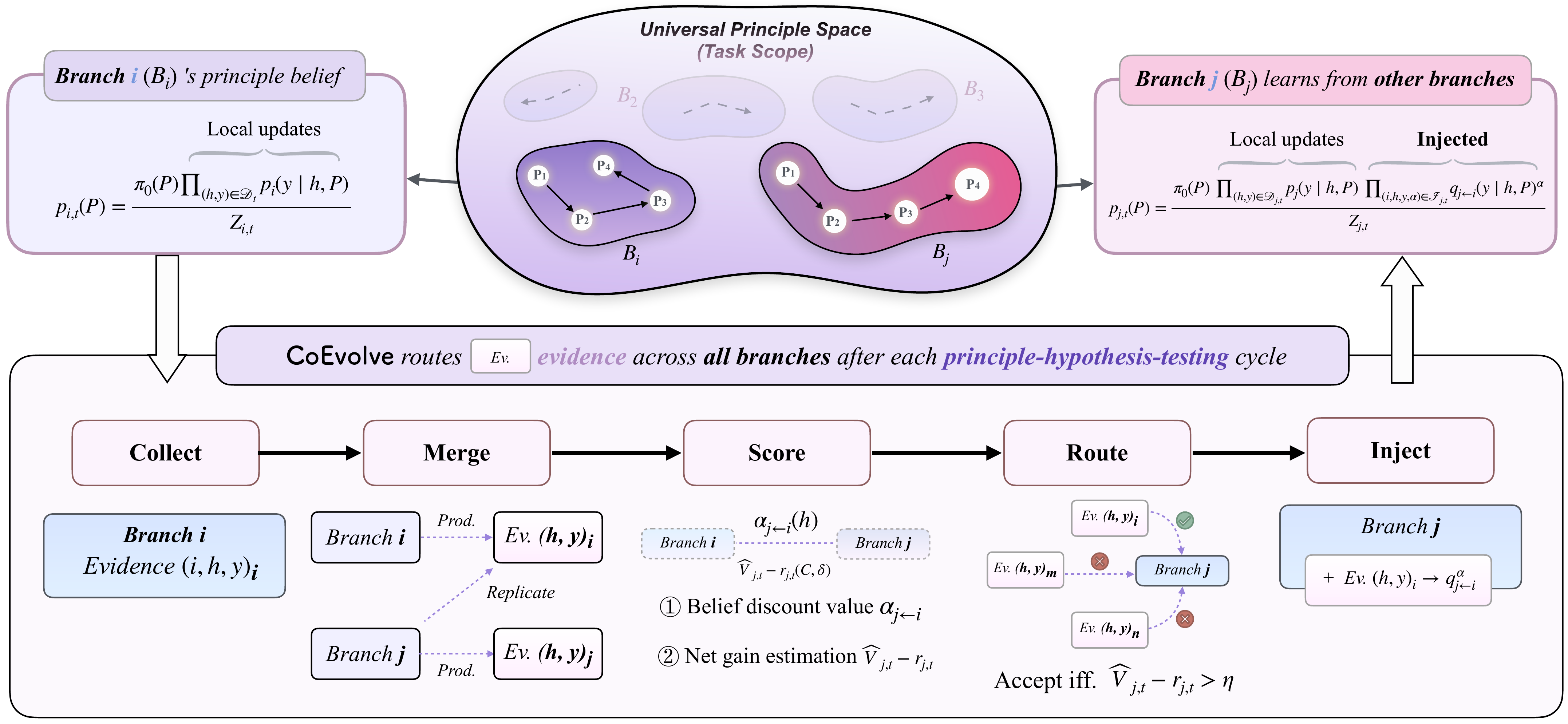}
    \vspace{-4pt}
    \caption{\textbf{Overview of \method.} After every hypothesis-testing cycle, the \emph{coordination core} (bottom) collects hypothesis--outcome pairs as evidence, merges them into a global evidence pool, scores each source--target pair, and routes and injects only evidence satisfying the gate. An accepted record enters the target branch as the discounted likelihood factor $q_{j\leftarrow i}(y\mid h,P)^\alpha$; rejected records remain in the evidence pool.}
    \label{fig:overview}
\end{figure}

\subsection{Theoretical Framework}
\label{subsec:theoretical_framework}
In \method, a target-specific router in the coordination core decides whether and with what weight a pooled record may update one branch. An accepted record enters the target posterior as the discounted tuple $(i,h,y,\alpha)$. Below, we derive in turn how a record is \emph{scored}, \emph{routed}, and \emph{injected} into the target posterior.

\noindent \textbf{Score: information value of a candidate record.} From first principles, the synergy between branches resides in their differences: \emph{an observation is worth transferring to branch $j$ precisely to the extent that it carries information branch $j$ does not already have}. The natural measure of this worth is the entropy reduction its update would induce in the target's principle posterior~\citep{Pu2026PrincipleEvolvableSD}, net of the transfer cost. Let $H(p)=-\sum_{P\in\bar{\mathcal P}}p(P)\log p(P)$ and let $p^+_{j,t}$ denote the posterior after the update for $C$. For an already observed candidate record $C$, with the transfer cost $\Delta^{\mathrm{imp}}_{j,t}(C)$ measured on the same normalized entropy scale, its conditional (post-outcome) net value is
\begin{equation}
    V_{j,t}(C)=
    \mathbb E\!\left[H(p_{j,t})-H(p^+_{j,t})\mid H_t^{\mathrm{sys}},C\right]
    -\lambda\,\Delta^{\mathrm{imp}}_{j,t}(C).
    \label{eq:true_value}
\end{equation}
The first term prices the information $C$ carries about the governing principle; the second charges the transfer at rate $\lambda$, so a record must be expected to pay for its own transfer. Only the expectation is certified: a realized import can still raise the target's entropy.

\noindent \textbf{Route: certified admission and ranking.} As Eq.~\eqref{eq:true_value} makes explicit, routing is scored after the source record, outcome included, is published; the surviving uncertainty is the verification and fitting randomness conditional on the observed $C$, not the outcome itself (a pre-outcome variant would additionally integrate over the predictive distribution of $y$). Admission is therefore decided from a computable estimate $\widehat V_{j,t}(C)$, and the estimation error must be accounted for explicitly. We require the estimation to come with a calibrated radius $r_{j,t}(C,\delta)$ such that
$
\Pr\!\left(\left|\widehat V_{j,t}(C)-V_{j,t}(C)\right|\le r_{j,t}(C,\delta)\mid H_t^{\mathrm{sys}},C\right)\ge 1-\delta.
$
Admission then uses a lower-confidence rule with a margin $\eta>0$ and a per-candidate confidence budget $\delta_{j,t,C}$: the router accepts a candidate only if
\begin{equation}
    \widehat V_{j,t}(C)-r_{j,t}(C,\delta_{j,t,C})>\eta,
    \label{eq:safe_voi_gate}
\end{equation}
so a candidate that merely looks valuable on average, but whose value is uncertain, is refused. Among the candidates that pass, the router ranks by
\begin{equation}
    C^\star_{j,t}
    =
    \argmin_{C\in\mathrm{passed}_j}
    \nicefrac{\Delta^{\mathrm{imp}}_{j,t}(C)^2}
    {\max\{\widehat V_{j,t}(C)-r_{j,t}(C,\delta_{j,t,C}),0\}+\varepsilon},
    \label{eq:ids_ratio}
\end{equation}
which prefers candidates offering high certified value per unit transfer cost. Admission is further capped per round and target branch by a routing quota $B$ (default $3$), keeping the top-$B$ candidates in ranked order, and by a redundancy cap $R_{\max}$ that drops a candidate whose text is a near-duplicate (token-Jaccard similarity ${\ge}\,R_{\max}$, default $0.3$) of one already admitted in the same round; both are specified in Appendix~\ref{app:loop_implementation}. The gate and the ratio decide \emph{which} records may flow, and in what order; \emph{how} an accepted record enters the target posterior is specified by the injection step that follows.

\noindent \textbf{Inject: discounted likelihood update.} We assume that branch contexts may be heterogeneous. A cross-branch observation is therefore valid for target branch $j$ only through an explicit likelihood $q_{j\leftarrow i}(y\mid h,P)$; to keep this transfer disciplined, we introduce a source--target discount $\alpha_{j\leftarrow i,t}(h)\in[0,1]$ that downweights the contribution of a source observation to a target posterior, formulated as:
\begin{equation}
    \alpha_{j\leftarrow i,t}(h)=
    \operatorname{clip}_{[0,1]}\!\left(
    \rho_{i,t}(h)s_{ij,t}(h)v_{i,t}(h)
    \right),
    \label{eq:discount_factor}
\end{equation}
where $\rho_{i,t}$ scores the source's accuracy in predicting its own outcomes, $s_{ij,t}$ scores how well the source's setting matches the target's, and the optional $v_{i,t}$ scores whether the observation has been independently replicated. 
Therefore, branch $j$'s posterior after local evidence and accepted imports is
\begin{equation}
    p_{j,t}(P)=
    \nicefrac{
    \pi_0(P)
    \prod_{(h,y)\in\mathcal D_{j,t}}\underbrace{p_j(y\mid h,P)}_{\scriptsize \text{Local updates}}
    \prod_{(i,h,y,\alpha)\in\mathcal I_{j,t}}\underbrace{q_{j\leftarrow i}(y\mid h,P)^\alpha}_{\scriptsize \text{Evidence transfer}}
    }{Z_{j,t}},
    \label{eq:power_posterior}
\end{equation}
where $\pi_0(P)>0$ is the prior, $\mathcal D_{j,t}$ contains locally tested observations, $\mathcal I_{j,t}$ contains target-specific accepted imports, and $Z_{j,t}$ is the normalizer. When the observations are conditionally independent and each $q_{j\leftarrow i}$ is the correct law for an observation generated in source context $i$ and interpreted by target $j$, this is an ordinary posterior; otherwise it is a generalized-Bayes or composite-likelihood posterior, and an import must not be read as a full independent sample.

Each injection multiplies the target posterior by exactly one factor of Eq.~\eqref{eq:power_posterior}. The three concerns thus stay separate by construction: the shared pool governs system-level coverage, the discounted posterior governs branch-level concentration, and the calibrated gate governs negative-transfer control.

\noindent \textbf{Instantiation of \method.}
Every quantity above is computed from logged statistics and record metadata alone. The value estimate $\widehat V_{j,t}(C)$ is evaluated exactly: clone the target posterior, apply the candidate at weight $\alpha$, and measure the induced entropy change. This is tractable because the principle universe is finite. The transfer cost $\Delta^{\mathrm{imp}}_{j,t}(C)$ combines fixed bookkeeping rates with a surcharge that grows with the same setting mismatch that downweights $\alpha$, so a less compatible source is discounted and charged more at once. Each accepted import takes effect only once its acceptance is recorded, so no posterior update is silent. Exact enumeration certifies the numerical update ($r=0$) but not Gaussian Process misspecification or residual context shift (refer to the $p_j(y\mid h, P)$ modeling in PiEvo~\citep{Pu2026PrincipleEvolvableSD}); the gate therefore remains a conservative heuristic rather than a deployed guarantee (Remark~\ref{rem:gate_scope}). Exact forms and the per-round loop are given in Appendix~\ref{app:loop_implementation}.

\begin{theorem}[Concentration and negative-transfer control, informal]
    \label{thm:informal_guarantees}
    Under a finite principle universe with positive prior (Appendix~\ref{subsec:guarantees}), \method guarantees:
    (a) \textbf{concentration under discounted imports}: if the discounted log-likelihood contrast in favor of $P^\star$ grows linearly in the effective evidence count $N^{\mathrm{eff}}_{j,T}$, i.e., local observations entering at unit weight, imports at their discount $\alpha$, then each branch's belief $p_{j,T}(P^\star)$ converges to one exponentially fast in $N^{\mathrm{eff}}_{j,T}$ (\cref{thm:concentration});
    and (b) \textbf{negative-transfer control}: if the value estimates are calibrated, the gate of Eq.~\eqref{eq:safe_voi_gate} bounds the probability of accepting any import whose net value falls below the margin $\eta$ by the calibration budget $\sum_{j,t,C}\delta_{j,t,C}$ (\cref{thm:negative_transfer}).
    \vspace{-0.08cm}
\end{theorem}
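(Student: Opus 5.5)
The two parts are independent, so I will prove them separately. Both rest only on the closed form of the discounted posterior in Eq.~\eqref{eq:power_posterior} and on the calibration requirement stated before Eq.~\eqref{eq:safe_voi_gate}.

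\textbf{Part (a): concentration.} I will work with posterior odds, because the normalizer $Z_{j,t}$ cancels there. For any $P\neq P^\star$, Eq.~\eqref{eq:power_posterior} gives
\begin{equation*}
\log\frac{p_{j,T}(P)}{p_{j,T}(P^\star)}=\log\frac{\pi_0(P)}{\pi_0(P^\star)}-L_{j,T}(P),
\end{equation*}
where $L_{j,T}(P)$ is the discounted log-likelihood contrast. It is the sum of $\log p_j(y\mid h,P^\star)-\log p_j(y\mid h,P)$ over local pairs, plus $\alpha$ times the corresponding $q_{j\leftarrow i}$ contrast over accepted imports. The formal hypothesis in \cref{thm:concentration} is that $L_{j,T}(P)\ge \kappa N^{\mathrm{eff}}_{j,T}$ for some $\kappa>0$ and all $P\neq P^\star$, where $N^{\mathrm{eff}}_{j,T}=|\mathcal D_{j,T}|+\sum_{\mathcal I_{j,T}}\alpha$. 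This may hold deterministically or with high probability, depending on how the appendix phrases it. Given it, I exponentiate and sum over the finitely many $P\neq P^\star$. Because $\pi_0(P^\star)>0$, this yields
\begin{equation*}
1-p_{j,T}(P^\star)\le\sum_{P\neq P^\star}\frac{p_{j,T}(P)}{p_{j,T}(P^\star)}\le\frac{1-\pi_0(P^\star)}{\pi_0(P^\star)}\,e^{-\kappa N^{\mathrm{eff}}_{j,T}}.
\end{equation*}
This is the claimed exponential rate in $N^{\mathrm{eff}}_{j,T}$. The finite universe makes the union bound trivial, and the positive prior makes the constant finite.

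\textbf{Main obstacle.} The hard part is not this algebra but justifying the linear-growth hypothesis itself, should the appendix require deriving it rather than assuming it. For local terms, I would take identifiability as a positive per-observation KL gap. I would then apply a martingale concentration bound (Azuma, under bounded log-likelihood ratios) to the centered increments, which are adapted to $H_t^{\mathrm{sys}}$. For imports, the exponent $\alpha$ scales both the drift and the increment range. A weighted Azuma bound therefore yields growth in $\sum\alpha$, provided each $q_{j\leftarrow i}$ keeps nonnegative expected contrast under the true source law. I would state that as an explicit assumption, since misspecified transfer could otherwise push the posterior toward the wrong principle. Adaptivity is handled because $\alpha$ and routing are predictable: they are computed from the history before injection (Definition~\ref{def:evidence_objects}).

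\textbf{Part (b): negative-transfer control.} Let $\mathcal B$ be the event that some accepted $C$ has $V_{j,t}(C)\le\eta$. If $C$ is accepted, Eq.~\eqref{eq:safe_voi_gate} gives $\widehat V_{j,t}(C)-r_{j,t}(C,\delta_{j,t,C})>\eta\ge V_{j,t}(C)$. Hence $|\widehat V_{j,t}(C)-V_{j,t}(C)|>r_{j,t}(C,\delta_{j,t,C})$, so the calibration event fails for that candidate. That failure has conditional probability at most $\delta_{j,t,C}$ given $H_t^{\mathrm{sys}}$ and $C$. Taking expectations over the history and applying a union bound over all triples $(j,t,C)$ gives $\Pr(\mathcal B)\le\sum_{j,t,C}\delta_{j,t,C}$. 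The only subtlety is that the candidate set is itself random and adaptive. I would handle this by indexing candidates with a countable set of slots fixed in advance, for example the $k$-th published record at round $t$, with budgets assigned to the slots. The quota $B$ and the redundancy cap only remove accepted candidates, so they cannot increase $\Pr(\mathcal B)$.
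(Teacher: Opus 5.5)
Your proof is correct and is essentially the paper's route. For (a) you use the posterior-odds identity, the bound $1-p_{j,T}(P^\star)\le\sum_{P\ne P^\star}p_{j,T}(P)/p_{j,T}(P^\star)$, and substitution of the contrast assumption; the paper's formal version carries a time-uniform slack $\mathfrak r_j(N^{\mathrm{eff}}_{j,T},\delta)$ on a probability-$(1-\delta)$ event, which simply moves into the exponent, and it assumes linear growth rather than deriving it, so your Azuma discussion is not needed. For (b) you take the contrapositive of the gate and then union-bound calibration failures along the adaptive stream, exactly as the paper does. Your constant $(1-\pi_0(P^\star))/\pi_0(P^\star)$ is the sharper one the paper records in its remark before loosening it to $(|\bar{\mathcal P}|-1)/\pi_0(P^\star)$, and your fixed-slot indexing is only different bookkeeping for the paper's measurability argument that restricts calibration to the realized candidate stream.
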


In summary, we detail the implementation of \method in Algorithm~\ref{algo:method}, and \method provides theoretical guarantees (\cref{thm:informal_guarantees}, formal statements are in \cref{subsec:guarantees}, with proofs in Appendix~\ref{app:methodology_proofs}) and empirical evidence, as validated in \cref{subsec:efficiency_analysis,subsec:ablation_study} (Figures~\ref{fig:efficiency} and~\ref{fig:ablation_module_knockouts}).

\begin{remark}[Evidence-level privacy scope]
    \label{rem:evidence_scope}
    \method coordinates branches through \emph{scientific evidence}: it keeps each branch's posterior and search history private by construction. For the sensitive evidence, privacy-preserving variants of the shared pool compose with the routing and gating machinery unchanged, and we leave them as future work.
\end{remark}

\section{Experiments}
\label{sec:experiments}
\subsection{Experiment Setup}
\label{subsec:experiment_setup}

\noindent \textbf{Benchmarks.}
We evaluate \method on six scientific tasks, spanning Molecular Bio-activity Optimization (MBO)~\citep{Gaulton2011ChEMBLAL}, Antimicrobial Peptide Design (AMP)~\citep{SantosJnior2019MacrelAP}, Promoter Expression Optimization (Promoter)~\citep{deAlmeida2022GenerationOS} in biology, Nanohelix Optimization (NHO)~\citep{wu2025machine} and Transition-Metal-Complex Design (TMC)~\citep{Song2025EvaluatingLL} in materials science, and Superconductor Critical-Temperature Optimization (SPO)~\citep{Hamidieh2018ADS} in physics. Each task utilizes surrogates~\citep{Pu2026PrincipleEvolvableSD} and released databases, and gates candidates by the standard validity rules of the field. Details refer to Appendix~\ref{sec:benchmark}.

\noindent \textbf{Baselines.}
We compare \method against three families of baselines. (i)~\emph{Vanilla MAS}: a plain multi-agent system without any hypothesis-selection or principle-evolution mechanism, which anchors the per-task reference. (ii)~\emph{Autonomous research agents}: The AI Scientist v1~\citep{Lu2024TheAS}, The AI Scientist v2~\citep{Yamada2025TheAS}, AI Researcher~\citep{Tang2025AIResearcherAS}, EvoScientist~\citep{Lyu2026EvoScientistTM}, and InternAgent-1.5~\citep{Feng2026InternAgent15AU}; to ensure a fair comparison on hypothesis discovery, each system is restricted to its core hypothesis-generation loop (no literature retrieval or report drafting). (iii)~\emph{Principle-guided search}: PiFlow~\citep{Pu2025PiFlowPS}, which performs principle-aware exploration-exploitation balance, and PiEvo~\citep{Pu2026PrincipleEvolvableSD}, which evolves its principle space.

\noindent \textbf{Implementation details.}
We use Gemma-4-31B-IT~\citep{Abd2026Gemma4T} and GPT-5.6-Terra~\citep{openai2026gpt56systemcard} for evaluation, with three independent seeds, under a total evaluation budget of $72$ queries without any web or retrieval access; \method runs $K{=}3$ branches by default ($24$ queries per branch). Per-task wall-clock time is measured from launcher start to final evaluation on identical shared hardware, using the same LLM endpoint.

\subsection{Evaluation Metrics}
\label{subsec:evaluation_metrics}
Following ~\citet{Pu2026PrincipleEvolvableSD}, our evaluation of \method against baselines focuses on (a) objective attainment, (b) exploration breadth, and (c) per-evaluation anytime efficiency. The elapsed-time and cost metrics (wall-clock speedup, the equal-time quality gap $\Delta\mathrm{SQ}@T_c$, the token conversion ratio, and token-normalized AUOC) are defined in \cref{subsec:efficiency_analysis}, where they are used.

\noindent \textbf{a. Solution Quality (SQ).}
$
	\mathrm{SQ} = \nicefrac{\max \{ y_k \mid (h_k, y_k) \in \mathcal{T} \} - y_{\mathrm{lo}}}{y_{\mathrm{hi}} - y_{\mathrm{lo}}} \times 100\% \,
$ is the maximum outcome along a trajectory $\mathcal{T}$, normalized to the domain reference scale $[y_{\mathrm{lo}}, y_{\mathrm{hi}}]$ (see Appendix~\ref{sec:benchmark}). Here $100\%$ marks the task's domain reference scale: a definitional bound on four tasks and a domain-anchored target on MBO and SPO, never approached in our runs (Appendix~\ref{sec:benchmark}); this keeps the cross-task average an arithmetic mean of comparable quantities. For a multi-branch system such as \method, $\mathcal{T}$ is the union of the run's branch trajectories (best-of-seed), and APD and AUOC below are computed over the same pooled trajectory, so every metric prices the run as a whole.

\noindent \textbf{b. Average Pairwise Distance (APD).}
$
    \mathrm{APD} = \nicefrac{2}{M(M-1)} \sum_{1 \le i < j \le M} d\left(\phi(h_i), \phi(h_j)\right) \,
$ is the mean pairwise distance among the $M$ valid hypotheses under the task-specific feature map $\phi(h)$ (see Appendix~\ref{sec:benchmark}). Higher APD indicates a wider spread of exploration.

\noindent \textbf{c. Area Under the Optimization Curve (AUOC).}
$
    \mathrm{AUOC} = \nicefrac{1}{T} \sum_{t=1}^{T} \frac{\max_{k \le t} \{ y_k \} - y_{\mathrm{lo}}}{y_{\mathrm{hi}} - y_{\mathrm{lo}}}.
$ measures the cumulative running-best performance over the fixed evaluation budget. Higher AUOC indicates that a method reaches strong running-best values earlier in the evaluation sequence.

\begin{table*}[t]
	\centering
	\caption{\textbf{Performance comparison with baselines on 6 benchmarks under the Solution Quality (SQ \%) metric.}
        Categorized by LLMs: \texttt{Gemma-4-31B-IT} and \texttt{GPT-5.6-Terra} with both non-thinking mode.
        We report mean $\pm$ std over three seeds. On the GPT-5.6-Terra backbone, \method's paired per-seed SQ gain over PiEvo is positive in 15 of 18 task--seed pairs (two-sided sign test $p{<}0.01$; Wilcoxon $p{=}0.001$; full tests in \cref{app:stats}). \textbf{Bold} marks the best mean per task within each backbone block and the best Average. The colored ratio after each cell highlights \textcolor[HTML]{7D2AD1}{superior} ($\uparrow$) or \textcolor[HTML]{D14D4D}{inferior} ($\downarrow$) SQ relative to the worst non-zero mean on the same backbone and task, which is \underline{underlined}.}
	\label{tab:main_compare_8tasks}
	\resizebox{\textwidth}{!}{
		\begin{tabular}{l|cccccc|l}
			\toprule[1.5pt]
			\textbf{Model / Method}                      & \bfseries MBO & \bfseries NHO & \bfseries SPO & \bfseries TMC & \bfseries AMP & \bfseries Promoter & \bfseries Average \\
			\midrule
			\multicolumn{8}{l}{\textbf{Gemma-4-31B-IT}} \\
			\midrule
			\rowcolor{rowcolor}
			Vanilla MAS                                 & \underline{39.2} {\scriptsize \color{gray} $\pm$ 0.8} & \underline{77.2} {\scriptsize \color{gray} $\pm$ 1.4} & 0.0 {\scriptsize \color{gray} $\pm$ 0.0}\, {\scriptsize \textcolor[HTML]{D14D4D}{$\downarrow$0.0x}} & 74.4 {\scriptsize \color{gray} $\pm$ 0.0}\, {\scriptsize \textcolor[HTML]{7D2AD1}{$\uparrow$1.3x}} & 27.4 {\scriptsize \color{gray} $\pm$ 2.3}\, {\scriptsize \textcolor[HTML]{7D2AD1}{$\uparrow$1.1x}} & 84.4 {\scriptsize \color{gray} $\pm$ 0.3}\, {\scriptsize \textcolor[HTML]{7D2AD1}{$\uparrow$4.5x}} & 50.4\, {\scriptsize \textcolor[HTML]{7D2AD1}{$\uparrow$1.2x}} \\
			The AI Scientist v1~\citep{Lu2024TheAS}     & 48.6 {\scriptsize \color{gray} $\pm$ 2.2}\, {\scriptsize \textcolor[HTML]{7D2AD1}{$\uparrow$1.2x}} & 78.3 {\scriptsize \color{gray} $\pm$ 4.2}\, {\scriptsize \textcolor[HTML]{7D2AD1}{$\uparrow$1.0x}} & 0.0 {\scriptsize \color{gray} $\pm$ 0.0}\, {\scriptsize \textcolor[HTML]{D14D4D}{$\downarrow$0.0x}} & 84.5 {\scriptsize \color{gray} $\pm$ 0.0}\, {\scriptsize \textcolor[HTML]{7D2AD1}{$\uparrow$1.5x}} & \underline{24.1} {\scriptsize \color{gray} $\pm$ 1.5} & \underline{18.7} {\scriptsize \color{gray} $\pm$ 32.4} & 42.4\, {\scriptsize \textcolor[HTML]{7D2AD1}{$\uparrow$1.0x}} \\
			\rowcolor{rowcolor}
			The AI Scientist v2~\citep{Yamada2025TheAS} & 48.5 {\scriptsize \color{gray} $\pm$ 5.1}\, {\scriptsize \textcolor[HTML]{7D2AD1}{$\uparrow$1.2x}} & 84.3 {\scriptsize \color{gray} $\pm$ 3.9}\, {\scriptsize \textcolor[HTML]{7D2AD1}{$\uparrow$1.1x}} & \underline{8.2} {\scriptsize \color{gray} $\pm$ 1.6} & 83.7 {\scriptsize \color{gray} $\pm$ 0.8}\, {\scriptsize \textcolor[HTML]{7D2AD1}{$\uparrow$1.5x}} & 27.7 {\scriptsize \color{gray} $\pm$ 3.0}\, {\scriptsize \textcolor[HTML]{7D2AD1}{$\uparrow$1.2x}} & 0.0 {\scriptsize \color{gray} $\pm$ 0.0}\, {\scriptsize \textcolor[HTML]{D14D4D}{$\downarrow$0.0x}} & \underline{42.1} \\
			AI Researcher~\citep{Tang2025AIResearcherAS}& 50.0 {\scriptsize \color{gray} $\pm$ 2.4}\, {\scriptsize \textcolor[HTML]{7D2AD1}{$\uparrow$1.3x}} & 88.3 {\scriptsize \color{gray} $\pm$ 1.8}\, {\scriptsize \textcolor[HTML]{7D2AD1}{$\uparrow$1.1x}} & 24.6 {\scriptsize \color{gray} $\pm$ 2.0}\, {\scriptsize \textcolor[HTML]{7D2AD1}{$\uparrow$3.0x}} & 81.5 {\scriptsize \color{gray} $\pm$ 6.1}\, {\scriptsize \textcolor[HTML]{7D2AD1}{$\uparrow$1.5x}} & 26.7 {\scriptsize \color{gray} $\pm$ 14.6}\, {\scriptsize \textcolor[HTML]{7D2AD1}{$\uparrow$1.1x}} & 19.3 {\scriptsize \color{gray} $\pm$ 33.5}\, {\scriptsize \textcolor[HTML]{7D2AD1}{$\uparrow$1.0x}} & 48.4\, {\scriptsize \textcolor[HTML]{7D2AD1}{$\uparrow$1.2x}} \\
			\rowcolor{rowcolor}
			EvoScientist~\citep{Lyu2026EvoScientistTM}  & 45.3 {\scriptsize \color{gray} $\pm$ 1.5}\, {\scriptsize \textcolor[HTML]{7D2AD1}{$\uparrow$1.2x}} & 93.3 {\scriptsize \color{gray} $\pm$ 0.7}\, {\scriptsize \textcolor[HTML]{7D2AD1}{$\uparrow$1.2x}} & 9.2 {\scriptsize \color{gray} $\pm$ 0.7}\, {\scriptsize \textcolor[HTML]{7D2AD1}{$\uparrow$1.1x}} & 85.4 {\scriptsize \color{gray} $\pm$ 0.0}\, {\scriptsize \textcolor[HTML]{7D2AD1}{$\uparrow$1.5x}} & 29.7 {\scriptsize \color{gray} $\pm$ 2.6}\, {\scriptsize \textcolor[HTML]{7D2AD1}{$\uparrow$1.2x}} & 54.0 {\scriptsize \color{gray} $\pm$ 7.8}\, {\scriptsize \textcolor[HTML]{7D2AD1}{$\uparrow$2.9x}} & 52.8\, {\scriptsize \textcolor[HTML]{7D2AD1}{$\uparrow$1.3x}} \\
			InternAgent 1.5~\citep{Feng2026InternAgent15AU} & 53.6 {\scriptsize \color{gray} $\pm$ 2.1}\, {\scriptsize \textcolor[HTML]{7D2AD1}{$\uparrow$1.4x}} & 90.5 {\scriptsize \color{gray} $\pm$ 5.5}\, {\scriptsize \textcolor[HTML]{7D2AD1}{$\uparrow$1.2x}} & 10.5 {\scriptsize \color{gray} $\pm$ 0.1}\, {\scriptsize \textcolor[HTML]{7D2AD1}{$\uparrow$1.3x}} & 76.8 {\scriptsize \color{gray} $\pm$ 1.6}\, {\scriptsize \textcolor[HTML]{7D2AD1}{$\uparrow$1.4x}} & 28.7 {\scriptsize \color{gray} $\pm$ 1.7}\, {\scriptsize \textcolor[HTML]{7D2AD1}{$\uparrow$1.2x}} & 86.2 {\scriptsize \color{gray} $\pm$ 0.6}\, {\scriptsize \textcolor[HTML]{7D2AD1}{$\uparrow$4.6x}} & 57.7\, {\scriptsize \textcolor[HTML]{7D2AD1}{$\uparrow$1.4x}} \\
			\rowcolor{rowcolor}
			PiFlow~\citep{Pu2025PiFlowPS}               & 64.5 {\scriptsize \color{gray} $\pm$ 5.7}\, {\scriptsize \textcolor[HTML]{7D2AD1}{$\uparrow$1.6x}} & 91.0 {\scriptsize \color{gray} $\pm$ 3.7}\, {\scriptsize \textcolor[HTML]{7D2AD1}{$\uparrow$1.2x}} & 15.7 {\scriptsize \color{gray} $\pm$ 7.9}\, {\scriptsize \textcolor[HTML]{7D2AD1}{$\uparrow$1.9x}} & \underline{56.0} {\scriptsize \color{gray} $\pm$ 48.5} & 32.7 {\scriptsize \color{gray} $\pm$ 4.5}\, {\scriptsize \textcolor[HTML]{7D2AD1}{$\uparrow$1.4x}} & 72.2 {\scriptsize \color{gray} $\pm$ 9.1}\, {\scriptsize \textcolor[HTML]{7D2AD1}{$\uparrow$3.9x}} & 55.3\, {\scriptsize \textcolor[HTML]{7D2AD1}{$\uparrow$1.3x}} \\
			PiEvo~\citep{Pu2026PrincipleEvolvableSD}    & 61.8 {\scriptsize \color{gray} $\pm$ 2.7}\, {\scriptsize \textcolor[HTML]{7D2AD1}{$\uparrow$1.6x}} & \textbf{94.9 {\scriptsize \color{gray} $\pm$ 0.1}}\, {\scriptsize \textcolor[HTML]{7D2AD1}{$\uparrow$1.2x}} & 10.6 {\scriptsize \color{gray} $\pm$ 0.0}\, {\scriptsize \textcolor[HTML]{7D2AD1}{$\uparrow$1.3x}} & 84.7 {\scriptsize \color{gray} $\pm$ 0.0}\, {\scriptsize \textcolor[HTML]{7D2AD1}{$\uparrow$1.5x}} & 31.4 {\scriptsize \color{gray} $\pm$ 5.8}\, {\scriptsize \textcolor[HTML]{7D2AD1}{$\uparrow$1.3x}} & 54.4 {\scriptsize \color{gray} $\pm$ 6.3}\, {\scriptsize \textcolor[HTML]{7D2AD1}{$\uparrow$2.9x}} & 56.3\, {\scriptsize \textcolor[HTML]{7D2AD1}{$\uparrow$1.3x}} \\
			\rowcolor{rowcolor}
			\textbf{\method} (ours)                     & \textbf{77.1 {\scriptsize \color{gray} $\pm$ 4.6}}\, {\scriptsize \textcolor[HTML]{7D2AD1}{$\uparrow$2.0x}} & \textbf{94.9 {\scriptsize \color{gray} $\pm$ 0.1}}\, {\scriptsize \textcolor[HTML]{7D2AD1}{$\uparrow$1.2x}} & \textbf{30.0 {\scriptsize \color{gray} $\pm$ 0.6}}\, {\scriptsize \textcolor[HTML]{7D2AD1}{$\uparrow$3.7x}} & \textbf{88.3 {\scriptsize \color{gray} $\pm$ 4.2}}\, {\scriptsize \textcolor[HTML]{7D2AD1}{$\uparrow$1.6x}} & \textbf{37.6 {\scriptsize \color{gray} $\pm$ 2.6}}\, {\scriptsize \textcolor[HTML]{7D2AD1}{$\uparrow$1.6x}} & \textbf{86.4 {\scriptsize \color{gray} $\pm$ 0.4}}\, {\scriptsize \textcolor[HTML]{7D2AD1}{$\uparrow$4.6x}} & \textbf{69.1}\, {\scriptsize \textcolor[HTML]{7D2AD1}{$\uparrow$1.6x}} \\
			\midrule
			\multicolumn{8}{l}{\textbf{GPT-5.6-Terra}} \\
			\midrule
			\rowcolor{rowcolor}
			Vanilla MAS                                 & \underline{51.0} {\scriptsize \color{gray} $\pm$ 2.6} & \underline{73.2} {\scriptsize \color{gray} $\pm$ 2.6} & 0.0 {\scriptsize \color{gray} $\pm$ 0.0}\, {\scriptsize \textcolor[HTML]{D14D4D}{$\downarrow$0.0x}} & \underline{66.6} {\scriptsize \color{gray} $\pm$ 1.7} & 22.1 {\scriptsize \color{gray} $\pm$ 3.2}\, {\scriptsize \textcolor[HTML]{7D2AD1}{$\uparrow$1.0x}} & 70.8 {\scriptsize \color{gray} $\pm$ 14.3}\, {\scriptsize \textcolor[HTML]{7D2AD1}{$\uparrow$1.1x}} & \underline{47.3} \\
			The AI Scientist v1~\citep{Lu2024TheAS}     & 60.0 {\scriptsize \color{gray} $\pm$ 6.5}\, {\scriptsize \textcolor[HTML]{7D2AD1}{$\uparrow$1.2x}} & 89.8 {\scriptsize \color{gray} $\pm$ 4.7}\, {\scriptsize \textcolor[HTML]{7D2AD1}{$\uparrow$1.2x}} & 0.0 {\scriptsize \color{gray} $\pm$ 0.0}\, {\scriptsize \textcolor[HTML]{D14D4D}{$\downarrow$0.0x}} & 85.4 {\scriptsize \color{gray} $\pm$ 0.0}\, {\scriptsize \textcolor[HTML]{7D2AD1}{$\uparrow$1.3x}} & 23.8 {\scriptsize \color{gray} $\pm$ 2.6}\, {\scriptsize \textcolor[HTML]{7D2AD1}{$\uparrow$1.1x}} & \underline{62.1} {\scriptsize \color{gray} $\pm$ 16.7} & 53.5\, {\scriptsize \textcolor[HTML]{7D2AD1}{$\uparrow$1.1x}} \\
			\rowcolor{rowcolor}
			The AI Scientist v2~\citep{Yamada2025TheAS} & 51.5 {\scriptsize \color{gray} $\pm$ 5.8}\, {\scriptsize \textcolor[HTML]{7D2AD1}{$\uparrow$1.0x}} & 84.9 {\scriptsize \color{gray} $\pm$ 9.7}\, {\scriptsize \textcolor[HTML]{7D2AD1}{$\uparrow$1.2x}} & 10.8 {\scriptsize \color{gray} $\pm$ 0.3}\, {\scriptsize \textcolor[HTML]{7D2AD1}{$\uparrow$1.1x}} & 83.6 {\scriptsize \color{gray} $\pm$ 3.0}\, {\scriptsize \textcolor[HTML]{7D2AD1}{$\uparrow$1.3x}} & 26.7 {\scriptsize \color{gray} $\pm$ 0.0}\, {\scriptsize \textcolor[HTML]{7D2AD1}{$\uparrow$1.2x}} & 76.4 {\scriptsize \color{gray} $\pm$ 15.6}\, {\scriptsize \textcolor[HTML]{7D2AD1}{$\uparrow$1.2x}} & 55.6\, {\scriptsize \textcolor[HTML]{7D2AD1}{$\uparrow$1.2x}} \\
			AI Researcher~\citep{Tang2025AIResearcherAS}& 61.2 {\scriptsize \color{gray} $\pm$ 5.7}\, {\scriptsize \textcolor[HTML]{7D2AD1}{$\uparrow$1.2x}} & 93.6 {\scriptsize \color{gray} $\pm$ 2.5}\, {\scriptsize \textcolor[HTML]{7D2AD1}{$\uparrow$1.3x}} & 11.8 {\scriptsize \color{gray} $\pm$ 0.1}\, {\scriptsize \textcolor[HTML]{7D2AD1}{$\uparrow$1.2x}} & 89.3 {\scriptsize \color{gray} $\pm$ 3.4}\, {\scriptsize \textcolor[HTML]{7D2AD1}{$\uparrow$1.3x}} & 23.4 {\scriptsize \color{gray} $\pm$ 2.5}\, {\scriptsize \textcolor[HTML]{7D2AD1}{$\uparrow$1.1x}} & 73.6 {\scriptsize \color{gray} $\pm$ 8.3}\, {\scriptsize \textcolor[HTML]{7D2AD1}{$\uparrow$1.2x}} & 58.8\, {\scriptsize \textcolor[HTML]{7D2AD1}{$\uparrow$1.2x}} \\
			\rowcolor{rowcolor}
			EvoScientist~\citep{Lyu2026EvoScientistTM}  & 53.7 {\scriptsize \color{gray} $\pm$ 5.6}\, {\scriptsize \textcolor[HTML]{7D2AD1}{$\uparrow$1.1x}} & 86.8 {\scriptsize \color{gray} $\pm$ 6.6}\, {\scriptsize \textcolor[HTML]{7D2AD1}{$\uparrow$1.2x}} & \underline{10.2} {\scriptsize \color{gray} $\pm$ 0.2} & 78.5 {\scriptsize \color{gray} $\pm$ 6.0}\, {\scriptsize \textcolor[HTML]{7D2AD1}{$\uparrow$1.2x}} & \underline{21.5} {\scriptsize \color{gray} $\pm$ 2.9} & 72.6 {\scriptsize \color{gray} $\pm$ 11.7}\, {\scriptsize \textcolor[HTML]{7D2AD1}{$\uparrow$1.2x}} & 53.9\, {\scriptsize \textcolor[HTML]{7D2AD1}{$\uparrow$1.1x}} \\
			InternAgent 1.5~\citep{Feng2026InternAgent15AU} & 61.0 {\scriptsize \color{gray} $\pm$ 6.0}\, {\scriptsize \textcolor[HTML]{7D2AD1}{$\uparrow$1.2x}} & 91.8 {\scriptsize \color{gray} $\pm$ 3.3}\, {\scriptsize \textcolor[HTML]{7D2AD1}{$\uparrow$1.3x}} & 11.4 {\scriptsize \color{gray} $\pm$ 0.1}\, {\scriptsize \textcolor[HTML]{7D2AD1}{$\uparrow$1.1x}} & 91.8 {\scriptsize \color{gray} $\pm$ 2.2}\, {\scriptsize \textcolor[HTML]{7D2AD1}{$\uparrow$1.4x}} & 24.1 {\scriptsize \color{gray} $\pm$ 2.1}\, {\scriptsize \textcolor[HTML]{7D2AD1}{$\uparrow$1.1x}} & 74.1 {\scriptsize \color{gray} $\pm$ 10.7}\, {\scriptsize \textcolor[HTML]{7D2AD1}{$\uparrow$1.2x}} & 59.0\, {\scriptsize \textcolor[HTML]{7D2AD1}{$\uparrow$1.2x}} \\
			\rowcolor{rowcolor}
			PiFlow~\citep{Pu2025PiFlowPS}               & 52.3 {\scriptsize \color{gray} $\pm$ 14.6}\, {\scriptsize \textcolor[HTML]{7D2AD1}{$\uparrow$1.0x}} & 76.3 {\scriptsize \color{gray} $\pm$ 5.9}\, {\scriptsize \textcolor[HTML]{7D2AD1}{$\uparrow$1.0x}} & 15.0 {\scriptsize \color{gray} $\pm$ 7.6}\, {\scriptsize \textcolor[HTML]{7D2AD1}{$\uparrow$1.5x}} & 77.6 {\scriptsize \color{gray} $\pm$ 2.9}\, {\scriptsize \textcolor[HTML]{7D2AD1}{$\uparrow$1.2x}} & 22.8 {\scriptsize \color{gray} $\pm$ 1.7}\, {\scriptsize \textcolor[HTML]{7D2AD1}{$\uparrow$1.1x}} & 76.3 {\scriptsize \color{gray} $\pm$ 8.2}\, {\scriptsize \textcolor[HTML]{7D2AD1}{$\uparrow$1.2x}} & 53.4\, {\scriptsize \textcolor[HTML]{7D2AD1}{$\uparrow$1.1x}} \\
			PiEvo~\citep{Pu2026PrincipleEvolvableSD}    & 54.3 {\scriptsize \color{gray} $\pm$ 1.7}\, {\scriptsize \textcolor[HTML]{7D2AD1}{$\uparrow$1.1x}} & 86.2 {\scriptsize \color{gray} $\pm$ 1.4}\, {\scriptsize \textcolor[HTML]{7D2AD1}{$\uparrow$1.2x}} & 10.4 {\scriptsize \color{gray} $\pm$ 0.1}\, {\scriptsize \textcolor[HTML]{7D2AD1}{$\uparrow$1.0x}} & 91.5 {\scriptsize \color{gray} $\pm$ 1.3}\, {\scriptsize \textcolor[HTML]{7D2AD1}{$\uparrow$1.4x}} & 23.8 {\scriptsize \color{gray} $\pm$ 0.0}\, {\scriptsize \textcolor[HTML]{7D2AD1}{$\uparrow$1.1x}} & 79.4 {\scriptsize \color{gray} $\pm$ 5.8}\, {\scriptsize \textcolor[HTML]{7D2AD1}{$\uparrow$1.3x}} & 57.6\, {\scriptsize \textcolor[HTML]{7D2AD1}{$\uparrow$1.2x}} \\
			\rowcolor{rowcolor}
			\textbf{\method} (ours)                     & \textbf{71.3 {\scriptsize \color{gray} $\pm$ 19.1}}\, {\scriptsize \textcolor[HTML]{7D2AD1}{$\uparrow$1.4x}} & \textbf{93.9 {\scriptsize \color{gray} $\pm$ 0.7}}\, {\scriptsize \textcolor[HTML]{7D2AD1}{$\uparrow$1.3x}} & \textbf{15.3 {\scriptsize \color{gray} $\pm$ 8.3}}\, {\scriptsize \textcolor[HTML]{7D2AD1}{$\uparrow$1.5x}} & \textbf{93.0 {\scriptsize \color{gray} $\pm$ 0.0}}\, {\scriptsize \textcolor[HTML]{7D2AD1}{$\uparrow$1.4x}} & \textbf{27.7 {\scriptsize \color{gray} $\pm$ 1.7}}\, {\scriptsize \textcolor[HTML]{7D2AD1}{$\uparrow$1.3x}} & \textbf{82.1 {\scriptsize \color{gray} $\pm$ 3.0}}\, {\scriptsize \textcolor[HTML]{7D2AD1}{$\uparrow$1.3x}} & \textbf{63.9}\, {\scriptsize \textcolor[HTML]{7D2AD1}{$\uparrow$1.4x}} \\
			\bottomrule[1.5pt]
		\end{tabular}
	}
	\vspace{-7pt}
\end{table*}

\begin{figure*}[t]
	\centering
    \vspace{-0.18cm}
	\includegraphics[width=\textwidth]{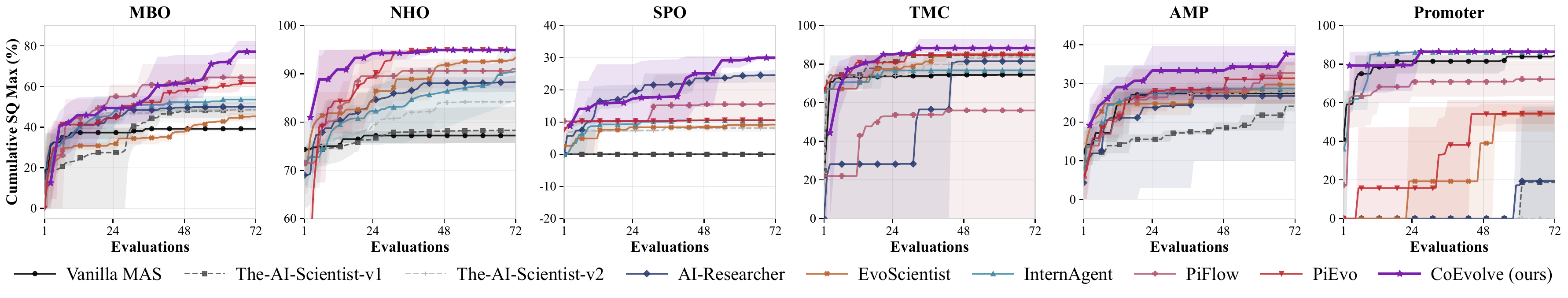}\vspace*{-0.3cm}
	\caption{\textbf{Running-best SQ w/ Gemma-4-31B-IT.} We plot the cumulative best SQ over evaluations for each task per Table~\ref{tab:main_compare_8tasks}. \method attains the highest final running best on five of the six tasks and ties PiEvo~\citep{Pu2026PrincipleEvolvableSD} on NHO ($94.9\%$). }
	\label{fig:main_anytime_evals}
    \vspace{-12pt}
\end{figure*}

\begin{table*}[t]
	\centering
	\caption{\textbf{Exploration (APD $\uparrow$) and exploitation (AUOC $\uparrow$) comparison with baselines on 6 benchmarks.}
        Both over the same three seeds as Table~\ref{tab:main_compare_8tasks}.
        \textbf{Bold} marks the best mean per column within each backbone and \underline{underline} the second-best; the colored ratio after each Average highlights \textcolor[HTML]{7D2AD1}{superior} ($\uparrow$) or \textcolor[HTML]{D14D4D}{inferior} ($\downarrow$) performance relative to Vanilla on the same backbone. \method attains the best exploration--exploitation balance.}
	\label{tab:apd_auoc}
    \vspace{-6pt}
	\resizebox{\textwidth}{!}{
		\begin{tabular}{l|cccccccccccc|cc}
			\toprule[1.5pt]
			\textbf{Model / Method} & \multicolumn{2}{c}{\textbf{MBO}} & \multicolumn{2}{c}{\textbf{NHO}} & \multicolumn{2}{c}{\textbf{SPO}} & \multicolumn{2}{c}{\textbf{TMC}} & \multicolumn{2}{c}{\textbf{AMP}} & \multicolumn{2}{c}{\textbf{Promoter}} & \multicolumn{2}{c}{\textbf{Average}} \\
			 & \textbf{APD} $\uparrow$ & \textbf{AUOC} $\uparrow$ & \textbf{APD} $\uparrow$ & \textbf{AUOC} $\uparrow$ & \textbf{APD} $\uparrow$ & \textbf{AUOC} $\uparrow$ & \textbf{APD} $\uparrow$ & \textbf{AUOC} $\uparrow$ & \textbf{APD} $\uparrow$ & \textbf{AUOC} $\uparrow$ & \textbf{APD} $\uparrow$ & \textbf{AUOC} $\uparrow$ & \textbf{Avg APD} $\uparrow$ & \textbf{Avg AUOC} $\uparrow$ \\
			\midrule
			\multicolumn{15}{l}{\textbf{Gemma-4-31B-IT}} \\
			\midrule
			\rowcolor{rowcolor}
			Vanilla MAS                                 & \textbf{79.1 {\scriptsize \color{gray} $\pm$ 1.1}} & 37.5 {\scriptsize \color{gray} $\pm$ 0.9} & 24.0 {\scriptsize \color{gray} $\pm$ 1.1} & 76.5 {\scriptsize \color{gray} $\pm$ 1.0} & 0.0 {\scriptsize \color{gray} $\pm$ 0.0} & 0.0 {\scriptsize \color{gray} $\pm$ 0.0} & 29.8 {\scriptsize \color{gray} $\pm$ 4.8} & 73.7 {\scriptsize \color{gray} $\pm$ 0.5} & 76.1 {\scriptsize \color{gray} $\pm$ 0.9} & 25.7 {\scriptsize \color{gray} $\pm$ 2.6} & 36.9 {\scriptsize \color{gray} $\pm$ 1.1} & 80.2 {\scriptsize \color{gray} $\pm$ 4.4} & 41.0 & 48.9 \\
			The AI Scientist v1~\citep{Lu2024TheAS}     & 21.7 {\scriptsize \color{gray} $\pm$ 17.0} & 38.2 {\scriptsize \color{gray} $\pm$ 3.4} & 5.9 {\scriptsize \color{gray} $\pm$ 3.4} & 76.8 {\scriptsize \color{gray} $\pm$ 3.6} & 0.0 {\scriptsize \color{gray} $\pm$ 0.0} & 0.0 {\scriptsize \color{gray} $\pm$ 0.0} & 53.1 {\scriptsize \color{gray} $\pm$ 1.1} & \underline{81.5} {\scriptsize \color{gray} $\pm$ 1.0} & 58.5 {\scriptsize \color{gray} $\pm$ 3.0} & 17.2 {\scriptsize \color{gray} $\pm$ 1.1} & 0.0 {\scriptsize \color{gray} $\pm$ 0.0} & 18.7 {\scriptsize \color{gray} $\pm$ 32.4} & 23.2\, {\scriptsize \textcolor[HTML]{D14D4D}{$\downarrow$0.6x}} & 38.7\, {\scriptsize \textcolor[HTML]{D14D4D}{$\downarrow$0.8x}} \\
			\rowcolor{rowcolor}
			The AI Scientist v2~\citep{Yamada2025TheAS} & 34.7 {\scriptsize \color{gray} $\pm$ 12.5} & 43.6 {\scriptsize \color{gray} $\pm$ 3.5} & 12.9 {\scriptsize \color{gray} $\pm$ 2.2} & 80.5 {\scriptsize \color{gray} $\pm$ 3.0} & 4.6 {\scriptsize \color{gray} $\pm$ 2.9} & 7.7 {\scriptsize \color{gray} $\pm$ 1.0} & 47.4 {\scriptsize \color{gray} $\pm$ 4.2} & 78.5 {\scriptsize \color{gray} $\pm$ 0.7} & 34.3 {\scriptsize \color{gray} $\pm$ 6.2} & 25.7 {\scriptsize \color{gray} $\pm$ 3.5} & 0.0 {\scriptsize \color{gray} $\pm$ 0.0} & 0.0 {\scriptsize \color{gray} $\pm$ 0.0} & 22.3\, {\scriptsize \textcolor[HTML]{D14D4D}{$\downarrow$0.5x}} & 39.3\, {\scriptsize \textcolor[HTML]{D14D4D}{$\downarrow$0.8x}} \\
			AI Researcher~\citep{Tang2025AIResearcherAS}& 46.5 {\scriptsize \color{gray} $\pm$ 9.0} & 46.1 {\scriptsize \color{gray} $\pm$ 2.1} & 16.6 {\scriptsize \color{gray} $\pm$ 4.8} & 84.5 {\scriptsize \color{gray} $\pm$ 1.4} & 4.7 {\scriptsize \color{gray} $\pm$ 0.1} & \textbf{21.2 {\scriptsize \color{gray} $\pm$ 1.9}} & 52.3 {\scriptsize \color{gray} $\pm$ 5.3} & 81.1 {\scriptsize \color{gray} $\pm$ 5.9} & 73.1 {\scriptsize \color{gray} $\pm$ 8.7} & 23.5 {\scriptsize \color{gray} $\pm$ 14.8} & 18.3 {\scriptsize \color{gray} $\pm$ 31.7} & 18.3 {\scriptsize \color{gray} $\pm$ 31.7} & 35.3\, {\scriptsize \textcolor[HTML]{D14D4D}{$\downarrow$0.9x}} & 45.8\, {\scriptsize \textcolor[HTML]{D14D4D}{$\downarrow$0.9x}} \\
			\rowcolor{rowcolor}
			EvoScientist~\citep{Lyu2026EvoScientistTM}  & 57.0 {\scriptsize \color{gray} $\pm$ 11.4} & 35.7 {\scriptsize \color{gray} $\pm$ 1.7} & 23.3 {\scriptsize \color{gray} $\pm$ 1.5} & 87.3 {\scriptsize \color{gray} $\pm$ 2.0} & 8.3 {\scriptsize \color{gray} $\pm$ 0.9} & 8.3 {\scriptsize \color{gray} $\pm$ 0.8} & 61.8 {\scriptsize \color{gray} $\pm$ 0.7} & 79.5 {\scriptsize \color{gray} $\pm$ 0.5} & 60.7 {\scriptsize \color{gray} $\pm$ 2.2} & 25.8 {\scriptsize \color{gray} $\pm$ 2.4} & 0.0 {\scriptsize \color{gray} $\pm$ 0.0} & 54.0 {\scriptsize \color{gray} $\pm$ 7.8} & 35.2\, {\scriptsize \textcolor[HTML]{D14D4D}{$\downarrow$0.9x}} & 48.4\, {\scriptsize \textcolor[HTML]{D14D4D}{$\downarrow$1.0x}} \\
			InternAgent 1.5~\citep{Feng2026InternAgent15AU} & 55.3 {\scriptsize \color{gray} $\pm$ 5.0} & 47.3 {\scriptsize \color{gray} $\pm$ 5.0} & 14.3 {\scriptsize \color{gray} $\pm$ 3.4} & 83.3 {\scriptsize \color{gray} $\pm$ 4.6} & 4.7 {\scriptsize \color{gray} $\pm$ 4.9} & 9.9 {\scriptsize \color{gray} $\pm$ 0.6} & 51.5 {\scriptsize \color{gray} $\pm$ 2.3} & 73.4 {\scriptsize \color{gray} $\pm$ 2.4} & 51.8 {\scriptsize \color{gray} $\pm$ 6.3} & 26.3 {\scriptsize \color{gray} $\pm$ 1.4} & 30.6 {\scriptsize \color{gray} $\pm$ 1.2} & \textbf{85.4 {\scriptsize \color{gray} $\pm$ 0.5}} & 34.7\, {\scriptsize \textcolor[HTML]{D14D4D}{$\downarrow$0.8x}} & \underline{54.3}\, {\scriptsize \textcolor[HTML]{7D2AD1}{$\uparrow$1.1x}} \\
			\rowcolor{rowcolor}
			PiFlow~\citep{Pu2025PiFlowPS}               & \underline{78.9} {\scriptsize \color{gray} $\pm$ 3.6} & \textbf{56.2 {\scriptsize \color{gray} $\pm$ 5.2}} & \underline{34.3} {\scriptsize \color{gray} $\pm$ 5.7} & \underline{87.5} {\scriptsize \color{gray} $\pm$ 3.1} & \textbf{15.4 {\scriptsize \color{gray} $\pm$ 0.7}} & 13.3 {\scriptsize \color{gray} $\pm$ 4.7} & 45.0 {\scriptsize \color{gray} $\pm$ 16.0} & 52.0 {\scriptsize \color{gray} $\pm$ 45.2} & \textbf{78.1 {\scriptsize \color{gray} $\pm$ 1.3}} & 26.1 {\scriptsize \color{gray} $\pm$ 0.6} & \underline{42.9} {\scriptsize \color{gray} $\pm$ 2.2} & 64.9 {\scriptsize \color{gray} $\pm$ 4.7} & 49.1\, {\scriptsize \textcolor[HTML]{7D2AD1}{$\uparrow$1.2x}} & 50.0\, {\scriptsize \textcolor[HTML]{7D2AD1}{$\uparrow$1.0x}} \\
			PiEvo~\citep{Pu2026PrincipleEvolvableSD}    & 77.3 {\scriptsize \color{gray} $\pm$ 5.1} & 50.1 {\scriptsize \color{gray} $\pm$ 2.7} & 27.3 {\scriptsize \color{gray} $\pm$ 3.8} & 86.7 {\scriptsize \color{gray} $\pm$ 6.3} & 6.1 {\scriptsize \color{gray} $\pm$ 0.2} & 10.3 {\scriptsize \color{gray} $\pm$ 0.1} & \underline{67.2} {\scriptsize \color{gray} $\pm$ 2.4} & \textbf{82.3 {\scriptsize \color{gray} $\pm$ 2.0}} & \underline{76.2} {\scriptsize \color{gray} $\pm$ 4.8} & \underline{26.8} {\scriptsize \color{gray} $\pm$ 3.1} & \textbf{46.9 {\scriptsize \color{gray} $\pm$ 7.5}} & 33.1 {\scriptsize \color{gray} $\pm$ 16.6} & \underline{50.2}\, {\scriptsize \textcolor[HTML]{7D2AD1}{$\uparrow$1.2x}} & 48.2\, {\scriptsize \textcolor[HTML]{D14D4D}{$\downarrow$1.0x}} \\
			\rowcolor{rowcolor}
			\textbf{\method} (ours)                     & 73.9 {\scriptsize \color{gray} $\pm$ 7.6} & \underline{54.7} {\scriptsize \color{gray} $\pm$ 6.5} & \textbf{39.1 {\scriptsize \color{gray} $\pm$ 0.1}} & \textbf{92.7 {\scriptsize \color{gray} $\pm$ 1.7}} & \underline{13.3} {\scriptsize \color{gray} $\pm$ 2.2} & \underline{20.7} {\scriptsize \color{gray} $\pm$ 6.9} & \textbf{68.1 {\scriptsize \color{gray} $\pm$ 5.0}} & 80.0 {\scriptsize \color{gray} $\pm$ 10.6} & 75.6 {\scriptsize \color{gray} $\pm$ 1.7} & \textbf{31.7 {\scriptsize \color{gray} $\pm$ 4.9}} & 42.2 {\scriptsize \color{gray} $\pm$ 9.5} & \underline{83.6} {\scriptsize \color{gray} $\pm$ 4.6} & \textbf{52.0}\, {\scriptsize \textcolor[HTML]{7D2AD1}{$\uparrow$1.3x}} & \textbf{60.6}\, {\scriptsize \textcolor[HTML]{7D2AD1}{$\uparrow$1.2x}} \\
			\midrule
			\multicolumn{15}{l}{\textbf{GPT-5.6-Terra}} \\
			\midrule
			\rowcolor{rowcolor}
			Vanilla MAS                                 & 57.4 {\scriptsize \color{gray} $\pm$ 6.0} & 47.7 {\scriptsize \color{gray} $\pm$ 1.8} & 22.0 {\scriptsize \color{gray} $\pm$ 1.9} & 70.7 {\scriptsize \color{gray} $\pm$ 4.8} & 0.0 {\scriptsize \color{gray} $\pm$ 0.0} & 0.0 {\scriptsize \color{gray} $\pm$ 0.0} & 49.2 {\scriptsize \color{gray} $\pm$ 2.4} & 65.0 {\scriptsize \color{gray} $\pm$ 1.4} & 83.8 {\scriptsize \color{gray} $\pm$ 0.2} & 20.4 {\scriptsize \color{gray} $\pm$ 1.5} & 33.6 {\scriptsize \color{gray} $\pm$ 0.6} & 63.2 {\scriptsize \color{gray} $\pm$ 13.8} & 41.0 & 44.5 \\
			The AI Scientist v1~\citep{Lu2024TheAS}     & 6.5 {\scriptsize \color{gray} $\pm$ 5.6} & \underline{54.6} {\scriptsize \color{gray} $\pm$ 4.5} & 13.7 {\scriptsize \color{gray} $\pm$ 1.4} & 80.8 {\scriptsize \color{gray} $\pm$ 12.6} & 0.0 {\scriptsize \color{gray} $\pm$ 0.0} & 0.0 {\scriptsize \color{gray} $\pm$ 0.0} & 56.3 {\scriptsize \color{gray} $\pm$ 2.3} & 75.9 {\scriptsize \color{gray} $\pm$ 6.5} & 38.3 {\scriptsize \color{gray} $\pm$ 11.0} & 20.3 {\scriptsize \color{gray} $\pm$ 1.4} & 19.1 {\scriptsize \color{gray} $\pm$ 4.2} & 53.9 {\scriptsize \color{gray} $\pm$ 9.0} & 22.3\, {\scriptsize \textcolor[HTML]{D14D4D}{$\downarrow$0.5x}} & 47.6\, {\scriptsize \textcolor[HTML]{7D2AD1}{$\uparrow$1.1x}} \\
			\rowcolor{rowcolor}
			The AI Scientist v2~\citep{Yamada2025TheAS} & 28.1 {\scriptsize \color{gray} $\pm$ 9.3} & 45.3 {\scriptsize \color{gray} $\pm$ 3.3} & 15.6 {\scriptsize \color{gray} $\pm$ 4.3} & 72.4 {\scriptsize \color{gray} $\pm$ 9.9} & 1.5 {\scriptsize \color{gray} $\pm$ 0.3} & 10.5 {\scriptsize \color{gray} $\pm$ 0.1} & 57.3 {\scriptsize \color{gray} $\pm$ 2.3} & 75.3 {\scriptsize \color{gray} $\pm$ 2.6} & 33.3 {\scriptsize \color{gray} $\pm$ 3.9} & \textbf{24.2 {\scriptsize \color{gray} $\pm$ 0.4}} & 29.6 {\scriptsize \color{gray} $\pm$ 2.7} & \underline{72.0} {\scriptsize \color{gray} $\pm$ 15.3} & 27.6\, {\scriptsize \textcolor[HTML]{D14D4D}{$\downarrow$0.7x}} & 49.9\, {\scriptsize \textcolor[HTML]{7D2AD1}{$\uparrow$1.1x}} \\
			AI Researcher~\citep{Tang2025AIResearcherAS}& 74.8 {\scriptsize \color{gray} $\pm$ 1.6} & 44.6 {\scriptsize \color{gray} $\pm$ 6.9} & 23.1 {\scriptsize \color{gray} $\pm$ 5.3} & \underline{88.6} {\scriptsize \color{gray} $\pm$ 5.7} & 6.6 {\scriptsize \color{gray} $\pm$ 2.3} & 10.8 {\scriptsize \color{gray} $\pm$ 0.3} & 57.1 {\scriptsize \color{gray} $\pm$ 1.2} & 82.8 {\scriptsize \color{gray} $\pm$ 1.4} & \underline{87.3} {\scriptsize \color{gray} $\pm$ 1.6} & 20.1 {\scriptsize \color{gray} $\pm$ 3.0} & 33.7 {\scriptsize \color{gray} $\pm$ 1.4} & 63.1 {\scriptsize \color{gray} $\pm$ 5.0} & 47.1\, {\scriptsize \textcolor[HTML]{7D2AD1}{$\uparrow$1.1x}} & 51.7\, {\scriptsize \textcolor[HTML]{7D2AD1}{$\uparrow$1.2x}} \\
			\rowcolor{rowcolor}
			EvoScientist~\citep{Lyu2026EvoScientistTM}  & 36.9 {\scriptsize \color{gray} $\pm$ 7.7} & 46.8 {\scriptsize \color{gray} $\pm$ 2.7} & 28.4 {\scriptsize \color{gray} $\pm$ 4.8} & 73.6 {\scriptsize \color{gray} $\pm$ 4.8} & \underline{16.4} {\scriptsize \color{gray} $\pm$ 1.6} & 9.7 {\scriptsize \color{gray} $\pm$ 0.3} & 60.6 {\scriptsize \color{gray} $\pm$ 2.6} & 75.2 {\scriptsize \color{gray} $\pm$ 4.0} & 84.8 {\scriptsize \color{gray} $\pm$ 3.9} & 19.5 {\scriptsize \color{gray} $\pm$ 3.5} & 35.3 {\scriptsize \color{gray} $\pm$ 5.4} & 65.7 {\scriptsize \color{gray} $\pm$ 15.1} & 43.7\, {\scriptsize \textcolor[HTML]{7D2AD1}{$\uparrow$1.1x}} & 48.4\, {\scriptsize \textcolor[HTML]{7D2AD1}{$\uparrow$1.1x}} \\
			InternAgent 1.5~\citep{Feng2026InternAgent15AU} & 35.3 {\scriptsize \color{gray} $\pm$ 23.7} & \textbf{55.2 {\scriptsize \color{gray} $\pm$ 6.6}} & 26.4 {\scriptsize \color{gray} $\pm$ 11.9} & 85.5 {\scriptsize \color{gray} $\pm$ 7.2} & 1.0 {\scriptsize \color{gray} $\pm$ 0.2} & 10.8 {\scriptsize \color{gray} $\pm$ 0.1} & 60.6 {\scriptsize \color{gray} $\pm$ 0.6} & 80.3 {\scriptsize \color{gray} $\pm$ 1.9} & 58.1 {\scriptsize \color{gray} $\pm$ 7.2} & 22.3 {\scriptsize \color{gray} $\pm$ 1.7} & 29.7 {\scriptsize \color{gray} $\pm$ 4.1} & 70.3 {\scriptsize \color{gray} $\pm$ 11.9} & 35.2\, {\scriptsize \textcolor[HTML]{D14D4D}{$\downarrow$0.9x}} & \underline{54.1}\, {\scriptsize \textcolor[HTML]{7D2AD1}{$\uparrow$1.2x}} \\
			\rowcolor{rowcolor}
			PiFlow~\citep{Pu2025PiFlowPS}               & \underline{77.4} {\scriptsize \color{gray} $\pm$ 7.5} & 45.2 {\scriptsize \color{gray} $\pm$ 10.6} & \underline{48.5} {\scriptsize \color{gray} $\pm$ 2.3} & 64.4 {\scriptsize \color{gray} $\pm$ 9.3} & \textbf{16.5 {\scriptsize \color{gray} $\pm$ 0.2}} & \textbf{12.3 {\scriptsize \color{gray} $\pm$ 4.4}} & \textbf{79.4 {\scriptsize \color{gray} $\pm$ 3.4}} & 74.5 {\scriptsize \color{gray} $\pm$ 0.7} & 86.9 {\scriptsize \color{gray} $\pm$ 1.4} & 19.2 {\scriptsize \color{gray} $\pm$ 1.8} & 37.0 {\scriptsize \color{gray} $\pm$ 1.1} & 68.5 {\scriptsize \color{gray} $\pm$ 4.3} & \underline{57.6}\, {\scriptsize \textcolor[HTML]{7D2AD1}{$\uparrow$1.4x}} & 47.3\, {\scriptsize \textcolor[HTML]{7D2AD1}{$\uparrow$1.1x}} \\
			PiEvo~\citep{Pu2026PrincipleEvolvableSD}    & 70.1 {\scriptsize \color{gray} $\pm$ 11.4} & 41.8 {\scriptsize \color{gray} $\pm$ 5.7} & 30.3 {\scriptsize \color{gray} $\pm$ 3.0} & 76.1 {\scriptsize \color{gray} $\pm$ 4.2} & 11.6 {\scriptsize \color{gray} $\pm$ 1.5} & 9.3 {\scriptsize \color{gray} $\pm$ 0.9} & 68.0 {\scriptsize \color{gray} $\pm$ 2.4} & \textbf{89.6 {\scriptsize \color{gray} $\pm$ 2.8}} & 84.8 {\scriptsize \color{gray} $\pm$ 1.0} & 22.0 {\scriptsize \color{gray} $\pm$ 0.9} & \underline{40.5} {\scriptsize \color{gray} $\pm$ 1.9} & \textbf{73.7 {\scriptsize \color{gray} $\pm$ 3.5}} & 50.9\, {\scriptsize \textcolor[HTML]{7D2AD1}{$\uparrow$1.2x}} & 52.1\, {\scriptsize \textcolor[HTML]{7D2AD1}{$\uparrow$1.2x}} \\
			\rowcolor{rowcolor}
			\textbf{\method} (ours)                     & \textbf{79.8 {\scriptsize \color{gray} $\pm$ 4.4}} & 45.5 {\scriptsize \color{gray} $\pm$ 8.8} & \textbf{54.7 {\scriptsize \color{gray} $\pm$ 1.6}} & \textbf{88.8 {\scriptsize \color{gray} $\pm$ 5.9}} & 9.4 {\scriptsize \color{gray} $\pm$ 6.7} & \underline{11.5} {\scriptsize \color{gray} $\pm$ 2.5} & \underline{75.5} {\scriptsize \color{gray} $\pm$ 0.7} & \underline{87.3} {\scriptsize \color{gray} $\pm$ 3.8} & \textbf{87.5 {\scriptsize \color{gray} $\pm$ 3.1}} & \underline{23.6} {\scriptsize \color{gray} $\pm$ 4.7} & \textbf{41.9 {\scriptsize \color{gray} $\pm$ 1.0}} & 70.8 {\scriptsize \color{gray} $\pm$ 7.6} & \textbf{58.1}\, {\scriptsize \textcolor[HTML]{7D2AD1}{$\uparrow$1.4x}} & \textbf{54.6}\, {\scriptsize \textcolor[HTML]{7D2AD1}{$\uparrow$1.2x}} \\
			\bottomrule[1.5pt]
		\end{tabular}
	}
	\vspace{-7pt}
\end{table*}

\subsection{Performance Analysis}
\label{subsec:performance_analysis}

Table~\ref{tab:main_compare_8tasks} compares \method against eight baselines on six benchmarks under two backbones, Figure~\ref{fig:main_anytime_evals} traces the running-best solution quality (SQ) over the evaluation sequence, and Table~\ref{tab:apd_auoc} profiles exploration (APD) against exploitation (AUOC). Three observations follow from these tables:

\noindent \textbf{Obs.\ding{202}: \method attains the best average SQ on these six benchmarks.}
\method attains the highest average SQ on both backbones, $63.9\% \sim 69.1\%$. This corresponds to an $8\% \sim 20\%$ relative gain over the strongest baseline, InternAgent-1.5~\citep{Feng2026InternAgent15AU}, and $11\% \sim 23\%$ over PiEvo~\citep{Pu2026PrincipleEvolvableSD} (paired per-seed gains positive on 15 of 18 task--seed pairs, $p{<}0.01$; Table~\ref{tab:main_compare_8tasks}), indicating that evidence transfer directs parallel exploration toward high-quality hypotheses.

\noindent \textbf{Obs.\ding{203}: \method's lead holds throughout evaluations on average.} 
In Figure~\ref{fig:main_anytime_evals}, \method's running-best curve attains the highest final value on five of six tasks. \method achieves the best average AUOC on both backbones (Table~\ref{tab:apd_auoc}). Strong candidates therefore arrive early and the advantage persists, which is what \cref{subsec:efficiency_analysis} prices in wall-clock time and tokens.

\noindent \textbf{Obs.\ding{204}: \method is the unique Pareto-optimal method.} 
\method attains the best average APD on both backbones together with the best average AUOC (Table~\ref{tab:apd_auoc}). Each baseline, for example, InternAgent-1.5~\citep{Feng2026InternAgent15AU} holds only the second-best average AUOC and explores $33\%\sim39\%$ more narrowly, while PiFlow nearly matches \method's exploration under GPT-5.6-Terra yet falls $7.3$ AUOC points behind.
This is consistent with the role of coordination core (\cref{subsec:theoretical_framework}): accepted cross-branch imports keep diversity while each branch's running best keeps improving.

\subsection{Efficiency Analysis}
\label{subsec:efficiency_analysis}
\begin{figure*}[t]
	\centering
    \vspace{-0.2cm}
	\includegraphics[width=\textwidth]{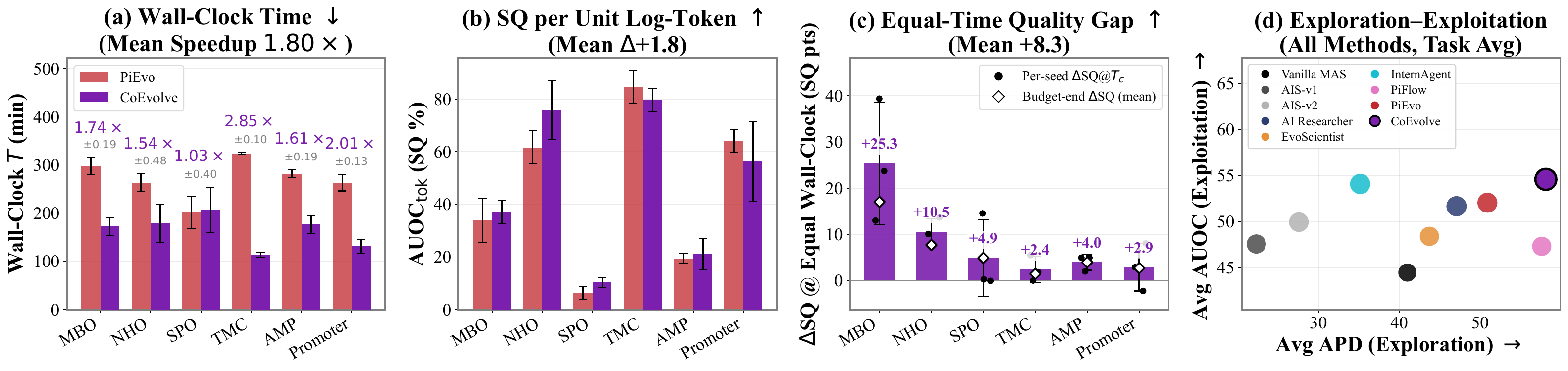}
    \vspace{-20pt}
	\caption{\textbf{Efficiency analysis on the GPT-5.6-Terra backbone (Table~\ref{tab:main_compare_8tasks} \& \ref{tab:apd_auoc})} 
    (a)~End-to-end wall-clock time per task, with the paired speedup $T_{\mathrm{PiEvo}}/T_{\methodm}$ annotated;  
    (b)~Token-normalized AUOC per task (AUOC$_{\mathrm{tok}}$); 
    (c)~Equal-time quality gap $\Delta\mathrm{SQ}@T_c$: PiEvo's running-best SQ read at \method's completion time $T_c$, subtracted from \method's final SQ; 
    (d)~\method is Pareto-optimal.
    }
	\label{fig:efficiency}
    \vspace{-0.38cm}
\end{figure*}

As \method is built upon PiEvo~\citep{Pu2026PrincipleEvolvableSD}, to measure wall-clock and token efficiency, we introduce three complementary measures (\textbf{Metric $1\sim3$}), then report two observations.

\noindent \textbf{Metric 1. Wall-clock speedup and equal-time quality gap.} We record the end-to-end wall-clock time $T$ under the same device environment and report the paired speedup $T_{\mathrm{PiEvo}}/T_{\methodm}$. We also compare the two systems after identical wall-clock investment: the \emph{equal-time quality gap} $\Delta\mathrm{SQ}@T_c = \mathrm{SQ}_{\methodm}(T_c) - \mathrm{SQ}_{\mathrm{PiEvo}}(T_c)$ reads PiEvo's running best at \method's completion time $T_c$, using the per-evaluation timestamps of both.

\noindent \textbf{Metric 2. Token conversion ratio $(\mathrm{SQ}_{\methodm}/\mathrm{SQ}_{\mathrm{PiEvo}})\,/\,(\mathrm{Tok}_{\methodm}/\mathrm{Tok}_{\mathrm{PiEvo}})$.} A value above $1$ means each \method token produces more final SQ than a PiEvo token. 

\noindent \textbf{Metric 3. Token-normalized area under the optimization curve (AUOC$_{\mathrm{tok}}$).} AUOC measures efficiency per \emph{evaluation}; AUOC$_{\mathrm{tok}}$ instead integrates the running-best curve over log-token-spend, normalized by the observed log-token window:
$
	\mathrm{AUOC}_{\mathrm{tok}} = \nicefrac{1}{\log \tau_{N} - \log \tau_{1}} \int_{\tau_{1}}^{\tau_{N}} \nicefrac{\max_{k \le \tau} \{ y_k \} - y_{\mathrm{lo}}}{y_{\mathrm{hi}} - y_{\mathrm{lo}}} \, \mathrm{d}\log\tau,
$
where $\tau$ denotes cumulative token spend. We approximate the integral by the trapezoidal rule on the observed evaluations.

\noindent \textbf{Obs.\ding{205}: Parallel advantage of \method costs a near-parity token premium.} 
Branch-summed, \method uses $1.06\times\sim1.62\times$ PiEvo's tokens per task.
In return, the conversion ratio ranges $0.89\times\sim1.07\times$ per task (mean $0.97\times$, above $1$ on MBO and NHO), and token-normalized AUOC (Figure~\ref{fig:efficiency}~(b)) favors \method on four of six tasks (mean $\Delta{+}1.8$). Together with Obs.\ding{206}: a $1.80\times$ wall-clock speedup (faster in 17 of 18 task--seed pairs, $p{<}10^{-3}$) at a $1.22\times$ token premium and a $0.97\times$ per-token rate means the parallel branches convert wall-clock into SQ.

\subsection{Case Study on Autonomous Research Harness}
\label{subsec:harness}

\noindent \textbf{Task configuration.} Per our goal in Section~\ref{sec:introduction} and following~\citet{fei2026autoresearcheval}, we port \method onto an autonomous research harness: a containerized coding agent must deliver an \emph{artifact} that a sealed evaluator scores against each task's published SOTA anchor. We sample five representative tasks from AutoresearchEval~\citep{fei2026autoresearcheval}: ALDE, D2D, and MolEdit from the \emph{exploiting} category, and Deconv and InvScat from the \emph{exploration} category (the exploiting category spans the free-oracle and recipe-saturated levels of the pre-registered explorativeness index; \cref{subsec:harness_tei}), and report anchor-relative improvement $\Delta$ under the same $72$-evaluation budget as Section~\ref{subsec:experiment_setup}. Task definitions, the substrate, the budget contract, and all controls are in Appendix~\ref{sec:harness_design}.

\begin{wraptable}{r}{0.52\textwidth}
	\centering
	\vspace{-8pt}
	\caption{\textbf{Performance comparison on five autoresearch tasks (anchor-relative improvement $\Delta\times100$, $\uparrow$).}
    Mean $\pm$ std over three seeds; \textbf{bold} = best per column. All five \method means are significantly above the anchor (one-sample $t$ against zero, $p{<}0.025$ per task; \cref{app:stats}).
}
	\label{tab:harness_compare}
	\resizebox{\linewidth}{!}{
		\begin{tabular}{l|ccccc|l}
			\toprule[1.5pt]
			\textbf{Arm}                                  & \bfseries ALDE & \bfseries Deconv & \bfseries InvScat & \bfseries D2D & \bfseries MolEdit & \bfseries Average \\
			\midrule

		\rowcolor{rowcolor}
		Claude Code & +2.3 {\scriptsize \color{gray} $\pm$ 0.2} & +14.0 {\scriptsize \color{gray} $\pm$ 7.3} & $-$5.8 {\scriptsize \color{gray} $\pm$ 7.6} & +9.0 {\scriptsize \color{gray} $\pm$ 0.3} & +0.2 {\scriptsize \color{gray} $\pm$ 1.7} & +3.9 \\

		Codex & +2.2 {\scriptsize \color{gray} $\pm$ 1.0} & +14.3 {\scriptsize \color{gray} $\pm$ 5.4} & $-$5.9 {\scriptsize \color{gray} $\pm$ 6.7} & +11.2 {\scriptsize \color{gray} $\pm$ 0.9} & +1.2 {\scriptsize \color{gray} $\pm$ 1.9} & +4.6 \\

		\rowcolor{rowcolor}
		Arbor & +1.3 {\scriptsize \color{gray} $\pm$ 1.9} & +11.7 {\scriptsize \color{gray} $\pm$ 23.8} & $-$6.5 {\scriptsize \color{gray} $\pm$ 5.9} & +15.3 {\scriptsize \color{gray} $\pm$ 0.1} & +0.5 {\scriptsize \color{gray} $\pm$ 0.6} & +4.5 \\

		PiEvo & +2.4 {\scriptsize \color{gray} $\pm$ 2.0} & +15.4 {\scriptsize \color{gray} $\pm$ 28.6} & $-$5.3 {\scriptsize \color{gray} $\pm$ 7.9} & +15.4 {\scriptsize \color{gray} $\pm$ 0.1} & +1.3 {\scriptsize \color{gray} $\pm$ 1.6} & +5.8 \\

		\rowcolor{rowcolor}
		\textbf{\method} & \textbf{+6.6 {\scriptsize \color{gray} $\pm$ 0.7}} & \textbf{+45.4 {\scriptsize \color{gray} $\pm$ 1.4}} & \textbf{+11.9 {\scriptsize \color{gray} $\pm$ 3.7}} & \textbf{+15.5 {\scriptsize \color{gray} $\pm$ 0.0}} & \textbf{+4.1 {\scriptsize \color{gray} $\pm$ 0.6}} & \textbf{+16.7} \\
			\bottomrule[1.5pt]
		\end{tabular}
	}
	\vspace{-8pt}
\end{wraptable}

We compare three types of harness: (a) \emph{agent-only} harnesses (Claude Code~\citep{anthropic2026claudecode}, Codex~\citep{openai2026unrolling}, Arbor~\citep{jin2026arbor}) act autonomously once given the task, (b) PiEvo~\citep{Pu2026PrincipleEvolvableSD} as a strategic layer inside Claude Code, injecting guidance per turn, and (c) \method runs $K{=}3$ guided PiEvo branches in parallel. With the same LLM backbone of \texttt{DeepSeek-v4-Flash-0731}, our analysis yields three observations:

\noindent \textbf{Obs.\ding{207}: \method surpasses all SOTA anchors~\citep{fei2026autoresearcheval}.}
\method alone keeps all five means above the anchor (one-sample $t$, $p{<}0.025$ per task), lifting the cross-task average from PiEvo's $+5.8$ to $+16.7$ (see Table~\ref{tab:harness_compare}).

\begin{wrapfigure}{r}{0.62\textwidth}
	\centering
	\vspace{-6pt}
	\begin{subcaptionblock}[t]{0.49\linewidth}
		\includegraphics[width=\linewidth]{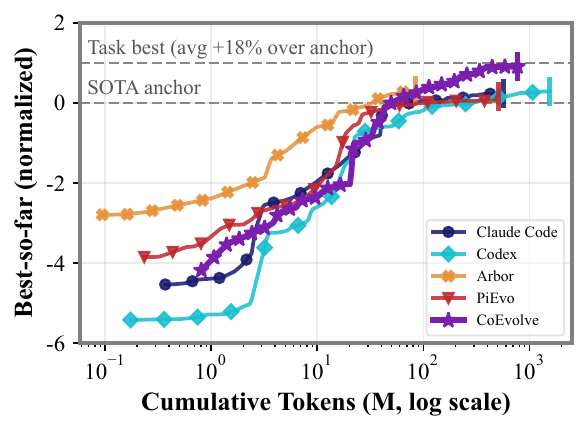}
		\label{fig:harness_anytime_tokens_merged}
	\end{subcaptionblock}\hfill
	\begin{subcaptionblock}[t]{0.49\linewidth}
		\includegraphics[width=\linewidth]{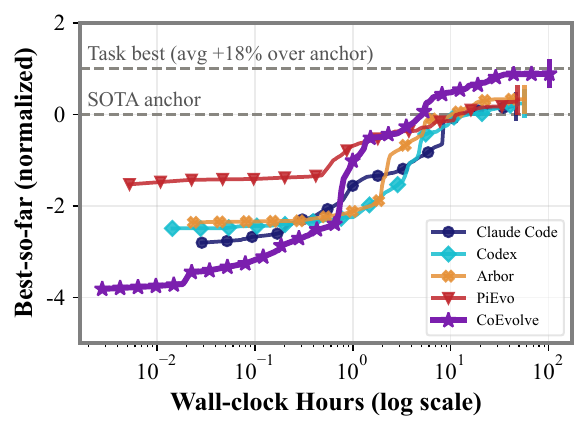}
		\label{fig:harness_anytime_wallclock_merged}
	\end{subcaptionblock}
	\vspace{-14pt}
	\caption{
     \method leads the token-spend curve from ${\sim}1$M tokens onward; the wall-clock view prices its parallel throughput.}
	\label{fig:harness_anytime_merged}
	\vspace{-8pt}
\end{wrapfigure}

The two margins decompose as follows (Table~\ref{tab:harness_compare}): stage guidance (i.e., PiEvo) adds only $+0.4$ percentage points, while the coordination core (i.e., \method) adds $+10.9$ on top; the worst \method case (mean minus std) also matches or exceeds the best case (mean plus std) of every other arm on every task.

\noindent \textbf{Obs.\ding{208}: \method converts breadth into top-tier quality.}
PiEvo leads mid-budget on InvScat and D2D (and edges out \method on MolEdit in full-budget AUOC; \cref{subsec:harness_trajectory}), yet \method gains complementary coverage at reduced single-line depth, and accepted imports enter the champion branch's posterior, at a $1.55\times$ mean billable-token spend (Figure~\ref{fig:harness_anytime_merged}, left; Table~\ref{tab:harness_tokens}) for a $1.31\times$ end-to-end wall-clock speedup (Table~\ref{tab:harness_ttt}).

\noindent \textbf{Obs.\ding{209}: The payoff is bounded by what parallel coverage can find.}
The runner-up margin reaches $+17.2\sim+30.0$ points on the open method-space tasks (Deconv, InvScat), against $+0.1\sim+4.2$ where a saturated recipe or a free oracle governs the score, matching the pre-registered explorativeness ordering (\cref{subsec:harness_tei}). \method gains most where branches observe what others miss (\cref{subsec:theoretical_framework}), as also discussed in \cref{subsec:ablation_study}.

\subsection{Ablation Study}
\label{subsec:ablation_study}

We ablate the collaboration layer at three levels: evidence sharing across branches, the transfer-safeguard modules, and the branch count and coordination hyperparameters. The sharing study spans all six tasks; the module and hyperparameter studies run on MBO under the protocol of Table~\ref{tab:main_compare_8tasks}.

\begin{wrapfigure}{r}{0.38\textwidth}
	\centering
	\vspace{-14pt}
	\includegraphics[width=\linewidth]{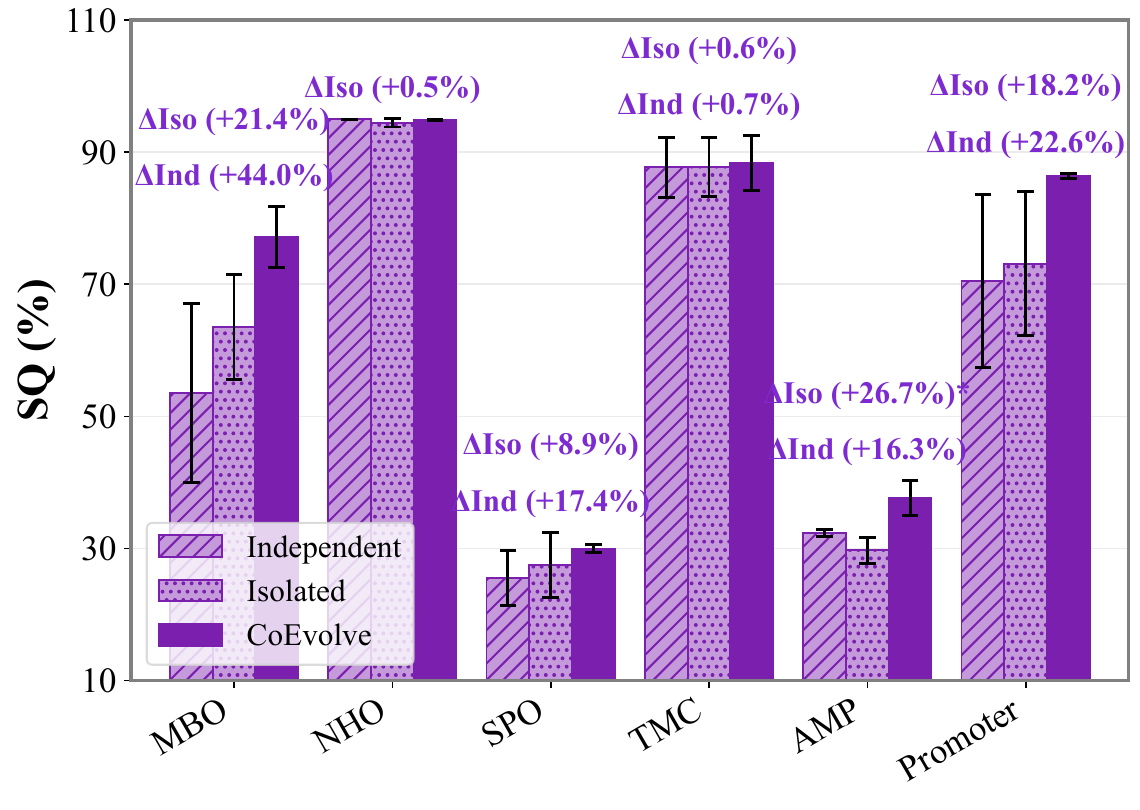}
	\vspace{-22pt}
	\caption{\textbf{Ablation of evidence transfer.} Per-task comparison from Independent (w/o coordination) through Isolated (w/o sharing) to \method.}
	\label{fig:ablation_sharing_arms}
    \vspace{-12pt}
\end{wrapfigure}

\noindent \textbf{Evidence sharing lifts the whole branch population over the Isolated setting.} Best-of-seed SQ of \method exceeds the coordination-only Isolated mode on all six tasks (Figure~\ref{fig:ablation_sharing_arms}), and removing coordination entirely (Independent) degrades MBO, SPO, and Promoter further. On these tasks the gain decomposes into coordination and sharing components. Moreover, on NHO, Independent matches \method ($95.0$ vs $94.9$) and coordination pays as convergence speed rather than as final quality ($1.5\times$ faster to the same ceiling; \cref{sec:sharing_diagnostics}). Worst-branch SQ (Table~\ref{tab:ablation_sharing_arms}) improves on five of six tasks (up to $+9.4$ on MBO), indicating that sharing lifts the entire branch population. The flow itself is real: $244$ accepted imports, $205$ net-unique after deduplication, with window-level, associational uplift of up to $+43.3$ SQ points (Table~\ref{tab:sharing_diagnostics}).

\begin{wrapfigure}{r}{0.38\textwidth}
	\centering
	\vspace{-10pt}
	\includegraphics[width=\linewidth]{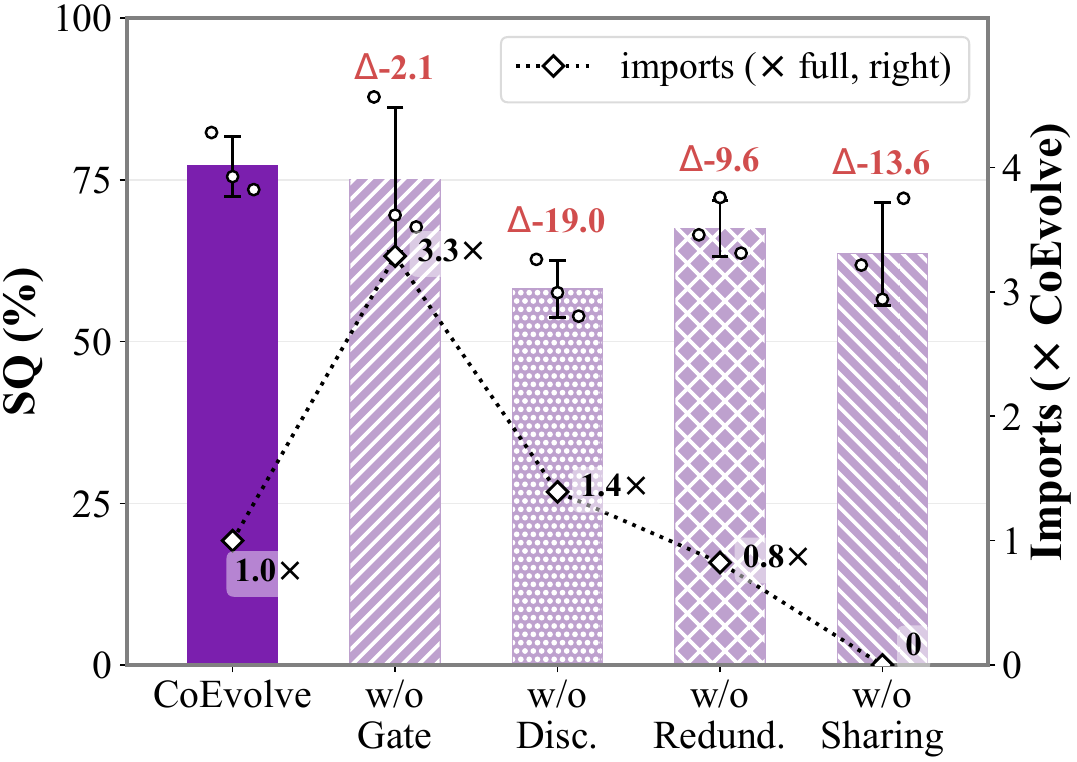}
	\vspace{-18pt}
	\caption{\textbf{Module-knockout ablation on MBO}. Accepted imports are normalized to the full system.}
    \vspace{-16pt}
	\label{fig:ablation_module_knockouts}
\end{wrapfigure}

\noindent \textbf{Discounting is load-bearing; the VoI gate is a tail-risk control.} Each knockout disables exactly one safeguard: import discounting ($\alpha{\equiv}1$), the safe-VoI gate ($\eta{\to}{-\infty}$, every routed import admitted), or the redundancy cap ($R_{\max}{=}1$). As shown in Figure~\ref{fig:ablation_module_knockouts} and Table~\ref{tab:ablation_module_knockouts}, removing discounting costs the most ($-19.0$ SQ), ahead of sharing itself ($-13.6$) and the redundancy cap ($-9.6$). Together with the shared mode's lead over Isolated on all six tasks (Figure~\ref{fig:ablation_sharing_arms}), this is the joint signature of \cref{thm:informal_guarantees}(a): calibrated sharing helps, and the same imports at full weight collapse. Removing the gate leaves the mean nearly intact ($-2.1$) but more than doubles the standard deviation ($4.6{\to}11.1$) while admitting over $3\times$ as many imports ($42{\to}137$), indicating that protection against rare harmful imports, exactly the tail-probability role \cref{thm:informal_guarantees}(b) and \cref{thm:negative_transfer} assign it.

\noindent \textbf{The collaboration gain grows with landscape width.} Varying the branch count at fixed total budget (Figure~\ref{fig:ksweep}) gives a task-dependent response: an inverted-U on MBO peaking at the default $K{=}3$ ($77.1$ SQ): more branches buy coverage at the price of per-branch depth, flat on AMP, and a rise into a plateau on TMC. The default $K{=}3$ is therefore not a universal optimum.

\noindent \textbf{The coordination defaults sit near the local optimum.} Varying one coordination hyperparameter at a time on MBO (Figure~\ref{fig:ablation_hparams}), attainment peaks at the default routing quota ($3$) and redundancy cap ($R_{\max}{=}0.3$, every off-value degrading by $8.3$--$16.4$ SQ), sits on a plateau in the import cost weight $\lambda{\in}[1,2]$, and shows no monotone response to the gate margin $\eta$, i.e., the single above-default point ($\eta{=}0.01$, $+4.3\pm18.5$) is within noise. Therefore, $\eta$ most strongly controls admission volume, whereas $R_{\max}$ barely moves import volume, indicating that benefit is content selectivity.

\section{Conclusion}
\label{sec:conclusion}
We present \method, a framework that formulates collaborative scientific discovery as evidence transfer among parallel principle-evolution branches: a coordination core routes observations as gated, discounted factors, keeping the branch-level posteriors separate.
\method attains a mean solution quality of $66.5\%$ against $57.0\%$ for the strong baseline PiEvo, with a $1.80\times$ wall-clock speedup, and is the only system above the published SOTA anchor on five representative auto-research tasks; removing the discount costs more than removing any other module, and the admission gate bounds rare harmful imports. These results make principle co-evolution a practical way for multiple AI Scientists to collaborate on one problem, with evidence transfer as the coupling mechanism.

\bibliography{sources/reference}
\bibliographystyle{configuration/iclr2026}
\nocite{*}

\newpage
\appendix

\section{Formal Assumptions and Guarantees}
\label{subsec:guarantees}

We state the assumptions used by the methodology before the corresponding results; complete proofs appear in Section~\ref{app:methodology_proofs}. Let $H_t^{\mathrm{sys}}$ be the complete system history after round $t$, let $\mathcal F_t=\sigma(H_t^{\mathrm{sys}})$, and define the information acquired by policy $\Pi$ as
\begin{equation}
\mathcal U_\Pi(T):=I_\Pi(P;H_T^{\mathrm{sys}}).
\label{eq:system_information}
\end{equation}
The four results below follow a common pattern: each states the assumptions it needs on the learning process, followed by the guarantee they buy. In order, they certify that parallel branches cover information faster (\cref{thm:parallel_coverage}), that discounted sharing does not break posterior concentration (\cref{thm:concentration}), that the VoI gate controls harmful imports in probability (\cref{thm:negative_transfer}), and that parallel search shortens the time to discovery (\cref{thm:discovery}).

\begin{assumption}[Finite support and positive prior]\label{ass:finite_positive}
The principle universe $\bar{\mathcal P}$ is finite and $\pi_0(P)>0$ for every $P\in\bar{\mathcal P}$.
\end{assumption}
\paragraph{Coverage.} The first guarantee asks that branches do not duplicate one another. Assumption~\ref{ass:complementarity} makes this precise through a redundancy factor $\underline\gamma_T$: if all branches observed the same evidence, the joint information would collapse to a single branch's share and $\underline\gamma_T$ would be near $1/K$; if their evidence were fully complementary, $\underline\gamma_T$ would approach $1$.

\begin{assumption}[Complementary information]\label{ass:complementarity}
On each active round, let $I_{j,t}:=I(P;Y_{j,t}\mid\mathcal F_{t-1},h_{j,t})$. For a deterministic $\underline\gamma_T\in[0,1]$,
\begin{equation}
\underbrace{I(P;Y_{1,t},\ldots,Y_{K,t}\mid\mathcal F_{t-1},h_{1:K,t})}_{\text{what the system learns jointly}}
\ge \underbrace{\underline\gamma_T}_{\text{complementarity}}\;\underbrace{\sum_{j=1}^K I_{j,t}}_{\text{sum of per-branch information}}.
\label{eq:gamma_complementarity}
\end{equation}
\end{assumption}
\begin{assumption}[Per-branch information rate]\label{ass:info_rate}
There are $\mu_j\ge0$ such that $I_{j,t}\ge\mu_j$ on active rounds. If $\mathcal A_T$ is the set of active rounds, then $|\mathcal A_T|\ge(1-\chi_T)T$ for a deterministic $\chi_T\in[0,1]$.
\end{assumption}
Assumption~\ref{ass:info_rate} adds two floors: every active round teaches branch $j$ at least $\mu_j$ nats, and at least a $(1-\chi_T)$ fraction of rounds are active. Together they yield the first guarantee.

\begin{theorem}[Parallel coverage]\label{thm:parallel_coverage}
Under Assumptions~\ref{ass:complementarity}--\ref{ass:info_rate}, before entropy saturation,
\[
\mathcal U_{\methodm}(T)\ge \underbrace{(1-\chi_T)T}_{\text{active rounds}}\;\underbrace{\underline\gamma_T\sum_{j=1}^K\mu_j}_{\text{complementary per-round rate}}.
\]
\end{theorem}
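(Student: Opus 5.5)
The plan is to expand $\mathcal U_{\methodm}(T)=I(P;H_T^{\mathrm{sys}})$ with the chain rule for mutual information, going round by round through the filtration $\mathcal F_t$, and then lower-bound each increment using Assumptions~\ref{ass:complementarity}--\ref{ass:info_rate}. Round $t$ adds the chosen hypotheses $h_{1:K,t}$, the outcomes $Y_{1:K,t}$, and the coordination-core bookkeeping (pooled records, routing decisions, discounts). The chain rule therefore gives
\[
I(P;H_T^{\mathrm{sys}})=\sum_{t=1}^T\Bigl[I(P;h_{1:K,t}\mid\mathcal F_{t-1})+I(P;Y_{1:K,t}\mid\mathcal F_{t-1},h_{1:K,t})+I(P;\text{core}_t\mid\mathcal F_{t-1},h_{1:K,t},Y_{1:K,t})\Bigr].
\]
Every term is nonnegative, so I keep only the middle one. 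The first term is in fact zero whenever hypothesis selection depends only on $\mathcal F_{t-1}$ and internal randomness independent of $P$. The third term is zero because the core is a deterministic function of logged statistics: Section~\ref{subsec:theoretical_framework} states that every quantity is computed from logged statistics and metadata. In any case, nonnegativity is all I need.

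Next I restrict the sum to active rounds $t\in\mathcal A_T$, which is allowed because the inactive-round terms are nonnegative. On each active round, Assumption~\ref{ass:complementarity} gives
\[
I(P;Y_{1:K,t}\mid\mathcal F_{t-1},h_{1:K,t})\ge\underline\gamma_T\sum_j I_{j,t},
\]
and Assumption~\ref{ass:info_rate} gives $I_{j,t}\ge\mu_j$. Each active round therefore contributes at least $\underline\gamma_T\sum_j\mu_j$. Summing over $|\mathcal A_T|\ge(1-\chi_T)T$ rounds yields the claim, since $\underline\gamma_T$, $\mu_j$ and $\chi_T$ are deterministic and so the count bound multiplies through cleanly.

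I expect the main obstacle to be a measure-theoretic subtlety: whether these quantities are conditional informations given a realized history or averaged over it. $I(P;H_T^{\mathrm{sys}})$ is an averaged quantity. If the assumptions are read pathwise, with $I_{j,t}$ conditional on the realized $\mathcal F_{t-1}$, then the chain rule sums the expectations $\mathbb E[\,\cdot\,]$ of the pointwise increments, and $\mathcal A_T$ is a random set. I would handle this with indicators. I write the bound as
\[
\sum_t\mathbb E\bigl[\mathbf 1\{t\in\mathcal A_T\}\,\underline\gamma_T\textstyle\sum_j\mu_j\bigr]=\underline\gamma_T\textstyle\sum_j\mu_j\,\mathbb E|\mathcal A_T|,
\]
and then use that $|\mathcal A_T|\ge(1-\chi_T)T$ holds almost surely. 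For this to be legitimate, activity at round $t$ must be $\mathcal F_{t-1}$-measurable (or at least determined by $(\mathcal F_{t-1},h_{1:K,t})$), so that the indicator can be pulled inside the conditional information term. I would state this as the implicit reading of ``active round.''

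The phrase ``before entropy saturation'' only ensures that the assumed per-round floors stay consistent with $I(P;H_T^{\mathrm{sys}})\le H(P)\le\log|\bar{\mathcal P}|$, a bound that is finite by Assumption~\ref{ass:finite_positive}. I would note that the derived inequality is meaningful only while the right-hand side does not exceed $H(P)$.
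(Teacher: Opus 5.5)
Your proposal is correct and follows essentially the same route as the paper: it telescopes the chain rule over rounds, keeps only the observation term by nonnegativity, applies complementarity and the rate floor on active rounds through an $\mathcal F_{t-1}$-measurable indicator, and closes with $\mathbb E|\mathcal A_T|\ge(1-\chi_T)T$. Your side claim that the core-bookkeeping term vanishes is unnecessary, and the paper does not assert it, since the logged increment $W_t$ can include verification outcomes; relying on nonnegativity alone, as you ultimately do, is the right move.
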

In words, the system accumulates mutual information at a rate proportional to the number of branches, discounted only by redundancy ($\underline\gamma_T$) and idle rounds ($\chi_T$): when branches explore complementary directions, $K$ parallel branches certify nearly $K$ times the information of one branch over the same wall-clock horizon.

\paragraph{Concentration.} Coverage alone does not say the system identifies the true principle; the second guarantee does. Assumption~\ref{ass:contrast} requires the accumulated log-likelihood contrast in favor of $P^\star$ to grow linearly in the number of effective samples $N^{\mathrm{eff}}_{j,T}$, where imported evidence enters with discounted (hence fractional) weights, up to a slack term $\mathfrak r_j$ that holds uniformly over time and over all wrong principles at once.

\begin{assumption}[Calibrated discounted contrast]\label{ass:contrast}
For each branch $j$ and $P\ne P^\star$, let $L_{j,T}(P)$ be the accumulated local and discounted-import log-likelihood ratio in favor of $P^\star$, and let $N^{\mathrm{eff}}_{j,T}$ be the sum of its predictable weights in $[0,1]$. There is a $\kappa_j(P)>0$ and a time-uniform radius $\mathfrak r_j(n,\delta)$ such that, with probability at least $1-\delta$, simultaneously over $T$ and $P\ne P^\star$,
\begin{equation}
L_{j,T}(P)\ge \underbrace{\kappa_j(P)N^{\mathrm{eff}}_{j,T}}_{\text{contrast accumulating at rate }\kappa_j(P)}
-\underbrace{\mathfrak r_j(N^{\mathrm{eff}}_{j,T},\delta)}_{\text{time-uniform confidence slack}}.
\label{eq:discounted_contrast_bound}
\end{equation}
We assume further that every likelihood ratio entering $L_{j,T}$ is finite on data generated under $P^\star$.
\end{assumption}

\begin{theorem}[Concentration under discounted imports]\label{thm:concentration}
Under Assumptions~\ref{ass:finite_positive} and~\ref{ass:contrast}, with probability at least $1-\delta$,
\[
p_{j,T}(P^\star)\ge 1-\underbrace{\frac{|\bar{\mathcal P}|-1}{\pi_0(P^\star)}}_{\text{prior scale}}\;\exp\{-\kappa_jN^{\mathrm{eff}}_{j,T}+\mathfrak r_j(N^{\mathrm{eff}}_{j,T},\delta)\},
\]
where $\kappa_j=\min_{P\ne P^\star}\kappa_j(P)$.
\end{theorem}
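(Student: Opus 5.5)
We work directly from the power-posterior form of Eq.~\eqref{eq:power_posterior}. For any $P\ne P^\star$, the normalizer $Z_{j,T}$ cancels in the ratio of posterior masses, giving
\[
\frac{p_{j,T}(P)}{p_{j,T}(P^\star)}=\frac{\pi_0(P)}{\pi_0(P^\star)}\exp\{-L_{j,T}(P)\}.
\]
Here $L_{j,T}(P)$ is exactly the accumulated log-likelihood ratio in favor of $P^\star$ named in Assumption~\ref{ass:contrast}. Local observations enter with weight $1$. Each accepted import $(i,h,y,\alpha)$ enters with weight $\alpha\,\log\bigl(q_{j\leftarrow i}(y\mid h,P^\star)/q_{j\leftarrow i}(y\mid h,P)\bigr)$, since the factor $q^\alpha$ becomes $\alpha$ times a log. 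Assumption~\ref{ass:finite_positive} makes $\pi_0(P^\star)>0$, so the ratio is well defined. The finiteness clause of Assumption~\ref{ass:contrast} ensures $p_{j,T}(P^\star)>0$ and that no $0/0$ or $\infty$ arises on data generated under $P^\star$.

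Next we intersect with the single good event of Assumption~\ref{ass:contrast}, which has probability at least $1-\delta$. On this event the bound $L_{j,T}(P)\ge\kappa_j(P)N^{\mathrm{eff}}_{j,T}-\mathfrak r_j(N^{\mathrm{eff}}_{j,T},\delta)$ holds simultaneously for all $T$ and all $P\ne P^\star$. Replacing $\kappa_j(P)$ by $\kappa_j=\min_{P\ne P^\star}\kappa_j(P)$ only weakens it, because $N^{\mathrm{eff}}_{j,T}\ge 0$. The minimum is positive and attained since $\bar{\mathcal P}$ is finite. Hence each ratio is at most $\frac{\pi_0(P)}{\pi_0(P^\star)}\exp\{-\kappa_jN^{\mathrm{eff}}_{j,T}+\mathfrak r_j\}$.

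We then sum over $P\ne P^\star$ and use $\sum_{P\ne P^\star}p_{j,T}(P)=1-p_{j,T}(P^\star)$:
\[
1-p_{j,T}(P^\star)=p_{j,T}(P^\star)\sum_{P\ne P^\star}\frac{p_{j,T}(P)}{p_{j,T}(P^\star)}\le \sum_{P\ne P^\star}\frac{\pi_0(P)}{\pi_0(P^\star)}e^{-\kappa_jN^{\mathrm{eff}}_{j,T}+\mathfrak r_j}.
\]
The inequality uses $p_{j,T}(P^\star)\le 1$. Bounding each $\pi_0(P)\le 1$ leaves at most $|\bar{\mathcal P}|-1$ terms, which gives the stated prefactor $(|\bar{\mathcal P}|-1)/\pi_0(P^\star)$. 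One could instead use the sharper factor $(1-\pi_0(P^\star))/\pi_0(P^\star)$, but we keep the paper's form. Rearranging yields the claim.

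Given Assumption~\ref{ass:contrast}, the algebra above is routine. The one step needing care is the claim that the probability-$(1-\delta)$ event is uniform over both $T$ and $P$, so that no union bound over $|\bar{\mathcal P}|$ or over time is paid; this is built into the assumption, and the proof should say so explicitly. A second point to state is why the discounted imports fit this template. The weights $\alpha_{j\leftarrow i,t}(h)$ are computed from history available before injection (Definition~\ref{def:evidence_objects}), so they are predictable. This is what lets them be counted in $N^{\mathrm{eff}}_{j,T}$, and it is why the result holds as a generalized-Bayes statement without requiring $q_{j\leftarrow i}$ to be the true law.
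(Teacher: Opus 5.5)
Your proposal is correct and follows the paper's proof essentially step for step. Both arguments use the posterior-odds identity with the $\alpha$-weighted log-contrast, the pathwise bound $1-p_{j,T}(P^\star)\le\sum_{P\ne P^\star}p_{j,T}(P)/p_{j,T}(P^\star)$, the time- and principle-uniform event of Assumption~\ref{ass:contrast} with $\kappa_j(P)\ge\kappa_j$, and the bound $\pi_0(P)\le 1$; the paper also notes the sharper constant $1-\pi_0(P^\star)$ and the stopping-time uniformity, as you do.
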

Since $\mathfrak r_j(n,\delta)$ grows sublinearly in $n$ (of order $\sqrt{n\log\log n}$ for bounded increments), the exponent is eventually dominated by $-\kappa_jN^{\mathrm{eff}}_{j,T}$, so the posterior mass on the true principle converges to $1$ exponentially fast in effective samples---even though part of those samples arrived as discounted imports from other branches.

\paragraph{Negative-transfer control.} The third guarantee concerns the gate that decides which imports to accept. It needs one ingredient: the gate's estimate $\widehat V_{j,t}(C)$ of a candidate's true net value $V_{j,t}(C)$ must come with a calibrated error radius.

\begin{assumption}[Calibrated value estimates]\label{ass:voi_calibration}
For every adaptively considered candidate,
\begin{equation}
|\widehat V_{j,t}(C)-V_{j,t}(C)|\le r_{j,t}(C,\delta)
\label{eq:voi_calibration}
\end{equation}
with conditional probability at least $1-\delta$.
\end{assumption}

\begin{theorem}[Negative-transfer control]\label{thm:negative_transfer}
Under Assumption~\ref{ass:voi_calibration}, the probability of accepting any candidate with $V_{j,t}(C)\le\eta$ is at most $\sum_{j,t,C}\delta_{j,t,C}$ when the gate in Eq.~\ref{eq:safe_voi_gate} is used.
\end{theorem}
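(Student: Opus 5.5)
The plan is to show that any harmful acceptance forces a calibration failure, and then pay for calibration failures with a union bound. This is the standard argument for lower-confidence-bound decision rules.

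\textbf{Step 1: reduce a bad acceptance to a calibration failure.} Fix a target branch $j$, a round $t$, and a candidate $C$ that the router considers. Call the triple \emph{bad} if the router accepts $C$ and $V_{j,t}(C)\le\eta$. Acceptance under Eq.~\eqref{eq:safe_voi_gate} means
$\widehat V_{j,t}(C)-r_{j,t}(C,\delta_{j,t,C})>\eta$.
Combining this with $V_{j,t}(C)\le\eta$ gives
$\widehat V_{j,t}(C)-V_{j,t}(C)>r_{j,t}(C,\delta_{j,t,C})$.
This strict inequality is exactly the complement of the event $\mathcal E_{j,t,C}$ in Assumption~\ref{ass:voi_calibration}, taken at confidence level $\delta_{j,t,C}$. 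The implication is deterministic: it uses neither the ranking rule Eq.~\eqref{eq:ids_ratio}, nor the quota $B$, nor the redundancy cap $R_{\max}$. Those only remove candidates from the set of gate-passers, so they can shrink the set of accepted candidates but never enlarge it.

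\textbf{Step 2: bound each failure conditionally.} Assumption~\ref{ass:voi_calibration} gives
$\Pr(\mathcal E_{j,t,C}^c\mid H_t^{\mathrm{sys}},C)\le\delta_{j,t,C}$
for every adaptively considered candidate. Taking expectations over the history removes the conditioning, so $\Pr(\mathcal E_{j,t,C}^c)\le\delta_{j,t,C}$.

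\textbf{Step 3: union bound.} The event ``some accepted candidate has $V\le\eta$'' is contained in $\bigcup_{j,t,C}\mathcal E_{j,t,C}^c$. Countable subadditivity then yields the bound $\sum_{j,t,C}\delta_{j,t,C}$.

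\textbf{Main obstacle: adaptivity in Step 3.} The set of considered candidates, and their number, depends on the random history, so a naive sum over a random index set is not justified. I would handle this by fixing a deterministic enumeration of candidate \emph{slots} $(j,t,k)$, where $k$ is the arrival index of the record in the pool at round $t$. On each slot I define the failure indicator, setting it to zero when the slot is empty. Then
\[
\mathbb E\Big[\sum_{(j,t,k)}\mathbf 1\{\mathcal E^c\}\Big]=\sum_{(j,t,k)}\mathbb E\big[\Pr(\mathcal E^c\mid\mathcal F_{t-1},C)\big]\le\sum_{(j,t,k)}\mathbb E[\delta_{j,t,C}]
\]
by the tower property. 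When the budgets $\delta_{j,t,C}$ are allocated deterministically per slot, for instance $\delta/(2^{t}KB')$, this collapses to the stated sum. Otherwise I would read the stated sum as the expected total budget. Markov's inequality then gives
$\Pr(\exists\,\text{failure})\le\mathbb E[\#\text{failures}]$,
which completes the argument.

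A final point of care is that admission is evaluated on information in $H_t^{\mathrm{sys}}$ before injection (Definition~\ref{def:evidence_objects}). This is what makes $\delta_{j,t,C}$ and $r$ measurable with respect to the conditioning $\sigma$-field used in the assumption, so the conditional bound in Step 2 applies.
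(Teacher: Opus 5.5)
Your proposal is correct and follows the paper's proof: the paper also shows that on the calibration event $\mathcal E_{j,t,C}$ a candidate with $V_{j,t}(C)\le\eta$ cannot pass the strict gate, removes the conditioning by the tower property, and union-bounds over the adaptively generated candidate stream. Your deterministic slot enumeration with a first-moment bound is a more explicit version of the paper's step, which instead uses the $\sigma(\mathcal F_{t-1},C)$-measurability of the event that $(j,t,C)$ is gated. It also makes clear that with history-dependent budgets $\delta_{j,t,C}$ the right-hand side must be read as an expected total, or as a deterministically allocated one such as the paper's alpha-spending schedule.
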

Intuitively, a harmful candidate can pass the gate only if its estimate errs by more than its own radius, an event whose probability is capped by $\delta_{j,t,C}$; summing these caps over all candidates bounds the total failure budget. The gate therefore acts as a tail-probability control on harmful imports, with $\delta$ as a user-chosen budget, and the guarantee is exactly as strong as the calibration of the radius $r_{j,t}$ feeding it (see Remark~\ref{rem:gate_scope}).

\paragraph{Discovery.} The last guarantee models discovery as a race: each branch independently rolls a die every round, and the first success ends the race. Assumption~\ref{ass:discovery_hazard} only asks that each branch's per-round success probability is bounded below by some $\lambda_j>0$.

\begin{assumption}[Discovery hazards]\label{ass:discovery_hazard}
Before discovery, active branches discover a $P^\star$-consistent discriminator independently conditional on history, with per-round probability at least $\lambda_j$.
\end{assumption}

\begin{theorem}[Parallel discovery]\label{thm:discovery}
Under Assumption~\ref{ass:discovery_hazard},
\begin{equation}
\Pr(\tau_{\mathrm{disc}}>t)\le\underbrace{\prod_j(1-\lambda_j)^t}_{\substack{\text{no branch discovers}\\\text{in }t\text{ rounds}}}\le\exp\Bigl(-t\sum_j\lambda_j\Bigr).
\label{eq:discovery_tail}
\end{equation}
\end{theorem}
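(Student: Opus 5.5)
The plan is to prove the two inequalities in Eq.~\eqref{eq:discovery_tail} separately. The first is a product bound obtained by peeling off one round at a time with conditional probabilities. The second is the elementary inequality $1-x\le e^{-x}$. Let $\mathcal F_s$ denote the history $\sigma$-field through round $s$, and let $E_s$ be the event that no branch discovers a $P^\star$-consistent discriminator in round $s$. Then $\{\tau_{\mathrm{disc}}>t\}=\bigcap_{s=1}^t E_s$, and each $E_s$ is measurable with respect to $\mathcal F_s$.

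\textbf{Step 1 (one-round conditional bound).} On the pre-discovery event $\{\tau_{\mathrm{disc}}>s-1\}\in\mathcal F_{s-1}$, Assumption~\ref{ass:discovery_hazard} says the branch-level discovery indicators in round $s$ are conditionally independent given $\mathcal F_{s-1}$. Each has success probability at least $\lambda_j$, so
\[
\Pr(E_s\mid\mathcal F_{s-1})=\prod_j\Pr(\text{branch } j \text{ fails in round } s\mid\mathcal F_{s-1})\le\prod_j(1-\lambda_j).
\]
This step needs a convention for inactive branches. I read the product as running over the branches active in that round, and I interpret the assumption as saying the hazard bound holds for every branch counted in the product. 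If some branch $j$ is idle in a round, it should either be excluded from the statement's product or assigned $\lambda_j$ only on active rounds. I will make this explicit by assuming that all branches are active before discovery, which is the reading the theorem's product requires.

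\textbf{Step 2 (tower/induction).} I prove $\Pr(\tau_{\mathrm{disc}}>t)\le\prod_j(1-\lambda_j)^t$ by induction on $t$. The case $t=0$ is trivial. For the inductive step, write
\[
\Pr(\tau_{\mathrm{disc}}>t)=\mathbb E\bigl[\mathbf 1\{\tau_{\mathrm{disc}}>t-1\}\Pr(E_t\mid\mathcal F_{t-1})\bigr].
\]
By Step 1 the right-hand side is at most $\prod_j(1-\lambda_j)\,\Pr(\tau_{\mathrm{disc}}>t-1)$. The key point is that the indicator is $\mathcal F_{t-1}$-measurable, so the conditional bound applies inside the expectation. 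This also handles adaptivity: the hypotheses $h_{j,t}$ may depend on the whole past, but the hazard lower bound holds conditionally, and the bound is uniform over histories.

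\textbf{Step 3 (exponential form).} Apply $1-\lambda_j\le e^{-\lambda_j}$ to each factor and multiply to obtain $\exp(-t\sum_j\lambda_j)$.

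The only real obstacle is making the conditioning in Step 1 rigorous. The assumption must supply a lower bound on each per-branch success probability \emph{conditional on the history}, together with conditional independence across branches within a round. Without conditional independence, one can only use a single branch and get $(1-\max_j\lambda_j)^t$. I will therefore state this reading of Assumption~\ref{ass:discovery_hazard} before the proof. Once it is in place, the remaining steps are routine.
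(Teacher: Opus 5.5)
Your proposal is correct and follows the paper's proof essentially step for step. The paper likewise bounds one round of non-discovery by $\prod_j(1-\lambda_j)$ on the $\mathcal F_{s-1}$-measurable pre-discovery event using conditional independence, iterates through the tower property, converts to exponential form with $1-x\le e^{-x}$ (equivalently $\log(1-x)\le -x$), and adopts the same inactive-branch convention of keeping only the $\lambda_j$ of branches active throughout the pre-discovery regime.
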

The tail probability of the discovery time decays exponentially at rate $\sum_j\lambda_j$: running $K$ branches multiplies the effective hazard, so the expected time to first discovery shrinks roughly by a factor of $K$ when the per-branch hazards are small and comparable.

\newpage
\section{Proofs of the Theoretical Guarantees}
\label{app:methodology_proofs}

In this section, we give the complete proofs of the results stated in \cref{subsec:guarantees}.

\subsection{Proof of parallel mutual-information coverage}
\label{app:proof_coverage}

\begin{proof}[Proof of \cref{thm:parallel_coverage}.]
	Write the round-$t$ system-history increment as $Z_t=(h_{1:K,t},\,Y_{1:K,t},\,W_t)$, where $W_t$ collects every remaining logged increment of round $t$ (routing decisions, verification outcomes, and monitoring records). We telescope \eqref{eq:system_information} over rounds and then lower-bound each round's contribution.

	\emph{Telescoping the system information over rounds.} Since $H_T^{\mathrm{sys}}=(Z_1,\ldots,Z_T)$, the chain rule for mutual information gives
	\begin{equation}
		\mathcal{U}_{\methodm}(T)=I(P;H_T^{\mathrm{sys}})
		=\sum_{t=1}^T I(P;Z_t\mid H_{t-1}^{\mathrm{sys}})
		=\sum_{t=1}^T \mathbb E\!\left[I(P;Z_t\mid\mathcal F_{t-1})\right],
		\label{eq:proof_coverage_telescope}
	\end{equation}
	where $I(P;Z_t\mid\mathcal F_{t-1})$ denotes the \emph{history-wise} conditional mutual information, i.e., the $\mathcal F_{t-1}$-measurable random variable
	\[
		I(P;Z_t\mid\mathcal F_{t-1})
		:=
		\mathrm D_{\mathrm{KL}}\!\left(p_{P,Z_t\mid H_{t-1}^{\mathrm{sys}}}\,\middle\|\,p_{P\mid H_{t-1}^{\mathrm{sys}}}\otimes p_{Z_t\mid H_{t-1}^{\mathrm{sys}}}\right),
	\]
	and the last equality in \eqref{eq:proof_coverage_telescope} is the tower property: $I(P;Z_t\mid H_{t-1}^{\mathrm{sys}})$ is by definition the expectation of this kernel over the random history. Every such kernel is bounded by $H_{\pi_0}(P)\le\log|\bar{\mathcal P}|<\infty$ under the standing finite-universe setting, so the telescoping and the interchange with expectation are legitimate. Adaptivity of the design poses no difficulty here---conditioning on the full $\mathcal F_{t-1}$ is precisely what makes each summand a well-defined information quantity, regardless of how the round-$t$ experiments were selected.

	\emph{Hypothesis selection carries no extra information.} Each branch selects $h_{j,t}$ from its own history, which is a sub-$\sigma$-algebra of $\mathcal F_{t-1}$, possibly augmented by exogenous randomization independent of $P$ given $\mathcal F_{t-1}$. Hence $I(P;h_{1:K,t}\mid\mathcal F_{t-1})=0$, and applying the chain rule to $Z_t$ conditionally on $\mathcal F_{t-1}$,
	\[
		I(P;Z_t\mid\mathcal F_{t-1})
		=
		\underbrace{I(P;h_{1:K,t}\mid\mathcal F_{t-1})}_{=\,0}
		+\,I(P;Y_{1:K,t}\mid\mathcal F_{t-1},h_{1:K,t})
		+\,I(P;W_t\mid\mathcal F_{t-1},h_{1:K,t},Y_{1:K,t}).
	\]
	Both surviving terms are nonnegative; keeping only the observation term,
	\begin{equation}
		I(P;Z_t\mid\mathcal F_{t-1})
		\;\ge\;
		I(P;Y_{1,t},\ldots,Y_{K,t}\mid\mathcal F_{t-1},h_{1:K,t}).
		\label{eq:proof_coverage_observation}
	\end{equation}
	This is the precise sense in which routing and logging increments are ``free'': discarding them can only weaken the bound.

	\emph{Lower-bounding each active round.} Whether round $t$ is active is decided from the history before the round is executed, so the indicator $\mathbf 1\{t\in\mathcal A_T\}$ is $\mathcal F_{t-1}$-measurable. On an active round, Assumption~\ref{ass:complementarity} applies to the right-hand side of \eqref{eq:proof_coverage_observation} and Assumption~\ref{ass:info_rate} floors each summand:
	\[
		I(P;Z_t\mid\mathcal F_{t-1})
		\;\ge\;
		\underline\gamma_T\sum_{j=1}^K I_{j,t}
		\;\ge\;
		\underline\gamma_T\sum_{j=1}^K\mu_j.
	\]
	On an inactive round we retain only nonnegativity of conditional mutual information. Combining the two cases,
	\[
		I(P;Z_t\mid\mathcal F_{t-1})
		\;\ge\;
		\mathbf 1\{t\in\mathcal A_T\}\,\underline\gamma_T\sum_{j=1}^K\mu_j
		\qquad\text{almost surely}.
	\]

	\emph{Summing the per-round bounds over the horizon.} Inserting into \eqref{eq:proof_coverage_telescope} and using that $\underline\gamma_T$ and the $\mu_j$ are deterministic,
	\[
		\mathcal{U}_{\methodm}(T)
		\;\ge\;
		\underline\gamma_T\sum_{j=1}^K\mu_j\;\mathbb E\!\left[\sum_{t=1}^T\mathbf 1\{t\in\mathcal A_T\}\right]
		\;=\;
		\mathbb E|\mathcal A_T|\,\underline\gamma_T\sum_{j=1}^K\mu_j
		\;\ge\;
		(1-\chi_T)T\,\underline\gamma_T\sum_{j=1}^K\mu_j,
	\]
	where the last step uses $|\mathcal A_T|\ge(1-\chi_T)T$ almost surely. Every step holds history-wise, so no independence assumption across rounds or branches enters the argument.
\end{proof}

\begin{remark}[The pre-saturation clause]
	\label{rem:pre_saturation}
	Mutual information about $P$ never exceeds the prior entropy, $\mathcal U_{\methodm}(T)\le H_{\pi_0}(P)$, so a per-round floor $I_{j,t}\ge\mu_j>0$ cannot hold indefinitely: Assumption~\ref{ass:info_rate}, and hence the bound of \cref{thm:parallel_coverage}, is consistent only for horizons satisfying $(1-\chi_T)T\,\underline\gamma_T\sum_{j=1}^K\mu_j\le H_{\pi_0}(P)$. The qualifier ``before entropy saturation'' in the theorem statement refers to this regime; once the posterior concentrates, the per-round increment is governed by the residual entropy $H(P\mid\mathcal F_{t-1})$ rather than by any constant floor, and the rate must be re-derived accordingly.
\end{remark}

\subsection{Proof of branch concentration under discounted imports}
\label{app:proof_concentration}

\begin{proof}[Proof of \cref{thm:concentration}.]
	The proof has three moves: writing out the discounted log-contrast explicitly, bounding the posterior tail through the posterior odds, and inserting the simultaneous contrast bound.

	\emph{Writing out the discounted log-contrast.} Dividing \eqref{eq:power_posterior} evaluated at $P$ by the same expression at $P^\star$ eliminates the normalizer $Z_{j,T}$ and yields the posterior-odds identity
	\begin{equation}
		\frac{p_{j,T}(P)}{p_{j,T}(P^\star)}
		=
		\frac{\pi_0(P)}{\pi_0(P^\star)}
		\exp\!\big[-L_{j,T}(P)\big],
		\label{eq:proof_odds_identity}
	\end{equation}
	where the accumulated local and discounted-import log-likelihood ratio in favor of $P^\star$ is
	\begin{equation}
		L_{j,T}(P)
		:=
		\sum_{(h,y)\in\mathcal D_{j,T}}
		\log\frac{p_j(y\mid h,P^\star)}{p_j(y\mid h,P)}
		+
		\sum_{(i,h,y,\alpha)\in\mathcal I_{j,T}}
		\alpha\,
		\log\frac{q_{j\leftarrow i}(y\mid h,P^\star)}{q_{j\leftarrow i}(y\mid h,P)}.
		\label{eq:proof_contrast_explicit}
	\end{equation}
	This is exactly the quantity of Assumption~\ref{ass:contrast}: local observations enter with weight $1$ and imports enter with their discount $\alpha_{j\leftarrow i}$, all weights lie in $[0,1]$, and each weight is computed before the corresponding factor is injected into the posterior, hence is predictable with respect to the branch history. Consequently $N^{\mathrm{eff}}_{j,T}$, the sum of these weights, counts one full effective sample per local observation and $\alpha$ per import. The finiteness clause of Assumption~\ref{ass:contrast} ensures that every logarithm in \eqref{eq:proof_contrast_explicit} is finite almost surely on data generated under $P^\star$, so the odds identity is well defined with no zero-over-zero ambiguity.

	\emph{Bounding the posterior tail through the odds.} The odds identity above also shows $p_{j,T}(P^\star)>0$ almost surely: its unnormalized mass is a product of $\pi_0(P^\star)>0$ with likelihood factors that are positive almost surely on data generated under $P^\star$ by the finiteness clause. Since moreover $p_{j,T}(P^\star)\le1$, dividing by it can only enlarge each term, so
	\begin{equation}
		1-p_{j,T}(P^\star)
		=
		\sum_{P\ne P^\star}p_{j,T}(P)
		\;\le\;
		\sum_{P\ne P^\star}\frac{p_{j,T}(P)}{p_{j,T}(P^\star)}
		=
		\sum_{P\ne P^\star}\frac{\pi_0(P)}{\pi_0(P^\star)}\exp\!\big[-L_{j,T}(P)\big],
		\label{eq:proof_odds_sum}
	\end{equation}
	where the last equality is \eqref{eq:proof_odds_identity}. No probabilistic argument has been used: \eqref{eq:proof_odds_sum} holds pathwise on every realization of the evidence.

	\emph{Inserting the simultaneous contrast bound.} Let $\mathcal E_{\mathrm{conc}}$ denote the event in \eqref{eq:discounted_contrast_bound}, which has probability at least $1-\delta$ and holds simultaneously over all rounds $T$ and all $P\ne P^\star$. On $\mathcal E_{\mathrm{conc}}$, for every $P\ne P^\star$,
	\[
		L_{j,T}(P)
		\;\ge\;
		\kappa_j(P)\,N^{\mathrm{eff}}_{j,T}-\mathfrak r_j(N^{\mathrm{eff}}_{j,T},\delta)
		\;\ge\;
		\kappa_j\,N^{\mathrm{eff}}_{j,T}-\mathfrak r_j(N^{\mathrm{eff}}_{j,T},\delta),
	\]
	because $\kappa_j(P)\ge\kappa_j:=\min_{P\ne P^\star}\kappa_j(P)$ and $N^{\mathrm{eff}}_{j,T}\ge0$. Substituting into \eqref{eq:proof_odds_sum} and using $\pi_0(P)\le1$ for each of the $|\bar{\mathcal P}|-1$ wrong principles,
	\begin{align}
		1-p_{j,T}(P^\star)
		&\;\le\;
		\frac{\exp\!\big\{-\kappa_jN^{\mathrm{eff}}_{j,T}+\mathfrak r_j(N^{\mathrm{eff}}_{j,T},\delta)\big\}}{\pi_0(P^\star)}
		\sum_{P\ne P^\star}\pi_0(P)
		\notag\\
		&\;\le\;
		\frac{|\bar{\mathcal P}|-1}{\pi_0(P^\star)}
		\exp\!\big\{-\kappa_jN^{\mathrm{eff}}_{j,T}+\mathfrak r_j(N^{\mathrm{eff}}_{j,T},\delta)\big\},
		\label{eq:proof_concentration_final}
	\end{align}
	which is the claim; the prefactor is finite because $\pi_0(P^\star)>0$ by Assumption~\ref{ass:finite_positive}. Since $\mathcal E_{\mathrm{conc}}$ is uniform in $T$, the same conclusion holds with $N^{\mathrm{eff}}_{j,\tau}$ in place of $N^{\mathrm{eff}}_{j,T}$ at any adaptive routing or stopping time $\tau$: no optional-sampling correction is required.
\end{proof}

\begin{remark}[Constants and sharpness]
	\label{rem:concentration_constants}
	Two features of \eqref{eq:proof_concentration_final} affect its use. First, the prior enters only through $1/\pi_0(P^\star)$: with a uniform prior on $M=|\bar{\mathcal P}|$ principles the prefactor is $M(M-1)$, so concentration is fastest relative to the universe size when the prior already places non-negligible mass on $P^\star$. Second, the first line of \eqref{eq:proof_concentration_final} shows that the sharper, prior-dependent constant $\sum_{P\ne P^\star}\pi_0(P)=1-\pi_0(P^\star)$ is available; \cref{thm:concentration} states the looser $|\bar{\mathcal P}|-1$ form, which is prior-independent and matches the union-bound structure of the argument. Finally, the bound is vacuous until $\kappa_jN^{\mathrm{eff}}_{j,T}$ exceeds $\mathfrak r_j(N^{\mathrm{eff}}_{j,T},\delta)+\log\!\big((|\bar{\mathcal P}|-1)/\pi_0(P^\star)\big)$; it is informative only once the effective evidence count has cleared this crossover, which is the formal content of the informal claim that an import counts only at its discounted weight.
\end{remark}

\subsection{Proof of negative-transfer control}
\label{app:proof_gate}

\begin{proof}[Proof of \cref{thm:negative_transfer}.]
	Let $\mathcal C_T$ denote the collection of all triples $(j,t,C)$ for which candidate $C$ is presented to branch $j$'s gate during the run. For each $(j,t,C)\in\mathcal C_T$ define the calibration event
	\[
		\mathcal E_{j,t,C}
		:=
		\big\{\,|\widehat V_{j,t}(C)-V_{j,t}(C)|\le r_{j,t}(C,\delta_{j,t,C})\,\big\},
		\qquad
		\Pr(\mathcal E_{j,t,C}^c)
		=
		\mathbb E\!\left[\Pr\!\left(\mathcal E_{j,t,C}^c\,\middle|\,\mathcal F_{t-1},C\right)\right]
		\le
		\delta_{j,t,C},
	\]
	where the conditional bound is \eqref{eq:voi_calibration} and the outer expectation removes the conditioning. Conditioning on $(\mathcal F_{t-1},C)$ is what legitimizes treating an adaptively generated candidate as fixed at the moment it is gated; this is the only place where Assumption~\ref{ass:voi_calibration} is used.

	\emph{The gate is safe on each calibration event.} On $\mathcal E_{j,t,C}$, if the true net value satisfies $V_{j,t}(C)\le\eta$, then
	\[
		\widehat V_{j,t}(C)-r_{j,t}(C,\delta_{j,t,C})
		\;\le\;
		V_{j,t}(C)\;\le\;\eta,
	\]
	so the strict inequality in \eqref{eq:safe_voi_gate} fails and the candidate is rejected. Contrapositively, any \emph{accepted} candidate with $V_{j,t}(C)\le\eta$ must lie in $\mathcal E_{j,t,C}^c$; hence
	\[
		\big\{\text{some accepted candidate has }V_{j,t}(C)\le\eta\big\}
		\;\subseteq\;
		\bigcup_{(j,t,C)\in\mathcal C_T}\mathcal E_{j,t,C}^c.
	\]

	\emph{Union-bounding over the adaptive candidate stream.} Whether a triple $(j,t,C)$ is ever gated is itself determined by the history and the generated candidate, i.e., the event $\{(j,t,C)\in\mathcal C_T\}$ is $\sigma(\mathcal F_{t-1},C)$-measurable. Hence each calibration statement can be restricted to the realized stream without changing its bound:
	\[
		\Pr\!\big(\mathcal E_{j,t,C}^c\cap\{(j,t,C)\in\mathcal C_T\}\big)
		=
		\mathbb E\!\left[\mathbf 1\{(j,t,C)\in\mathcal C_T\}\,\Pr\!\left(\mathcal E_{j,t,C}^c\,\middle|\,\mathcal F_{t-1},C\right)\right]
		\;\le\;
		\delta_{j,t,C}.
	\]
	A union bound over the stream then gives
	\[
		\Pr\!\big(\text{any accepted import has }V_{j,t}(C)\le\eta\big)
		\;\le\;
		\Pr\!\left(\bigcup_{(j,t,C)\in\mathcal C_T}\mathcal E_{j,t,C}^c\right)
		\;\le\;
		\sum_{j,t,C}\delta_{j,t,C},
	\]
	where the last sum runs over the per-candidate budgets allocated along the realized stream, exactly as in the theorem statement. Adaptivity of the stream is therefore harmless: the failure probabilities add regardless of how the sequence of candidates was generated.
\end{proof}

\begin{remark}[Budgeting the failure probability]
	\label{rem:gate_budget}
	If a deterministic bound $M_T$ on the number of gated candidates is known, the uniform allocation $\delta_{j,t,C}=\delta/M_T$ caps the total failure probability at $\delta$. For an unbounded adaptive stream the budget is spent across the sequence---for example, assigning $\delta_m=6\delta/(\pi^2 m^2)$ to the $m$-th candidate---so that $\sum_{m\ge1}\delta_m=\delta$ and the guarantee holds for the entire run at level $\delta$.
\end{remark}

\begin{remark}[What the theorem does and does not control]
	\label{rem:gate_scope}
	The guarantee is stated relative to the \emph{defined} net value \eqref{eq:true_value}, i.e., the entropy-reduction potential net of the certified transfer cost, and it is only as strong as the radius $r_{j,t}$ entering the gate. As delimited in \cref{subsec:theoretical_framework}, the deterministic implementation's exact-enumeration radius $r=0$ certifies the numerical update but not model misspecification or residual context shift; an externally validated or replication-based radius is required for the theorem to bind in deployment. In particular, bounded rewards alone do not make harmful imports rare---the control comes entirely from the calibration of $\widehat V_{j,t}$.
\end{remark}

\subsection{Proof of parallel discovery under hazards}
\label{app:proof_discovery}

\begin{proof}[Proof of \cref{thm:discovery}.]
	Let $D_{j,s}$ denote the event that branch $j$ discovers a $P^\star$-consistent discriminator in round $s$, and let
	\[
		N_s
		:=
		\{\tau_{\mathrm{disc}}>s\}
		=
		\bigcap_{s'\le s}\bigcap_j \overline{D_{j,s'}}
	\]
	be the event that no branch has discovered by the end of round $s$; note $N_s\in\mathcal F_s$.

	\emph{One round of non-discovery.} On the event $N_{s-1}$ every branch is still in the pre-discovery regime, so Assumption~\ref{ass:discovery_hazard} applies: conditionally on $\mathcal F_{s-1}$, the discovery events $D_{j,s}$ of the active branches are independent with $\Pr(D_{j,s}\mid\mathcal F_{s-1})\ge\lambda_j$. Therefore, on $N_{s-1}$,
	\[
		\Pr\!\left(N_s\,\middle|\,\mathcal F_{s-1}\right)
		=
		\Pr\!\left(\textstyle\bigcap_j \overline{D_{j,s}}\,\middle|\,\mathcal F_{s-1}\right)
		=
		\prod_j\big(1-\Pr(D_{j,s}\mid\mathcal F_{s-1})\big)
		\;\le\;
		\prod_j(1-\lambda_j),
	\]
	where the middle equality is the conditional independence across branches. The product ranges over the branches active in round $s$; a branch that is inactive in round $s$ is equivalently covered by reading its hazard floor as $0$ for that round, which contributes a factor $1$, so the displayed bound holds with the $\lambda_j$ of the branches that are active throughout the pre-discovery regime. The conditional independence is exactly where the argument uses that the branches' discovery mechanisms share no common failure mode beyond the observed history.

	\emph{Iterating the one-round bound over rounds.} Since $N_s\subseteq N_{s-1}$ and $N_{s-1}\in\mathcal F_{s-1}$, the conditional probability $\Pr(N_s\mid\mathcal F_{s-1})$ vanishes off $N_{s-1}$; hence the one-round bound applies through the tower property,
	\[
		\Pr(N_s)
		=
		\mathbb E\!\left[\mathbf 1\{N_{s-1}\}\,\Pr(N_s\mid\mathcal F_{s-1})\right]
		\;\le\;
		\Big(\prod_j(1-\lambda_j)\Big)\,\Pr(N_{s-1}).
	\]
	Starting from $\Pr(N_0)=1$ and iterating over $s=1,\ldots,t$,
	\[
		\Pr(\tau_{\mathrm{disc}}>t)
		=
		\Pr(N_t)
		\;\le\;
		\prod_j(1-\lambda_j)^t.
	\]

	\emph{Converting the product bound to exponential form.} Since $\log(1-x)\le-x$ for $x\in[0,1)$ (if some $\lambda_j=1$ the product is $0$ and the claim is trivial),
	\[
		\prod_j(1-\lambda_j)^t
		=
		\exp\!\Big(t\sum_j\log(1-\lambda_j)\Big)
		\;\le\;
		\exp\!\Big(-t\sum_j\lambda_j\Big),
	\]
	which is \eqref{eq:discovery_tail}.
\end{proof}

\begin{remark}[Expectation bound and the role of independence]
	\label{rem:discovery_expectation}
	Summing the tail bound yields an expectation bound at no extra cost: since $\tau_{\mathrm{disc}}$ is a positive integer-valued random variable,
	\[
		\mathbb E[\tau_{\mathrm{disc}}]
		=
		\sum_{t\ge0}\Pr(\tau_{\mathrm{disc}}>t)
		\;\le\;
		\sum_{t\ge0}\Big(\prod_j(1-\lambda_j)\Big)^t
		=
		\frac{1}{1-\prod_j(1-\lambda_j)},
	\]
	by the geometric-series identity. With a common hazard $\lambda$ this reads $\mathbb E[\tau_{\mathrm{disc}}]\le 1/(1-(1-\lambda)^K)\approx 1/(K\lambda)$ for small $\lambda$---the near-$K$-fold speedup of parallel search. The conditional-independence clause of Assumption~\ref{ass:discovery_hazard} is essential for this conclusion: were the discovery events perfectly coupled, the one-round non-discovery probability would be $1-\max_j\lambda_j$ instead of $\prod_j(1-\lambda_j)$, and no speedup would follow. The theorem therefore certifies acceleration only to the extent that the branch construction genuinely diversifies the discovery mechanisms, which is a measurable property of the design rather than a consequence of running $K$ copies.
\end{remark}

\newpage
\section{Theorem--Experiment Alignment}
\label{app:theory_empirical_alignment}

The four guarantees of \cref{subsec:guarantees} certify information-theoretic mechanisms---coverage rate, posterior concentration, gate failure probability, and discovery hazards---whereas the run logs record quality, time, and token surrogates. The alignment therefore runs at the level of \emph{predicted signatures}: each theorem implies a falsifiable pattern in the reported measurements, checked against the tables and figures of \cref{subsec:performance_analysis,subsec:efficiency_analysis,subsec:ablation_study}. The four alignments below state the signature and its evidence; each closes with the conclusion the evidence supports.

\noindent \textbf{(\cref{thm:parallel_coverage}) Per-round coverage advantage appears as the wall-clock and equal-time leads.} The bound $(1-\chi_T)\underline\gamma_T\sum_j\mu_j$ predicts two signatures: the shared budget is exhausted sooner, and an equal-time read shows a quality lead. Both hold---\method completes the shared 72-evaluation budget $1.03{\times}$--$2.85{\times}$ faster than PiEvo on all six tasks (mean $1.80{\times}$; Figure~\ref{fig:efficiency}(a)), $\Delta\mathrm{SQ}@T_c$ averages $+8.3$ SQ points and is positive in $15$ of $18$ task--seed pairs (Figure~\ref{fig:efficiency}(c)), and \method attains the best average APD on both backbones ($52.0$ and $58.1$; Table~\ref{tab:apd_auoc}), the $\underline\gamma_T$ side of the bound.
\begin{takeaway}
	\textbf{Takeaway.} \method exhausts the shared evaluation budget $1.03{\times}$--$2.85{\times}$ faster than PiEvo on all six tasks, leads by $+8.3$ SQ on average at equal time, and attains the widest exploration (APD) on both backbones.
\end{takeaway}

\noindent \textbf{(\cref{thm:concentration}) imports help only at their discounted weight.} Since the theorem counts an import toward $N^{\mathrm{eff}}_{j,T}$ at weight $\alpha_{j\leftarrow i}$, two signatures must hold jointly: calibrated sharing raises attainment, and stripping the discount ($\alpha{\equiv}1$) damages it. Both hold---the shared arm's best-of-seed SQ exceeds the Isolated arm on all six tasks ($+13.6$ MBO, $+13.3$ Promoter, $+2.5$ SPO, $+0.5$ NHO and TMC, and $+7.9$ AMP with $p{<}0.05$; Table~\ref{tab:ablation_sharing_arms}), removing the discount collapses quality by $19.0$ SQ, the largest knockout effect (Table~\ref{tab:ablation_module_knockouts}), and the flow itself is real: $244$ accepted imports, $205$ net-unique, with uplift of $+15.4$ (MBO), $+25.4$ (AMP), and $+43.3$ (TMC) SQ points (Table~\ref{tab:sharing_diagnostics}).
\begin{takeaway}
	\textbf{Takeaway.} Calibrated sharing improves best-of-seed quality over the Isolated arm on all six tasks, while applying the same imports at full weight ($\alpha{\equiv}1$) costs $19.0$ SQ---the largest single-module degradation---and the accepted imports themselves carry observed uplift of up to $+43.3$ SQ.
\end{takeaway}

\noindent \textbf{(\cref{thm:negative_transfer}) the gate behaves as a tail-probability control.} The theorem bounds the \emph{probability} of a harmful accepted import, so disabling the gate should fatten the lower tail, not move the mean. Removing the gate admits $137$ imports against $42$, yet the mean moves only $-2.1$ SQ while the standard deviation more than doubles ($4.6\to11.1$) with two of three seeds degraded (Table~\ref{tab:ablation_module_knockouts}); attainment shows no monotone response to the margin $\eta$ even though $\eta$ is the strongest admission-volume actuator (Figure~\ref{fig:ablation_hparams}); and on Promoter the gate admits a single import across the three seeds, so its selectivity is genuine and the guarantee non-vacuous (Table~\ref{tab:sharing_diagnostics}).
\begin{takeaway}
	\textbf{Takeaway.} Removing it admits over $3{\times}$ more imports ($42\to137$) and more than doubles the outcome spread ($4.6\to11.1$) while shifting the mean by only $-2.1$ SQ---the signature of protection against rare harmful imports.
\end{takeaway}

\noindent \textbf{(\cref{thm:discovery}): the discovery rescue is gated by hazard diversity, not by $K$.} The tail bound $\Pr(\tau_{\mathrm{disc}}>t)\le\prod_j(1-\lambda_j)^t$ predicts the largest advantage where a single branch's hazard $\lambda$ is near zero: on SPO, systems without principle branching (Vanilla MAS, The AI Scientist v1) collapse to $0.0\%$ under both backbones while \method sustains $15.3\%$--$30.0\%$, $1.5{\times}$--$2.8{\times}$ PiEvo ($10.4\%$--$10.6\%$; Table~\ref{tab:main_compare_8tasks}). The branch-count sweep confirms the caveat of Remark~\ref{rem:discovery_expectation}: at fixed budget the response to $K$ is an inverted-U on MBO peaking at $K{=}3$, flat on AMP, and a plateau matching the PiEvo reference on TMC---landscape-gated, not $K$-gated (Figure~\ref{fig:ksweep}).
\begin{takeaway}
	\textbf{Takeaway.} \method sustains $15.3\%$--$30.0\%$ on SPO, where single-branch systems stall at $0.0\%$, and the response to branch count at fixed budget peaks at $K{=}3$---the gain comes from diversified discovery hazards, not from running more branches.
\end{takeaway}

\newpage
\section{Implementation Details of the Coordination Loop}
\label{app:loop_implementation}

\method instantiates the three quantities of \cref{subsec:theoretical_framework}---the trust score $\rho_{i,t}$, the transfer cost $\Delta^{\mathrm{imp}}_{j,t}$, and the value estimate $\widehat V_{j,t}$---and assembles them into the per-round loop of Algorithm~\ref{algo:method}:

\textbf{Trust $\rho_{i,t}=1/(1+\bar e_{i,t})$.} We set $\bar e_{i,t}$ as the source's mean absolute standardized GP residual~\citep{Pu2026PrincipleEvolvableSD}, which discounts sources that mispredict their own measurements. The $s_{ij,t}$ scores context transferability to target $j$ from task identity, the declared shared observation model, and subspace compatibility, and $v_{i,t}=1$ after independent target replication and $0.5$ otherwise. Missing estimates fail closed at zero. The target snapshot supplies the predictive entries $f_{j,P}(h)$ and $\sigma^2_{j,P}(h)$ for every supported hypothesis and principle.

\textbf{Cost $\Delta^{\mathrm{imp}}_{j,t}(C)$.}  It prices the transfer mechanics as constants normalized to the entropy scale: parsing $c_{\mathrm{read}}$, verification of unreplicated records $c_{\mathrm{verify}}$, posterior fitting of novel records $c_{\mathrm{fit}}$, and context shift
$
    c_{\mathrm{shift}}=(1-s_{ij,t})c_{\mathrm{shift}}^{\max}.
$
Context shift is the only state-dependent term; driven by the same setting-match score as the discount, it makes a less compatible source simultaneously downweighted in $\alpha$ and more expensive to import.

\textbf{Value $V_{j,t}(C)$.} It is a conditional expectation over residual randomness; the core evaluates it exactly over the finite principle universe by cloning the target posterior, applying $(h,y)$ at weight $\alpha$, and measuring the induced entropy change. To align this information score with the task's maximization objective, it is weighted by the source outcome's empirical-rank relevance $w_{\mathrm{rel}}(y)\in[w_{\min},1]$:
\[
\widehat V_{j,t}(C)=w_{\mathrm{rel}}(y)\bigl[H(p_{j,t})-H(\widetilde p^+_{j,t})\bigr]-\lambda\widehat\Delta^{\mathrm{imp}}_{j,t}(C),
\]
where $\widetilde p^+_{j,t}$ is the hypothetical clone, committed only if the candidate passes the gate. $\widehat V_{j,t}(C)$ departs from Eq.~\eqref{eq:true_value} only through $w_{\mathrm{rel}}$, the cost constants, and exact enumeration over the finite $\bar{\mathcal P}$ in place of the expectation.

\noindent \textbf{Redundancy cap $R_{\max}$.} Routing deduplicates each branch's per-round admitted set by text: after the gate and ranking, candidates are admitted greedily in ranked order, and a candidate is dropped if its text has token-Jaccard similarity $J(A,B)=\nicefrac{|A\cap B|}{|A\cup B|}\ge R_{\max}$ with an already-admitted one, keeping only the highest-ranked near-duplicate per round (default $R_{\max}{=}0.3$; $R_{\max}{=}1$ disables the cap, admitting only exact-text duplicates---the knock-out arm of Table~\ref{tab:ablation_module_knockouts}). This is distinct from the evidence pool's exact-hash deduplication: the cap is an approximate, per-round, per-branch filter at routing time.

\noindent \textbf{Monitoring.} The core logs accepted and rejected candidates, discount components, duplicate rates, replication outcomes, predictive-density checks, posterior entropy changes, and coordination time; rejected records remain in the pool without altering target posteriors.

\noindent \textbf{Gaussian realization of the discounted update.} Under Gaussian predictive models with fixed observation noise and a sample weight acting as a precision multiplier, the discounted update of Eq.~\eqref{eq:power_posterior} contributes
$
-\frac{\alpha_{j\leftarrow i,t}(h)}{2\sigma_{\mathrm{obs}}^2}
\bigl(y-f_{j,P}(h)\bigr)^2+\mathrm{const}
$
to the target log posterior: an accepted import is a local observation with variance $\sigma_{\mathrm{obs}}^2/\alpha$.

\noindent \textbf{Default constants and reproducibility.}
Table~\ref{tab:coordination_constants} lists every constant of the coordination loop. The relevance weight is the source outcome's mid-rank percentile within the source branch's observed outcomes, lifted to $[w_{\min},1]$: $w_{\mathrm{rel}}(y)=w_{\min}+(1-w_{\min})\,\mathrm{pct}(y)$ with $\mathrm{pct}(y)=\bigl(\#\{y'<y\}+\nicefrac{1}{2}\max(\#\{y'=y\}-1,0)\bigr)/(n-1)$ and $w_{\min}=0.1$; a candidate implausible under the entire target posterior (log predictive density below $-25$) is rejected outright. The per-candidate confidence budget follows the alpha-spending schedule $\delta_m=6\delta/(\pi^2 m^2)$ with total budget $\delta=0.05$ (Remark~\ref{rem:gate_budget}). The per-task principle universe $\bar{\mathcal P}$, prior $\pi_0$, and GP outcome models follow PiEvo~\citep{Pu2026PrincipleEvolvableSD} unchanged. All runs use three fixed seeds per cell; rejected candidates and surrogate or endpoint failures score $0.0$ and are excluded from trajectories as infrastructure events (\cref{sec:benchmark}). Code, configuration files, and per-run logs will be released.

\begin{table}[h!]
	\centering
	\caption{\textbf{Default constants of the coordination core.} All values are fixed across tasks, backbones, and ablations unless the ablation varies them explicitly.}
	\label{tab:coordination_constants}
	\small
	\setlength{\tabcolsep}{5pt}
	\begin{tabular}{lll}
		\toprule
		\textbf{Symbol} & \textbf{Default} & \textbf{Role} \\
		\midrule
		$\eta$ & $0.0$ & safe-VoI gate margin (Eq.~\eqref{eq:safe_voi_gate}) \\
		$B$ & $3$ & routing quota per branch per round \\
		$R_{\max}$ & $0.3$ & redundancy cap (token-Jaccard) \\
		$\lambda$ & $1.0$ & import cost weight (Eq.~\eqref{eq:true_value}) \\
		$\delta$ & $0.05$ & total false-import budget (alpha-spending) \\
		$c_{\mathrm{read}}$ & $5\times10^{-4}$ & parsing cost \\
		$c_{\mathrm{verify}}$ & $2.5\times10^{-4}$ & verification cost (unreplicated records) \\
		$c_{\mathrm{fit}}$ & $2.5\times10^{-4}$ & posterior fitting cost \\
		$c_{\mathrm{shift}}^{\max}$ & $2\times10^{-2}$ & maximal context-shift surcharge \\
		$w_{\min}$ & $0.1$ & relevance floor of $w_{\mathrm{rel}}(y)$ \\
		$\varepsilon$ & $10^{-9}$ & IDS ratio denominator guard (Eq.~\eqref{eq:ids_ratio}) \\
		\bottomrule
	\end{tabular}
\end{table}

Algorithm~\ref{algo:method} assembles these quantities into one coordination round.

\begin{algorithm}[h!]
    \caption{\method coordination core (one round $t$)}
    \label{algo:method}
    \begin{algorithmic}[1]
        \REQUIRE branches $\{1,\dots,K\}$ with posteriors $\{p_{j,t-1}\}$, evidence pool $H^{\mathrm{sys}}_{t-1}$; margin $\eta$, routing quota $B$
        \ENSURE updated posteriors $\{p_{j,t}\}$, extended pool $H^{\mathrm{sys}}_{t}$, logged decisions
        \STATE \textbf{Collect:} each branch $i$ publishes its round-$t$ tested pairs as candidate records $C=(i,h,y,E_i,m)$;
        \STATE \textbf{Merge:} validate, deduplicate, and append records to the append-only pool; no posterior is touched;
        \FORALL{targets $j$ and pooled records $C$ from sources $i\neq j$}
            \STATE $\alpha_{j\leftarrow i,t}(h)\leftarrow\operatorname{clip}_{[0,1]}\!\left(\rho_{i,t}\,s_{ij,t}\,v_{i,t}\right)$ \COMMENT{Eq.~\eqref{eq:discount_factor}}
            \STATE clone $p_{j,t-1}$, enumerate the discounted update, set $\widehat V_{j,t}(C)$ with $r=0$ \COMMENT{Eq.~\eqref{eq:true_value}}
            \STATE $\Delta^{\mathrm{imp}}_{j,t}(C)\leftarrow c_{\mathrm{read}}+c_{\mathrm{verify}}+c_{\mathrm{fit}}+(1-s_{ij,t})\,c_{\mathrm{shift}}^{\max}$
        \ENDFOR
        \STATE \textbf{Route:} admit $C$ to $j$ only if $\widehat V_{j,t}(C)-r>\eta$; rank admitted records by $\nicefrac{\Delta^{\mathrm{imp}}_{j,t}(C)^2}{\max\{\widehat V_{j,t}(C)-r,0\}+\varepsilon}$ and keep the top $B$ \COMMENT{Eqs.~\eqref{eq:safe_voi_gate}--\eqref{eq:ids_ratio}}
        \FORALL{targets $j$}
            \STATE \textbf{Inject:} multiply $p_{j,t}$ by $q_{j\leftarrow i}(y\mid h,P)^{\alpha}$ per admitted import; record the decision \COMMENT{Eq.~\eqref{eq:power_posterior}}
        \ENDFOR
        \STATE \textbf{Monitor:} log diversity and audit diagnostics feeding the scoring of round $t{+}1$.
    \end{algorithmic}
\end{algorithm}

\newpage
\section{Ablation of Hyperparameters in \method}

\noindent \textbf{Branch-count ablation.} As shown in Figure~\ref{fig:ksweep}, the response of \method tends to be landscape-dependent: an inverted-U on MBO peaking at $K{=}3$, flat on AMP, and on TMC a rise into a plateau that matches or exceeds the PiEvo reference from $K{=}3$ onward.

\begin{figure*}[h!]
	\centering
	\includegraphics[width=0.95\textwidth]{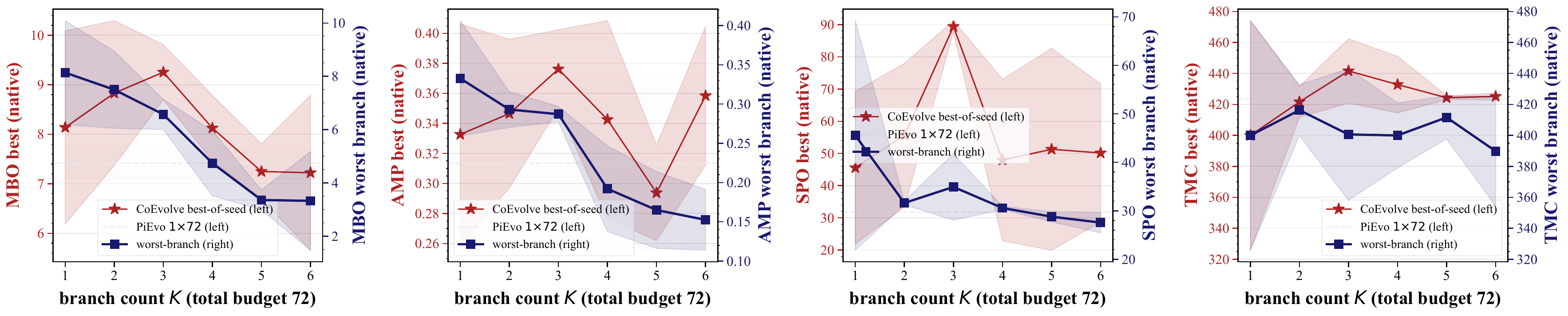}
	\caption{
        \textbf{Branch-count ablation.} 
        In $K{=}5$, we use unequal per-branch budgets $\{15,15,14,14,14\}$ summing to the same total of 72.
        The \textcolor{gray}{gray} dashed line and band mark PiEvo~\citep{Pu2026PrincipleEvolvableSD}.}
	\label{fig:ksweep}
\end{figure*}

\noindent \textbf{Hyperparameter sensitivity.} We ablate the safe-VoI gate margin $\eta$, the redundancy cap $R_{\max}$ (extended to the fully-disabled point $R_{\max}{=}1$ from Table~\ref{tab:ablation_module_knockouts}), the import cost weight $\lambda$, and the per-round routing quota in Figure~\ref{fig:ablation_hparams}. The best-of-seed SQ peaks at the default of $R_{\max}$ and the routing quota, sits on a plateau spanning $\lambda{\in}[1,2]$, and shows no monotone response to $\eta$. This is consistent with the gate's role as protection against \emph{rare} harmful imports, not an average-case quality lever (Table~\ref{tab:ablation_module_knockouts}).

\begin{figure*}[h!]
	\centering
	\includegraphics[width=0.95\textwidth]{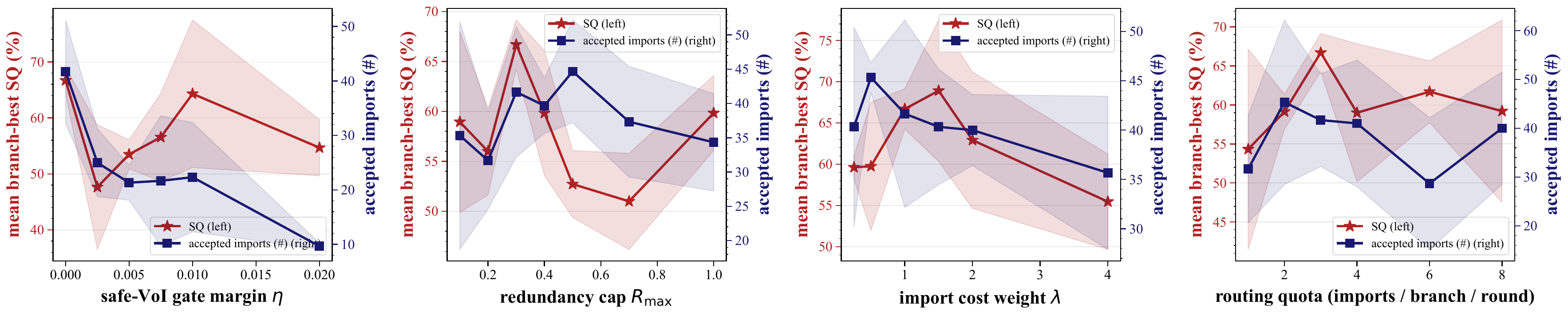}
	\caption{
        \textbf{Hyperparameter sensitivity.} 
        Each panel varies exactly one coordination hyperparameter and pairs the \emph{best-of-seed SQ} with the accepted-import volume.}
	\label{fig:ablation_hparams}
\end{figure*}

\newpage
\section{Ablation of Evidence Transfer in \method}
\label{sec:sharing_diagnostics}

\begin{table*}[h!]
	\centering
	\caption{\textbf{Ablation of Evidence transfer.} 
    $\Delta$ is the difference of arm means with $^\star$ marking Welch's $t$ significance at $p<0.05$; \textcolor[HTML]{7D2AD1}{purple} favors sharing. Promoter's Isolated arm runs the current $T{=}1.1$ protocol.}
	\label{tab:ablation_sharing_arms}
	\small
	\setlength{\tabcolsep}{4pt}
	\resizebox{0.8\textwidth}{!}{\begin{tabular}{l ccc cccc}
		\toprule[1.5pt]
		& \multicolumn{3}{c}{\textbf{Worst-branch SQ (\%)}} & \multicolumn{4}{c}{\textbf{Best-of-seed SQ (\%)}}\\
		\cmidrule(lr){2-4} \cmidrule(lr){5-8}
		\textbf{Task} & \method shared & Isolated & $\Delta$ & \method shared & Isolated & Independent & $\Delta$ \\
		\midrule

		\rowcolor{rowcolor}
		MBO & $54.8$\,{\scriptsize\color{gray}$\pm\,4.7$} & $45.3$\,{\scriptsize\color{gray}$\pm\,10.0$} & $\textcolor[HTML]{7D2AD1}{+9.4}$ & $77.1$\,{\scriptsize\color{gray}$\pm\,4.6$} & $63.5$\,{\scriptsize\color{gray}$\pm\,7.9$} & $53.6$\,{\scriptsize\color{gray}$\pm\,13.6$} & $\textcolor[HTML]{7D2AD1}{+13.6}$ \\

		NHO & $90.8$\,{\scriptsize\color{gray}$\pm\,2.0$} & $89.9$\,{\scriptsize\color{gray}$\pm\,2.7$} & $\textcolor[HTML]{7D2AD1}{+0.9}$ & $94.9$\,{\scriptsize\color{gray}$\pm\,0.1$} & $94.4$\,{\scriptsize\color{gray}$\pm\,0.6$} & $95.0$\,{\scriptsize\color{gray}$\pm\,0.0$} & $\textcolor[HTML]{7D2AD1}{+0.5}$ \\

		\rowcolor{rowcolor}
		SPO & $11.7$\,{\scriptsize\color{gray}$\pm\,2.3$} & $10.4$\,{\scriptsize\color{gray}$\pm\,0.4$} & $\textcolor[HTML]{7D2AD1}{+1.3}$ & $30.0$\,{\scriptsize\color{gray}$\pm\,0.6$} & $27.5$\,{\scriptsize\color{gray}$\pm\,4.9$} & $25.5$\,{\scriptsize\color{gray}$\pm\,4.2$} & $\textcolor[HTML]{7D2AD1}{+2.5}$ \\

		TMC & $80.1$\,{\scriptsize\color{gray}$\pm\,8.5$} & $84.8$\,{\scriptsize\color{gray}$\pm\,0.2$} & $\textcolor[HTML]{D14D4D}{-4.7}$ & $88.3$\,{\scriptsize\color{gray}$\pm\,4.2$} & $87.8$\,{\scriptsize\color{gray}$\pm\,4.5$} & $87.7$\,{\scriptsize\color{gray}$\pm\,4.6$} & $\textcolor[HTML]{7D2AD1}{+0.5}$ \\

		\rowcolor{rowcolor}
		AMP & $28.7$\,{\scriptsize\color{gray}$\pm\,1.0$} & $19.8$\,{\scriptsize\color{gray}$\pm\,1.4$} & $\textcolor[HTML]{7D2AD1}{+8.9}$$^\star$ & $37.6$\,{\scriptsize\color{gray}$\pm\,2.6$} & $29.7$\,{\scriptsize\color{gray}$\pm\,2.0$} & $32.3$\,{\scriptsize\color{gray}$\pm\,0.6$} & $\textcolor[HTML]{7D2AD1}{+7.9}$$^\star$ \\

		Promoter & $56.2$\,{\scriptsize\color{gray}$\pm\,14.9$} & $52.4$\,{\scriptsize\color{gray}$\pm\,4.8$} & $\textcolor[HTML]{7D2AD1}{+3.8}$ & $86.4$\,{\scriptsize\color{gray}$\pm\,0.4$} & $73.1$\,{\scriptsize\color{gray}$\pm\,10.9$} & $70.5$\,{\scriptsize\color{gray}$\pm\,13.1$} & $\textcolor[HTML]{7D2AD1}{+13.3}$ \\
		\bottomrule[1.5pt]
	\end{tabular}
	}
\end{table*}

\noindent \textbf{Per-task view of evidence transfer.}
Table~\ref{tab:ablation_sharing_arms} expands Figure~\ref{fig:ablation_sharing_arms} (Section~\ref{subsec:ablation_study}) into exact numbers: best-of-seed SQ for all three arms (shared, Isolated, Independent) and worst-branch SQ for the two coordinated arms, with $\Delta$ the shared-minus-Isolated margin. Figure~\ref{fig:ablation_sharing_arms_systemic} shows the same comparison with mean branch-best SQ, to see whether sharing lifts the whole branch population rather than only the champion lineage.

\begin{figure}[h!]
	\centering
	\includegraphics[width=0.55\textwidth]{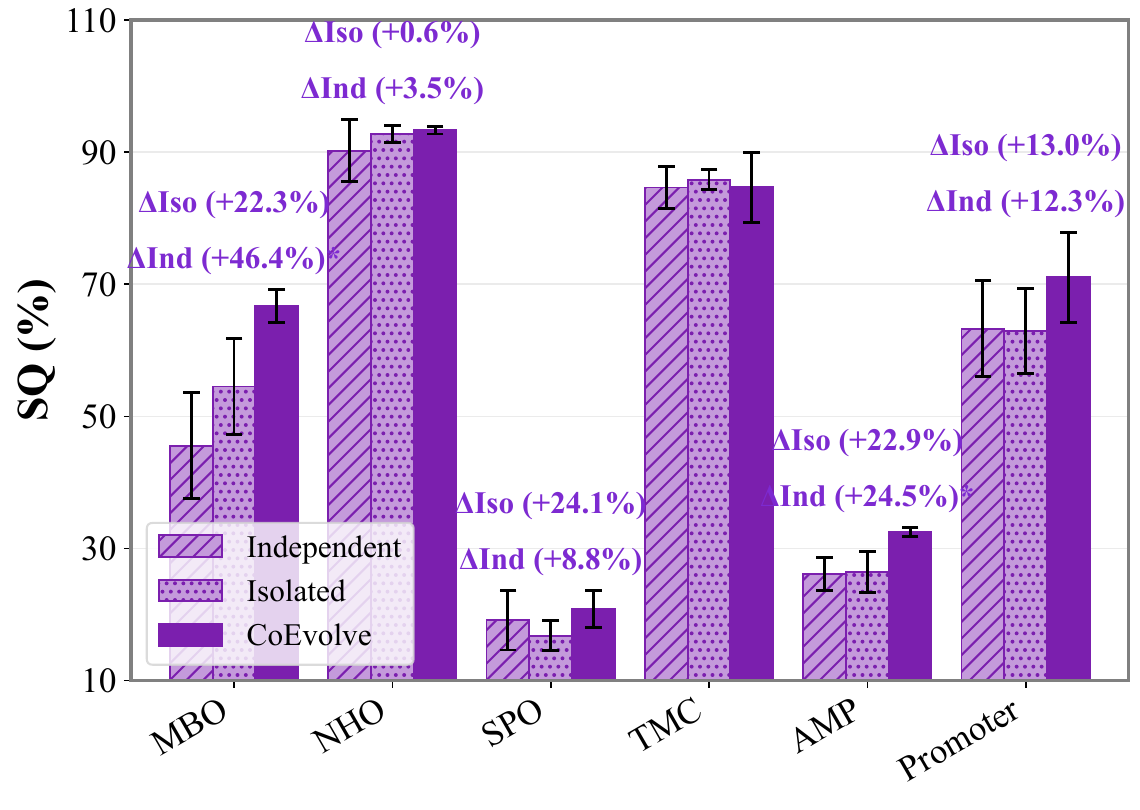}
	\caption{\textbf{Marginal value of evidence sharing at the systemic level.} 
    }
	\label{fig:ablation_sharing_arms_systemic}
\end{figure}

On two tasks the Independent--Isolated--\method ordering is not monotone. On AMP the Independent arm ($32.3 \pm 0.6$) edges out Isolated ($29.7 \pm 2.0$) within the seed noise, while the shared arm remains clearly best ($37.6 \pm 2.6$; Welch's $t$, $p{<}0.05$): coordination without sharing neither helps nor hurts there, and the entire gain comes from the evidence flow itself. 

NHO is the opposite extreme: a compact, exploitation-type landscape (a seven-dimensional continuous search space with a smooth surrogate) in which every arm saturates the same ceiling ($94.4$--$95.0$ SQ), so little complementary coverage remains to transfer and the Independent arm nominally matches \method ($95.0$ vs $94.9$, within noise).

On NHO, coordination's remaining benefit is convergence speed: in the dedicated NHO campaign both arms (\method and Independent) reach the identical ceiling ($g$-factor $1.90$) but \method converges $1.5\times$ faster in wall-clock ($120$ vs $184$ minutes), and the same signature appears on the Terra backbone as the $+10.5$ equal-time gap of Figure~\ref{fig:efficiency}~(c). On such compact landscapes, collaboration shows up as efficiency.

\noindent \textbf{Ablation of modules in \method.}
Table~\ref{tab:ablation_module_knockouts} gives the exact numbers behind the module ablation of \cref{subsec:ablation_study} (Figure~\ref{fig:ablation_module_knockouts}): each row disables exactly one transfer safeguard, and the full-system reference row is the main-table arm. Off discounting or redundancy control degrades both portfolio and worst-branch quality. Disabling the VoI gate admits over $3\times$ as many imports and degrades two of three seeds with markedly higher variance. This is consistent with the gate protecting against rare harmful imports, not average-case ones (see Section~\ref{sec:methodology}).

\begin{table}[h!]
	\centering
	\caption{\textbf{Ablation of modules in \method.}}
	\label{tab:ablation_module_knockouts}
	\small
	\setlength{\tabcolsep}{3.5pt}
	\resizebox{0.8\columnwidth}{!}{\begin{tabular}{l l cc c c c}
		\toprule[1.5pt]
		\textbf{Arm} & \textbf{knocked-out module} & \textbf{SQ} & \textbf{worst-br.} & $\Delta$\textbf{SQ} & \textbf{imports} & $\bar\alpha$ \\
		\midrule

		\rowcolor{rowcolor}
		\method (full) & all modules on (main-table arm) & $77.1$\,{\scriptsize\color{gray}$\pm\,4.6$} & $54.8$\,{\scriptsize\color{gray}$\pm\,4.7$} & -- & 42 & $0.23$ \\

		$-$ VoI gate & safe-VoI gate off ($\eta{\to}-\infty$): every routed import accepted & $75.0$\,{\scriptsize\color{gray}$\pm\,11.1$} & $52.8$\,{\scriptsize\color{gray}$\pm\,15.3$} & $\textcolor[HTML]{D14D4D}{-2.1}$ & 137 & $0.24$ \\

		\rowcolor{rowcolor}
		$-$ discounting & $\alpha{\equiv}1$: imports applied at full weight & $58.1$\,{\scriptsize\color{gray}$\pm\,4.4$} & $43.3$\,{\scriptsize\color{gray}$\pm\,9.5$} & $\textcolor[HTML]{D14D4D}{-19.0}$ & 58 & $1.00$ \\

		$-$ redundancy control & $R_{\max}{=}1$: redundancy cap disabled & $67.5$\,{\scriptsize\color{gray}$\pm\,4.4$} & $51.5$\,{\scriptsize\color{gray}$\pm\,7.8$} & $\textcolor[HTML]{D14D4D}{-9.6}$ & 34 & $0.23$ \\

		\rowcolor{rowcolor}
		$-$ sharing (Isolated) & coordination on, zero evidence flow & $63.5$\,{\scriptsize\color{gray}$\pm\,7.9$} & $45.3$\,{\scriptsize\color{gray}$\pm\,10.0$} & $\textcolor[HTML]{D14D4D}{-13.6}$ & 0 & -- \\
		\bottomrule[1.5pt]
	\end{tabular}
	}
	\vspace{-7pt}
\end{table}

Table~\ref{tab:sharing_diagnostics} reports the transfer-effects analysis of \cref{subsec:ablation_study}: the per-branch cost of budget splitting and the import-level quantities that show whether cross-branch evidence sharing actually flowed, was deduplicated, and is associated with observed quality gains.

\begin{table*}[h!]
	\centering
	\caption{\textbf{Ablation of evidence transfer with net value.}
    \emph{accepted} $=$ evidence-node$\times$branch applications; \emph{net unique} $=$ distinct evidence hashes delivered; \emph{dedup} $=$ deliveries per unique evidence (delivery redundancy); \emph{uplift} $=$ pooled-best SQ gain over evaluation windows containing a delivery, averaged over seeds: a window-level, associational statistic (deliveries are value-selected, so uplift does not isolate a causal transfer effect), unlike the branch-level, immediate \emph{inert} rate, with which it need not agree; \emph{inert} $=$ fraction of accepted imports with zero immediate running-best change. TMC and Promoter aggregate only $2$ and $1$ accepted imports. All import columns aggregate the three main-table seeds.}
	\label{tab:sharing_diagnostics}
	\small
	\setlength{\tabcolsep}{4pt}
	\resizebox{0.8\textwidth}{!}{\begin{tabular}{l ccc cc c c c}
		\toprule[1.5pt]
		& \multicolumn{3}{c}{\textbf{Per-branch SQ (\%)}} & \multicolumn{2}{c}{\textbf{Imports}} & & & \\
		\cmidrule(lr){2-4} \cmidrule(lr){5-6}
		\textbf{Task} & \method worst-branch & PiEvo & $\Delta$ & accepted & net unique & \textbf{dedup} & \textbf{uplift (SQ pts)} & \textbf{inert} \\
		\midrule

		\rowcolor{rowcolor}
		MBO & $54.8$\,{\scriptsize\color{gray}$\pm\,4.7$} & $61.8$\,{\scriptsize\color{gray}$\pm\,2.7$} & $\textcolor[HTML]{D14D4D}{-7.0}$ & 120 & 99 & $1.26\times$ & $+15.41$ & $80\%$ \\

		NHO & $90.8$\,{\scriptsize\color{gray}$\pm\,2.0$} & $94.9$\,{\scriptsize\color{gray}$\pm\,0.1$} & $\textcolor[HTML]{D14D4D}{-4.1}$ & 19 & 17 & $1.29\times$ & $+0.15$ & $89\%$ \\

		\rowcolor{rowcolor}
		SPO & $11.7$\,{\scriptsize\color{gray}$\pm\,2.3$} & $10.6$\,{\scriptsize\color{gray}$\pm\,0.0$} & $\textcolor[HTML]{7D2AD1}{+1.1}$ & 6 & 6 & $1.00\times$ & $+3.64$ & $67\%$ \\

		TMC & $80.1$\,{\scriptsize\color{gray}$\pm\,8.5$} & $84.7$\,{\scriptsize\color{gray}$\pm\,0.0$} & $\textcolor[HTML]{D14D4D}{-4.6}$ & 2 & 2 & $2.50\times$ & $+43.29$ & $100\%$ \\

		\rowcolor{rowcolor}
		AMP & $28.7$\,{\scriptsize\color{gray}$\pm\,1.0$} & $31.4$\,{\scriptsize\color{gray}$\pm\,5.8$} & $\textcolor[HTML]{D14D4D}{-2.7}$ & 96 & 80 & $1.26\times$ & $+25.41$ & $81\%$ \\

		Promoter & $56.2$\,{\scriptsize\color{gray}$\pm\,14.9$} & $54.4$\,{\scriptsize\color{gray}$\pm\,6.3$} & $\textcolor[HTML]{7D2AD1}{+1.8}$ & 1 & 1 & $1.00\times$ & $+0.00$ & $100\%$ \\

		\midrule
		\rowcolor{rowcolor}
		\bfseries Total / Avg. & & & & 244 & 205 & & & \\
		\bottomrule[1.5pt]
	\end{tabular}
	}
	\vspace{-7pt}
\end{table*}

\newpage
\section{Statistical Analysis}
\label{app:stats}

All experiments run three seeds per cell, which limits the power of any single per-task test. We therefore (i) base the headline comparisons on aggregate paired tests over the 18 task--seed pairs of the GPT-5.6-Terra backbone, where per-seed trajectories are logged for both \method and PiEvo, and (ii) report per-task tests for completeness.

\noindent \textbf{Paired per-seed tests (\method vs PiEvo, Terra backbone).}
Table~\ref{tab:paired_tests} lists the per-seed SQ differences. Aggregating across tasks, the paired gain is positive in 15 of the 17 non-tied pairs (two-sided sign test $p{=}0.002$; counting the exact tie against \method, $p{=}0.008$; Wilcoxon signed-rank $p{=}0.001$). The wall-clock speedup exceeds $1\times$ in 17 of 18 pairs (sign test $p{=}1.5\times10^{-4}$), and the equal-time gap $\Delta\mathrm{SQ}@T_c$ is positive in 15 of 17 non-tied pairs ($p{<}0.01$; Wilcoxon $p{=}8\times10^{-4}$).

\begin{table}[h!]
	\centering
	\caption{\textbf{Paired per-seed SQ differences, \method minus PiEvo (GPT-5.6-Terra).} Two-sided paired $t$ test per task over three seeds; the aggregate row pools all 18 task--seed pairs.}
	\label{tab:paired_tests}
	\small
	\setlength{\tabcolsep}{6pt}
	\begin{tabular}{lcc}
		\toprule
		\textbf{Task} & \textbf{Per-seed SQ differences} & \textbf{Paired $t$ $p$-value} \\
		\midrule
		MBO & $+5.2,\ +6.6,\ +39.3$ & $0.266$ \\
		NHO & $+9.4,\ +6.8,\ +6.9$ & $0.012$ \\
		SPO & $-0.1,\ +14.5,\ +0.3$ & $0.415$ \\
		TMC & $+2.5,\ \phantom{+}0.0,\ +1.8$ & $0.194$ \\
		AMP & $+2.0,\ +5.0,\ +5.0$ & $0.057$ \\
		Promoter & $+2.9,\ -2.8,\ +8.0$ & $0.481$ \\
		\midrule
		Aggregate (18 pairs) & 15 positive, 2 negative, 1 tie & sign $p{<}0.01$; Wilcoxon $p{=}0.001$ \\
		\bottomrule
	\end{tabular}
\end{table}

\noindent \textbf{Summary-statistic Welch tests.}
Table~\ref{tab:welch_tests} reports two-sided Welch $t$ tests computed from the rounded mean $\pm$ std of Table~\ref{tab:main_compare_8tasks} ($n{=}3$ per cell), for guidance; the underlying per-seed values accompany the code release. Against PiEvo, 4 of 12 task--backbone pairs reach $p{<}0.05$; under a Bonferroni correction over the 12 pairs ($\alpha{=}0.0042$), SPO (Gemma, $p{=}3\times10^{-4}$) and NHO (Terra, $p{=}0.004$) survive. We therefore state the closed-world claim at the level of best average SQ with the aggregate paired test above, not per-task superiority on every task.

\begin{table}[h!]
	\centering
	\caption{\textbf{Welch $t$ tests on final SQ from the summary statistics of Table~\ref{tab:main_compare_8tasks}} (two-sided, $n{=}3$ per cell, computed from rounded means and stds). $^\star$\,$p{<}0.05$; $^{\star\star}$\,survives Bonferroni over the 12 pairs.}
	\label{tab:welch_tests}
	\small
	\setlength{\tabcolsep}{5pt}
	\begin{tabular}{lcccc}
		\toprule
		& \multicolumn{2}{c}{\textbf{vs PiEvo}} & \multicolumn{2}{c}{\textbf{vs InternAgent-1.5}} \\
		\cmidrule(lr){2-3} \cmidrule(lr){4-5}
		\textbf{Task} & \textbf{Gemma} & \textbf{Terra} & \textbf{Gemma} & \textbf{Terra} \\
		\midrule
		MBO & $0.013^\star$ & $0.263$ & $0.005^\star$ & $0.453$ \\
		NHO & $1.000$ & $0.004^{\star\star}$ & $0.300$ & $0.386$ \\
		SPO & $3{\times}10^{-4}\,^{\star\star}$ & $0.414$ & $2{\times}10^{-4}\,^{\star\star}$ & $0.501$ \\
		TMC & $0.276$ & $0.184$ & $0.029^\star$ & $0.445$ \\
		AMP & $0.197$ & $0.058$ & $0.011^\star$ & $0.085$ \\
		Promoter & $0.012^\star$ & $0.526$ & $0.660$ & $0.324$ \\
		\bottomrule
	\end{tabular}
\end{table}

\noindent \textbf{Harness regime.}
On each of the five autoresearch tasks, \method's mean improvement is significantly above the anchor (one-sample $t$ against zero: ALDE $p{=}0.003$, Deconv $p{=}2\times10^{-4}$, InvScat $p{=}0.022$, D2D $p{<}10^{-4}$, MolEdit $p{=}0.006$). Pairwise differences against individual reference arms are directionally uniform but underpowered at three seeds (against PiEvo, $p$ ranges $0.07$ to $0.40$); the interval argument of \cref{subsec:harness}, the worst \method case matching or exceeding every other arm's best case on all five tasks, is the primary evidence there.

\newpage
\section{Harness-Regime Experiment Design}
\label{sec:harness_design}

\subsection{Tasks and baselines}

\noindent \textbf{Tasks and score.}
This regime uses a second suite of five research-workflow tasks sampled from AutoresearchEval~\citep{fei2026autoresearcheval}, disjoint from the closed-world optimization benchmarks of \cref{subsec:experiment_setup}: each task asks for a \emph{method artifact}: an optimizer, a pipeline, or a predictor, not a single candidate, and each ships a published SOTA anchor and a sealed grader running on held-out data:
\begin{enumerate}[leftmargin=16pt]
    \item \textbf{protein-landscape active learning} (ALDE), submitting an optimizer that allocates a 480-measurement screening budget over two four-site protein fitness landscapes;
    \item \textbf{bulk deconvolution} (Deconv), predicting tumor-microenvironment cell-type proportions for real breast-cancer bulk RNA-seq from a disjoint single-cell reference;
    \item \textbf{inverse scattering} (InvScat), reconstructing dielectric maps from complex scattered fields under two receiver-density instances;
    \item \textbf{D2D scheduling} (D2D), activating device-to-device links in a geographic interference network to maximize sum rate; and
    \item \textbf{molecule editing} (MolEdit), editing molecular graphs under chemistry-validity constraints to optimize penalized logP, QED, and similarity-constrained objectives.
\end{enumerate}
The grader aggregates per-instance improvements over the published anchor; we report $\Delta\times100$, where $\Delta>0$ means the best submitted artifact surpasses the anchor and $\Delta<0$ falls short of it.

\noindent \textbf{Budget contract and outcome source.}
The evaluation budget is enforced server-side: the evaluation service counts every request, scored or not, and rejects calls beyond the evaluation budget. All outcomes are taken from the grader's responses; numbers self-reported by the agents are discarded.

\noindent \textbf{Pre-registered task explorativeness.}
Before any run, we ordered the five tasks by an ordinal task-explorativeness index (TEI) read from the published task artifacts alone, including task text, anchor, and grader, never run behavior: \emph{free-oracle throughput} tasks (D2D, MolEdit), where a free local oracle governs the score, lowest; \emph{recipe-saturated} tasks (ALDE), where an established recipe leaves little headroom above the anchor; and \emph{open method spaces} (Deconv, InvScat), where no canonical pipeline dominates, highest. Appendix~\ref{subsec:harness_tei} tests whether the collaboration margin follows this pre-registered ordering.

\noindent \textbf{Harness substrate.}
All methods run on one substrate: a containerized coding agent (with \texttt{DeepSeek-v4-Flash-0731} model as backbone) with read-only task data, an ephemeral workspace, and a scientific Python environment. The executor endpoint is pinned and verified at launch, so no method can fall back to a different model, and every method sees the identical task description. The methods differ only in who decides the next experiment:
\begin{enumerate}[leftmargin=16pt, label=(\alph*)]
    \item \textbf{Agent-only} (Claude Code): the agent receives the task once and thereafter acts autonomously; the driver only harvests submissions, nudges toward budget completion, and restarts stalled sessions;
    \item \textbf{Agent-only} (Codex): the same protocol under a second coding agent, testing whether conclusions depend on one particular harness;
    \item \textbf{Agent-only} (Arbor~\citep{jin2026arbor}): an external autonomous research-harness method that self-verifies against a free local oracle.
    \item \textbf{PiEvo}~\citep{Pu2026PrincipleEvolvableSD}: the PiEvo strategy layer runs driver-side and injects one stage-guidance message (principle, hypothesis, or experiment) per turn while the same agent executes it in the container; the strategy layer is identical across arms (d) and (e), and its model is distinct from the executor model;
    \item \textbf{\method}: $K{=}3$ branches of arm (d), under a server-side-enforced total of $72$ evaluations per run.
\end{enumerate}

\subsection{Measurements}
\label{subsec:harness_measurement}

\noindent \textbf{Wall-clock and token accounting.}
Wall-clock time spans a run's first to last scored evaluation. Tokens are parsed from the transcripts (uuid-deduplicated) in four channels; \emph{billable} $=$ input $+$ output $+$ cache-creation, and the \emph{all-in} count adds cache-read, we report both. Table~\ref{tab:harness_tokens} gives the per-task means; \method spends $1.55\times$ PiEvo's billable tokens on average, with ALDE below $1\times$ because its branches converge early and exhaust the budget sooner.

\begin{table}[h!]
	\centering
	\caption{\textbf{Token accounting in the harness regime} (three-seed means, millions of tokens).}
	\label{tab:harness_tokens}
	\small
	\setlength{\tabcolsep}{5pt}
	\begin{tabular}{lccccc}
		\toprule
		& \multicolumn{2}{c}{\textbf{Billable}} & & \multicolumn{2}{c}{\textbf{All-in}} \\
		\cmidrule(lr){2-3} \cmidrule(lr){5-6}
		\textbf{Task} & \method & PiEvo & \textbf{ratio} & \method & PiEvo \\
		\midrule
		ALDE & $7.5$ & $14.5$ & $0.52\times$ & $350.4$ & $339.1$ \\
		Deconv & $95.4$ & $37.6$ & $2.53\times$ & $181.5$ & $90.3$ \\
		InvScat & $89.9$ & $46.3$ & $1.94\times$ & $363.8$ & $136.6$ \\
		D2D & $11.8$ & $8.8$ & $1.35\times$ & $331.4$ & $192.4$ \\
		MolEdit & $23.3$ & $16.6$ & $1.40\times$ & $628.6$ & $237.1$ \\
		\midrule
		Mean ratio & & & $1.55\times$ & & \\
		\bottomrule
	\end{tabular}
\end{table} 

\noindent \textbf{Normalization for merged curves.}
Cross-task anytime plots min--max normalize scores per task, with $0$ at the task anchor and $1$ at the best final value, so a point on the merged curve reads as the fraction of the task's observed best attained at a given wall-clock or token spend.

\subsection{Efficiency in the Harness Regime}
\label{subsec:harness_efficiency}

\noindent \textbf{Equal-time quality gap compared to PiEvo~\citep{Pu2026PrincipleEvolvableSD}.}
$\Delta@T_c$ reads every PiEvo run's running best at each \method run's completion time $T_c$ from per-evaluation timestamps and averages the difference. It is positive on all five tasks: ALDE $+8.9\%$, Deconv $+31.2\%$, InvScat $+18.4\%$, D2D $+0.1\%$, MolEdit $+3.0\%$ points, mean $+12.3\%$, and is never smaller than the budget-end gap, as shown in Figure~\ref{fig:harness_equal_time_gap}.

\begin{figure}[h!]
	\centering
	\includegraphics[width=0.55\textwidth]{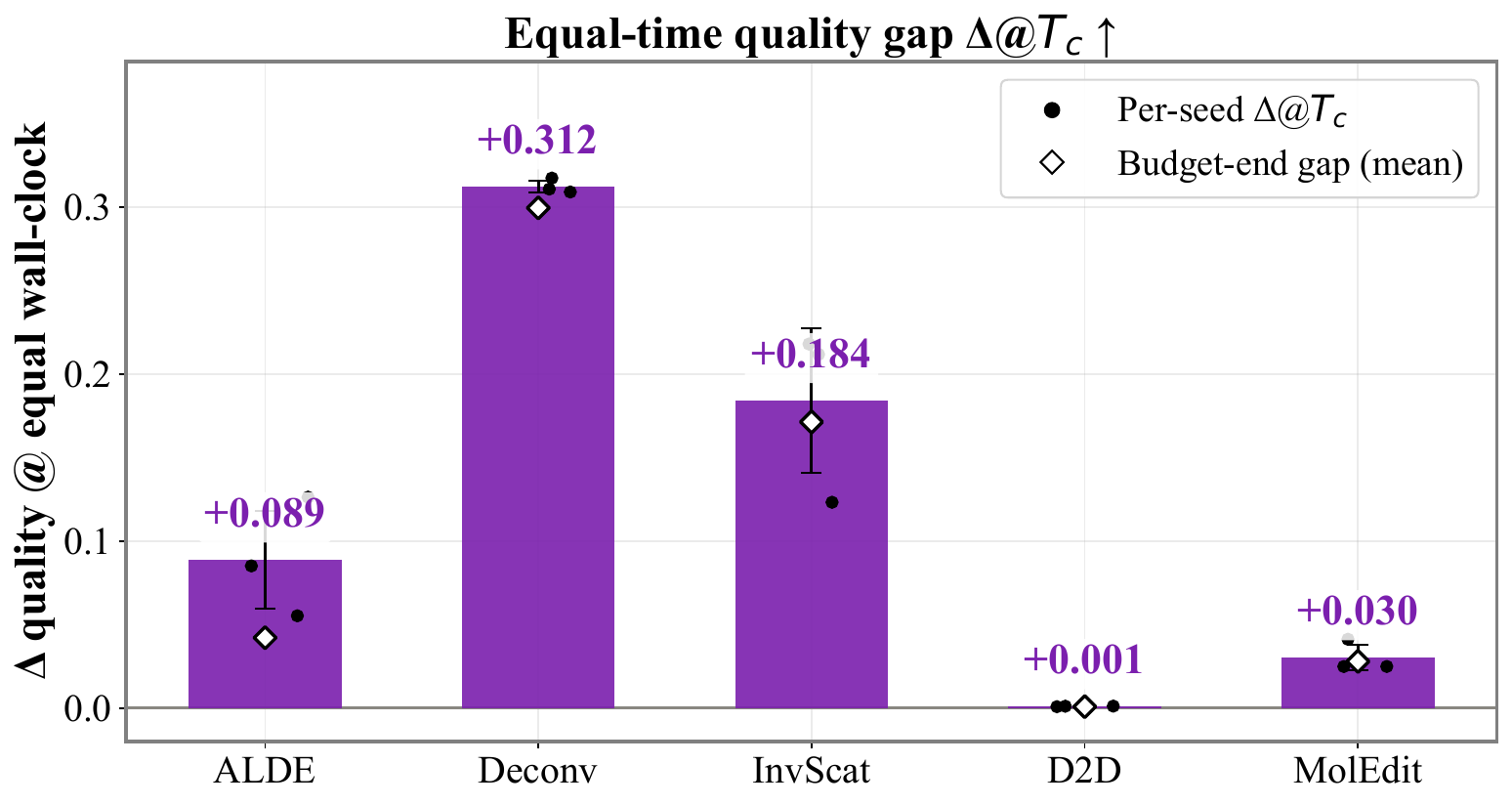}
	\caption{\textbf{Equal-time quality gap in the harness regime.} Black dots: per-seed values; white diamonds: the budget-end gap. The equal-time reading is positive on all five tasks and never below the budget-end gap.}
	\label{fig:harness_equal_time_gap}
\end{figure}

\noindent \textbf{Time-to-threshold and wall-clock compared with PiEvo~\citep{Pu2026PrincipleEvolvableSD}.}
Let $q^*$ be PiEvo's three-seed mean final quality on a task and TTT the hours until a run's running best first reaches $q^*$. All fifteen \method runs reach $q^*$ (zero censoring). Against PiEvo's own time to reach $q^*$ (self-TTT), the per-task median ratio $T_{\mathrm{TTT}}^{\mathrm{PiEvo}}/T_{\mathrm{TTT}}^{\method}$ is $0.9$/$1.2$/$0.8$/$1.3$/$1.9\times$ on ALDE/Deconv/InvScat/D2D/MolEdit (Table~\ref{tab:harness_ttt}). \method is faster on D2D and MolEdit, tied on Deconv, slightly slower on ALDE, and slowest on InvScat ($15.2$ vs $11.9$ hours), where its branches remain below the anchor for most of the budget before a late jump, the breadth-first pattern of \cref{subsec:harness_trajectory}.

\begin{table}[h!]
\centering
\caption{
    \textbf{Wall-clock conversion speed per task in the harness regime.}
    $q^*$ = PiEvo's mean final quality (raw anchor-relative $\Delta$; multiply by $100$ for the scale of Table~\ref{tab:harness_compare}); $T$ = end-to-end wall-clock (h); TTT = per-run hours until the run's running best first reaches $q^*$. \method's self-TTT ratio against PiEvo is $0.8\times$--$1.9\times$, and its end-to-end wall-clock is shorter on three of five tasks.
    }
    \label{tab:harness_ttt}
    \resizebox{0.8\columnwidth}{!}{
        \begin{tabular}{lccccccc}
        \toprule
        Task & $q^*$ & $T_{\mathrm{PiEvo}}$ & $T_{\methodm}$ & $T_{\mathrm{PiEvo}}/T_{\methodm}$ & TTT$_{\mathrm{PiEvo}}$ & TTT$_{\methodm}$ & ratio \\
        \midrule
		ALDE & +0.024 & 32.5 & 20.8 & 1.56$\times$ & 6.64 & 7.67 & 0.9$\times$ \\
		Deconv & +0.154 & 22.4 & 12.1 & 1.85$\times$ & 1.38 & 1.11 & 1.2$\times$ \\
		InvScat & -0.053 & 47.7 & 59.7 & 0.80$\times$ & 11.87 & 15.17 & 0.8$\times$ \\
		D2D & +0.154 & 31.7 & 38.0 & 0.83$\times$ & 17.05 & 13.45 & 1.3$\times$ \\
		MolEdit & +0.013 & 45.9 & 30.8 & 1.49$\times$ & 11.52 & 6.03 & 1.9$\times$ \\
        \bottomrule
        \end{tabular}
    }
\end{table}

\subsection{Trajectory Pattern}
\label{subsec:harness_trajectory}

\noindent \textbf{Breadth-first, then lead in later stage.}
\method leads at the later stage on all five tasks, but not throughout, as shown in Figure~\ref{fig:harness_anytime_evals}. Per-evaluation AUOC agrees: \method is best on ALDE (the only positive value, $+0.006$), Deconv ($0.212$, $4\times$ the runner-up), and InvScat (highest at $-0.090$), ties PiEvo on D2D ($0.1268$ vs $0.1272$, within seed noise, where Arbor is best at $0.1447$), and trails PiEvo on MolEdit.

\begin{figure}[h!]
	\centering
	\includegraphics[width=\textwidth]{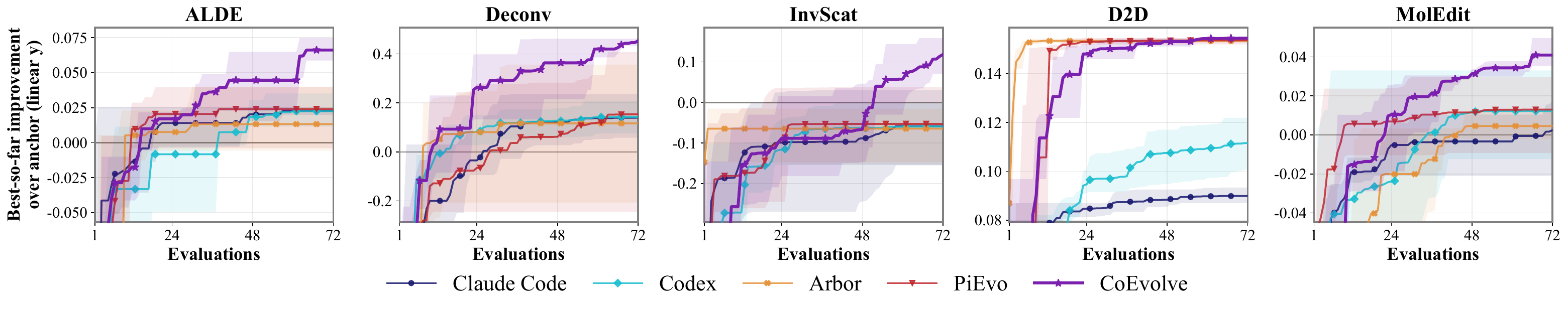}
	\caption{\textbf{Running-best $\Delta$ over the 72-evaluation budget}.}
	\label{fig:harness_anytime_evals}
\end{figure}

\begin{figure}[htbp]
	\centering
	\begin{subcaptionblock}{0.8\textwidth}
		\includegraphics[width=\textwidth]{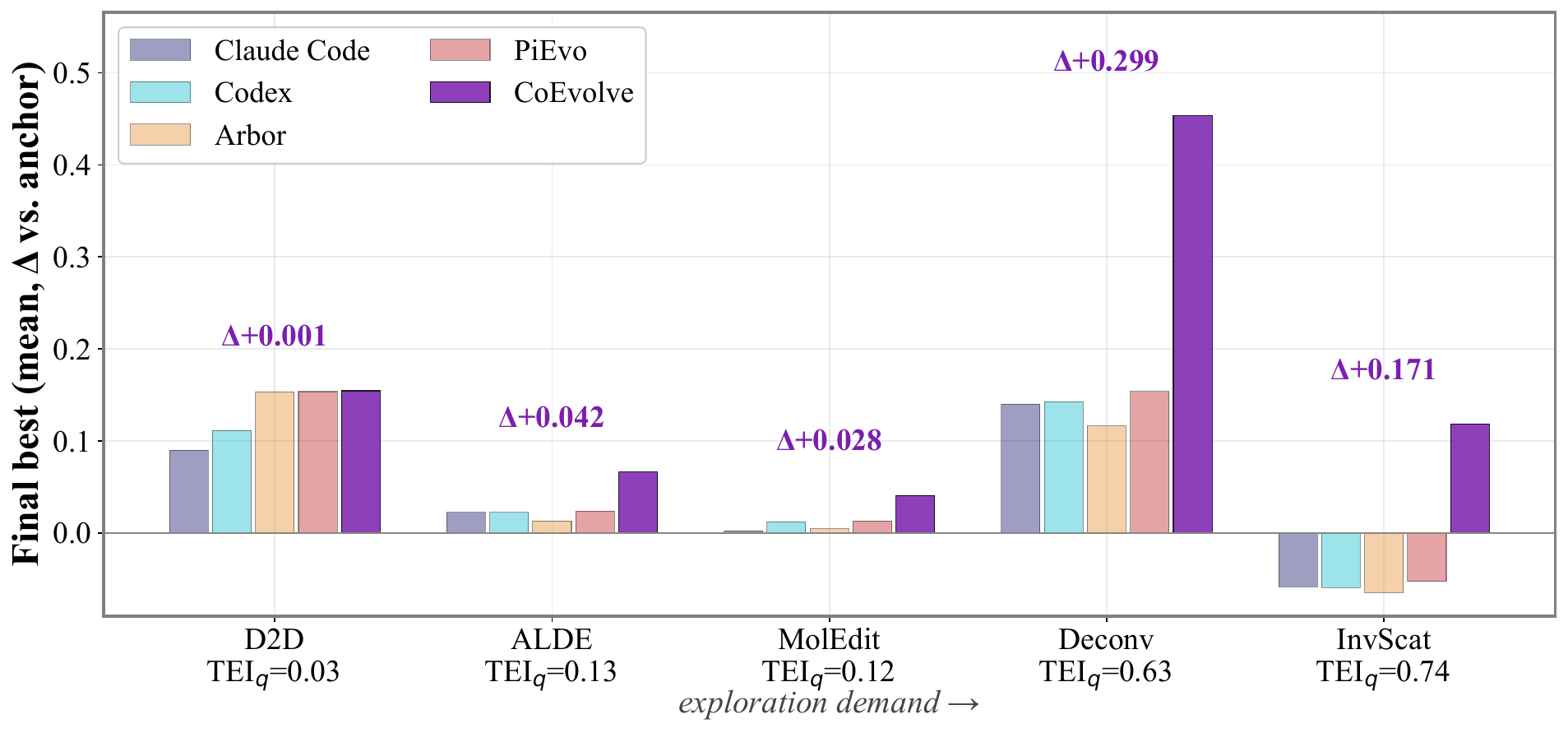}
		\caption{Final $\Delta$ by task, ordered by exploration demand.}
		\label{fig:harness_tei}
	\end{subcaptionblock}

	\vspace{26pt}

	\begin{subcaptionblock}{0.65\textwidth}
		\includegraphics[width=\textwidth]{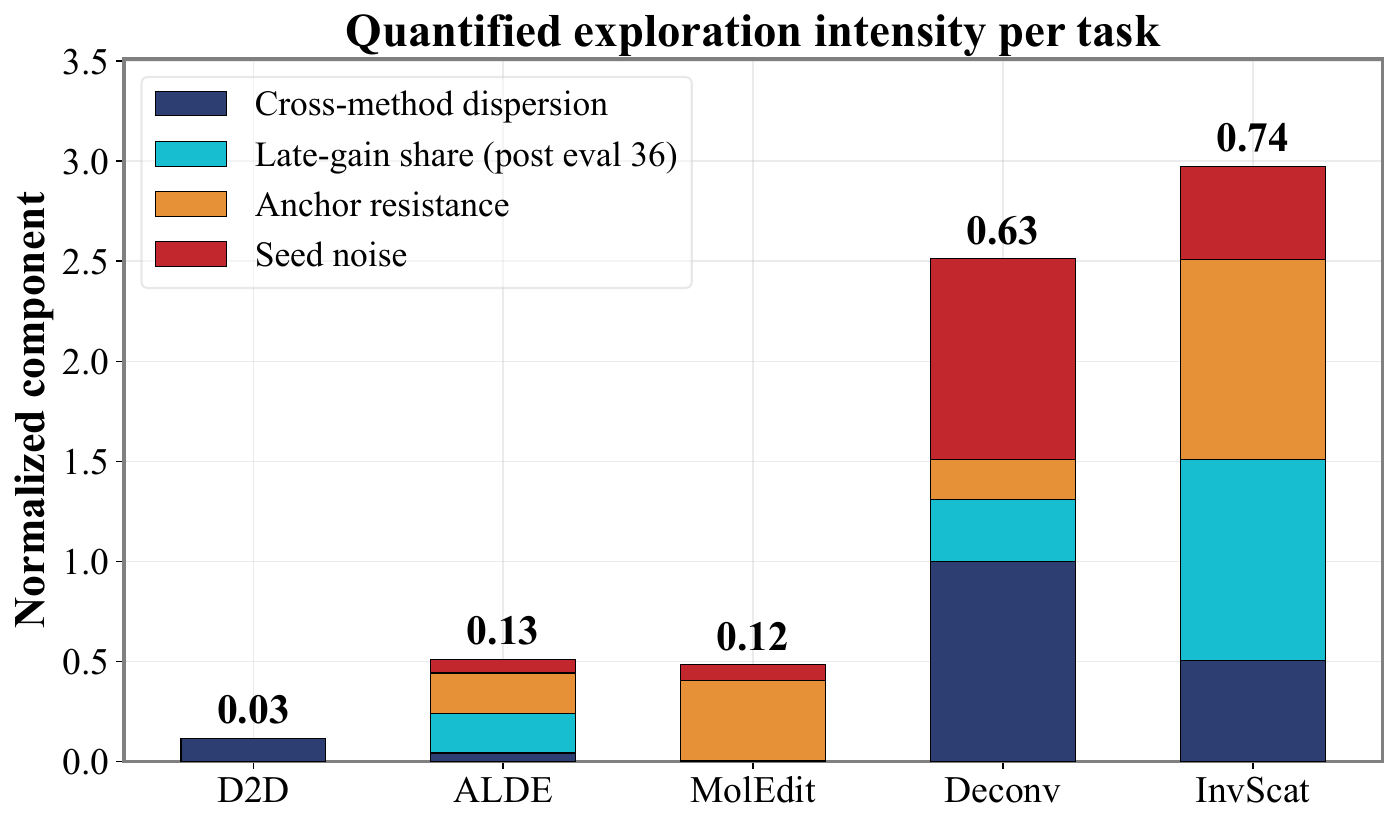}
		\caption{Quantified TEI from four behavioral proxies (illustration).}
		\label{fig:harness_tei_components}
	\end{subcaptionblock}
	\caption{\textbf{Task explorativeness moderates the collaboration margin.}}
	\label{fig:harness_tei_moderation}
\end{figure}

\subsection{Task Explorativeness Analysis}
\label{subsec:harness_tei}

The pre-registered ordering (\cref{sec:harness_design}) predicts that the collaboration margin grows with task explorativeness.

Figure~\ref{fig:harness_tei_moderation}(a) orders the five tasks by increasing exploration demand and compares each arm's final $\Delta$: the four reference arms stay bunched on the free-oracle task (D2D) and collapse toward or below the anchor as demand grows---all four are negative on InvScat---while \method alone stays positive throughout, peaking on the open method space (Deconv) and remaining the unique positive arm on InvScat.
The margin over the runner-up follows the predicted order: $+0.1$/$+2.8$ points on the free-oracle tasks (D2D/MolEdit), $+4.2$ on the recipe-saturated task (ALDE), and $+30.0$/$+17.2$ on the open method spaces (Deconv/InvScat).

Figure~\ref{fig:harness_tei_moderation}(b) quantifies the same ordering from run behavior---four proxies pooled over all runs per task, min--max normalized across tasks and averaged---yielding TEI$_q$ of $0.03/0.13/0.12/0.63/0.74$.
This quantification reads run behavior and is therefore an illustration of, not a replacement for, the pre-registered index.

\begin{remark}
    On free-oracle tasks, as shown in Table~\ref{tab:harness_compare}, throughput converts directly into score and coordination cannot add throughput, whereas open method spaces reward crossing into a better method family, which is what evidence transfer supplies.
\end{remark}

\newpage
\newpage\section{Benchmark Details}
\label{sec:benchmark}

\subsection{Design Principles}
\label{subsec:benchmark_design}

The benchmark suite serves a specific purpose: comparing budget-constrained search architectures. Two design requirements follow.

\noindent \textbf{Every task must be searchable, not memorizable.} A comparison between arms that spend the same 72-query surrogate budget is interpretable only if no arm can match search-attained quality by recalling a textbook answer from parametric memory. The suite enforces this by construction, through one of three mechanisms per task:
\begin{enumerate}[label=\alph*., leftmargin=16pt]
    \item \emph{Banned-answer tasks} (SPO, Promoter, AMP): an admission gate excludes the canonical high-scoring solution family---the record and textbook superconducting backbones in SPO, the yeast GRF/GC-box promoter architecture in Promoter, cationic amphipathic peptides in AMP---so the memorizable answer is unreachable, not merely discouraged.
    \item \emph{Surrogate-defined optima} (NHO, MBO): the objective is the argmax of a specific surrogate's response surface, which appears in no publication and therefore cannot be recalled; domain knowledge supplies priors (drug-likeness heuristics, chiral-geometry intuition) but not the answer, and MBO's gate additionally closes the one channel by which memorized motifs could inflate the score without genuine binding (PAINS assay-interference artifacts).
    \item \emph{Pool-computed optimum} (TMC): the optimum is an explicit computation over a fixed 49-ligand pool under charge neutrality, not a named entity that can be recited. The residual spaces remain scientifically non-trivial: the motif-free Promoter landscape still spans $\sim$7.7--14.7 of $17$, and the admissible SPO families retain electron-doped basins at 80--95\,K (\cref{subsec:promoter_task_benchmark,subsec:spo_task_benchmark}).
\end{enumerate}

\noindent \textbf{Every score must be an admission-gated physical property.} 
All tasks couple the LLM proposal interface to the domain surrogate of the originating study (regression models, semi-empirical physical models, or universal machine-learning force fields; cited per task below). Candidates enter the trajectory only after passing an anti-reward-hacking gate whose components apply as relevant per task: 
\begin{enumerate}[label=(\alph*), leftmargin=16pt]
    \item format and chemical validity;
    \item the task's counterfactual constraint;
    \item sequence complexity on the sequence-design tasks;
    \item a physical plausibility ceiling (oracle clamping)
\end{enumerate}

As an engineering control, it is expected to close the reward-hacking channels we identified for each task, and the per-task subsections enumerate each channel together with its check. The accounting distinguishes the two failure modes because they are different events: a \emph{gate rejection} is a search outcome, so it consumes one evaluation of the fixed budget, scores $0.0$, and can never become the running best, whereas a \emph{surrogate failure} is an infrastructure event, likewise zero-scored and excluded from best-of-seed computation. 

Inadmissible proposals are therefore paid for in opportunity cost and a run that admits no valid candidate scores $0.0$ (e.g., the zero-score cells of Table~\ref{tab:main_compare_8tasks}). Finally, the six tasks are chosen to span distinct search-space types~\citep{Pu2026PrincipleEvolvableSD}: continuous (NHO), molecular graph (MBO), peptide sequence (AMP), nucleotide sequence (Promoter), discrete combinatorial (TMC), and stoichiometric formula (SPO), and distinct landscape regimes, from the near-ceiling continuous task (NHO) through quality discrimination (MBO) and counterfactual boundary search (AMP, Promoter) to the bimodal SPO landscape. \Cref{tab:benchmark_summary} summarizes the six tasks; \cref{subsec:nho_task_benchmark}--\cref{subsec:spo_task_benchmark} give the scientific definition, search space, surrogate, and exact task specification for each.

\begin{table}[h!]
	\centering
	\caption{\textbf{Benchmark summary.} All tasks maximize the gated surrogate score. ``Anti-memorization gate'' = the per-task mechanism that prevents reaching the optimum from parametric memory (the three mechanisms of \cref{subsec:benchmark_design}); scores in native units. \textbf{Ref.\ scale} $= [y_{\mathrm{lo}}, y_{\mathrm{hi}}]$, the domain-defined absolute bounds of the task's score used to normalize SQ (\cref{subsec:evaluation_metrics}); they are fixed properties of the benchmark (physical laws, scale endpoints, or the originating benchmark's theoretical maximum), never derived from any method's runs. On NHO, AMP, Promoter, and TMC the scale is a mathematical or benchmark-defined bound, so $\mathrm{SQ} \le 100\%$ holds by construction; on MBO and SPO it is a domain-anchored reference far above every observed run (best observed: $77.1$ and $30.0$ SQ), so the bound holds empirically with wide margin.}
	\label{tab:benchmark_summary}
	\small
	\setlength{\tabcolsep}{3pt}
	\resizebox{\columnwidth}{!}{\begin{tabular}{llllll}
		\toprule
		\textbf{Task} & \textbf{Domain} & \textbf{Search space} & \textbf{Surrogate score} & \textbf{Anti-memorization gate} & \textbf{Ref.\ scale} \\
		\midrule
		NHO & nanophotonics & 7-dim continuous & $g$-factor $\in[0,2]$ & surrogate-defined optimum & $[0,\,2]$ \\
		MBO & bio-chemistry & SMILES (drug-like) & pChEMBL & surrogate-defined optimum; Lipinski + PAINS & $[0,\,12]$ \\
		AMP & biology & peptide 12--50 aa & AMP probability $\in[0,1]$ & net charge $\le 0$ (cationic recipe banned) & $[0,\,1]$ \\
		Promoter & biology & 50-nt DNA & expression bin $\in[0,17]$ & yeast GRF + GC-box banned & $[0,\,17]$ \\
		TMC & chemistry & 4-of-49 ligand comb. & polarizability ($\le 500$) & pool-computed optimum; charge neutrality, distinct ligands & $[0,\,500]$ \\
		SPO & materials & cuprate formulas & $T_c$ (K) & record/textbook backbones banned (Hg/Tl, BSCCO, YBCO) & $[0,\,298.15]$ \\
		\bottomrule
	\end{tabular}}
\end{table}

\noindent \textbf{Reference-scale design.} Normalizing raw scores by a dataset statistic (e.g.\ a training-set maximum) would make SQ a moving target that changes with the surrogate version, and normalizing by any method's best would destroy cross-paper comparability. We therefore anchor each task's scale endpoints in domain knowledge, in three tiers: 
\begin{enumerate}[label=(\alph*), leftmargin=16pt]
    \item \emph{Mathematical bounds of the score.} NHO's $g$-factor ceiling of $2$, AMP's probability bound $1$, and Promoter's expected value over the $18$ ordinal expression bins, bounded by $17$;
    \item \emph{The originating benchmark's theoretical maximum.} TMC's $500$, enforced at evaluation time by oracle clamping;
    \item \emph{Domain-anchored targets} that are references rather than bounds. MBO's pChEMBL $12$ (femtomolar binding, the practical affinity ceiling of the ChEMBL scale for reversible drug-like molecules~\citep{Gaulton2011ChEMBLAL}) and SPO's $298.15$\,K (the field's room-temperature target)
\end{enumerate}

Consequently $\mathrm{SQ} \le 100\%$ holds by construction on tiers (a)--(b) and empirically, with wide margin, on tier (c): the tier-(c) anchors sit far beyond every observed run (best observed: $77.1$ and $30.0$ SQ).

\begin{table}[h!]
	\centering
	\caption{\textbf{Final quality in native units.} Best-of-seed SQ means of Table~\ref{tab:main_compare_8tasks} multiplied by the reference scale of Table~\ref{tab:benchmark_summary}.}
	\label{tab:native_scores}
	\small
	\setlength{\tabcolsep}{5pt}
	\begin{tabular}{llcccc}
		\toprule
		& & \multicolumn{2}{c}{\textbf{Gemma-4-31B-IT}} & \multicolumn{2}{c}{\textbf{GPT-5.6-Terra}} \\
		\cmidrule(lr){3-4} \cmidrule(lr){5-6}
		\textbf{Task} & \textbf{Unit (scale)} & \method & PiEvo & \method & PiEvo \\
		\midrule
		NHO & $g$-factor ($2$) & $1.898$ & $1.898$ & $1.878$ & $1.724$ \\
		MBO & pChEMBL ($12$) & $9.25$ & $7.42$ & $8.56$ & $6.52$ \\
		AMP & probability ($1$) & $0.376$ & $0.314$ & $0.277$ & $0.238$ \\
		Promoter & expression bin ($17$) & $14.69$ & $9.25$ & $13.96$ & $13.50$ \\
		TMC & polarizability ($500$) & $441.5$ & $423.5$ & $465.0$ & $457.5$ \\
		SPO & $T_c$ in K ($298.15$) & $89.4$ & $31.6$ & $45.6$ & $31.0$ \\
		\bottomrule
	\end{tabular}
\end{table}

\subsection{Nanohelix Optical-Chirality Optimization (NHO)}
\label{subsec:nho_task_benchmark}

The NHO task~\citep{Pu2025PiFlowPS} optimizes the geometry of a helical metamaterial nanostructure and its optical excitation conditions to maximize circular-dichroic response.

\noindent \textbf{Scientific definition.} The quantity of interest is the dissymmetry ($g$-) factor, the normalized difference between left- and right-circularly polarized absorption, evaluated by an electromagnetic simulation surrogate.

\noindent \textbf{Search space.} Unlike the four-dimensional variant in PiFlow, our instantiation uses the full \textbf{seven-dimensional} parameterization: four structural parameters of the helix (fiber radius, helix radius, number of turns, pitch) plus three excitation parameters (wavelength, principal optical axis, propagation direction). The inclusion of the excitation parameters matters physically: the chiral response of a helix is strongly anisotropic~\citep{Savinov2016Toroidal,Rauschenbeutel2017ChiralQO,Faniayeu2020PolarizationCW}, and the global optimum $g\approx 2$ lies on a narrow manifold requiring aligned excitation, so the task is a genuine coupled structure--environment optimization, not geometry tuning alone. The space is bounded to the documented parameter envelope, with a $\pm 0.1$ anti-micro-tune gate against re-querying near-duplicates.

\noindent \textbf{Anti-memorization mechanism} (surrogate-defined optimum). NHO is the suite's only task with a purely continuous search space, and the only one without a banned-family constraint; it needs none, because its optimum is defined by the simulator's response surface and appears in no publication.

\noindent \textbf{Suite role.} NHO anchors the suite's ``near-ceiling'' regime (best-of-seed $\approx 1.90$ of the theoretical $2.0$).

\noindent \textbf{Reference scale.} $[0, 2]$; the dissymmetry factor satisfies $|g| \le 2$ by definition, so $2.0$ is a mathematical ceiling, not an empirical one.

\begin{taskbox}
	\small
	\textbf{Nanohelix Structure Optimization}

	\medskip
	\textbf{[Objective]} Find the parameter combination that maximizes the optical chirality of a helical nanostructure, described by its $g$-factor value $\in [0, 2]$.

	\medskip
	\textbf{[Parameters for Hypothesizing]}
	\begin{itemize}[leftmargin=*, noitemsep, topsep=2pt]
		\item \textbf{fiber-radius} (20--60 nm): radius of the fiber/wire forming the helix.
		\item \textbf{helix-radius} (20--90 nm): distance from the central axis to the center of the helical path.
		\item \textbf{n-turns} (float, 3.0--10.0): number of turns.
		\item \textbf{pitch length} (60--200 nm): axial distance between adjacent turns.
		\item \textbf{wavelength} (float, 400.0--800.0 nm): incident wavelength at which the $g$-factor is evaluated (highest chirality often occurs at 400--500 nm).
		\item \textbf{x\_y\_z} (integer 0/1/2): principal axis of the light source.
		\item \textbf{direction} (integer 0/1): propagation polarity (forward/backward).
	\end{itemize}

	\textbf{[Target Property]} $g$-factor $\in [0,2]$, higher is stronger chirality; computed by the \texttt{characterize\_nanohelix\_gfactor} tool.

	\medskip
	\textbf{[Hypothesis Format]} A single string assigning all seven parameters, e.g.\
	\begin{multicols}{2}
		\begin{itemize}[leftmargin=1.5em, noitemsep, topsep=1pt]
			\item \texttt{fiber\_radius=20.0}
			\item \texttt{helix\_radius=60.0}
			\item \texttt{n\_turns=6.0}
			\item \texttt{pitch=120.0}
			\item \texttt{wavelength=400.0}
			\item \texttt{x\_y\_z=2}
			\item \texttt{forward\_backward=0}
		\end{itemize}
	\end{multicols}
	\vspace{-0.8em}
	Perturbations of $\pm 0.1$ around previous candidates are not allowed.

	\medskip
	\textbf{[Principle Constraints]} Principles must state testable parameter--performance trends without numerical values, and must stay at the parameter level (no quantum/electronic meta-principles).
\end{taskbox}

\subsection{Molecular Bio-Activity Optimization (MBO)}
\label{subsec:mbo_task_benchmark}

MBO~\citep{Pu2025PiFlowPS} searches the space of drug-like small molecules, represented as SMILES strings, for the highest predicted bio-activity.

\noindent \textbf{Scientific definition.} The objective is the predicted pChEMBL value---the negative logarithm of the activity IC$_50$/EC$_50$/Ki/$K_d$---so each unit corresponds to an order of magnitude in potency. The surrogate is a regression model over molecular structures trained on bio-activity data.

\noindent \textbf{Search space.} SMILES strings over drug-like chemical space, delimited by the Lipinski rule-of-five (molecular weight $\le 500$ Da, H-bond donors $\le 5$, acceptors $\le 10$).

\noindent \textbf{Anti-memorization mechanism} (surrogate-defined optimum). The argmax of the trained regressor's response surface appears in no publication, so prior knowledge supplies priors but not the answer. The PAINS pan-assay-interference filter~\citep{Baell2010NewSF} closes the one memorization-adjacent hack: assay-promiscuous motifs (catechols, rhodanines, Michael acceptors) that inflate predicted activity without genuine binding score $0.0$.

\noindent \textbf{Suite role.} MBO is the benchmark's strongest ``quality-discrimination'' task: its landscape rewards structural reasoning in molecular structure-property level.

\noindent \textbf{Reference scale.} $[0, 12]$; pChEMBL $= 12$ corresponds to femtomolar affinity ($10^{-15}$\,M), the practical ceiling of reversible binding for drug-like small molecules---the strongest known drugs reach pChEMBL $\sim 10$--$11$, and essentially no drug-like molecule exceeds $12$.

\begin{taskbox}
	\small
	\textbf{Molecular Bio-Activity Optimization}

	\medskip
	\textbf{[Objective]} Discover a molecule (SMILES) with high bio-activity, quantified by maximizing its predicted pChEMBL value; only bio-activity is considered.

	\medskip
	\textbf{[Candidate Constraints]} (violations score $0.0$)
	\begin{itemize}[leftmargin=*, noitemsep, topsep=2pt]
		\item \textbf{Lipinski rule-of-five}: MW $\le 500$ Da; H-bond donors $\le 5$; H-bond acceptors $\le 10$.
		\item \textbf{PAINS filter}: no pan-assay-interference motifs~\citep{Baell2010NewSF}: false-high apparent activity is a known reward-hack path.
		\item \textbf{Chemical plausibility}: no simple polymers, repetitive chains (e.g.\ polyphenyls), or nonsensical structures; focus on novel scaffolds distinct from tested examples.
	\end{itemize}

	\textbf{[Example Record]} SMILES \texttt{"n1cccc2ccccc12"} $\rightarrow$ pChEMBL $= 2.691$.
\end{taskbox}

\subsection{Atypical Antimicrobial-Peptide Design (AMP)}
\label{subsec:amp_task_benchmark}

\noindent \textbf{Scientific definition.} AMP is an \emph{adversarial} sequence-design task over peptides of 12--50 residues in the twenty-letter amino-acid alphabet. The scoring oracle is the macrel antimicrobial-peptide classifier~\citep{SantosJnior2019MacrelAP}, whose confidence (range $[0,1]$) is maximized; the scientific interest is in probing the classifier's decision boundary away from its saturation region.

\noindent \textbf{Anti-memorization mechanism} (banned-answer). Textbook antimicrobial peptides are cationic (net positive charge) and amphipathic, and the classifier saturates on them. The task therefore inverts the textbook recipe---candidates must be \textbf{acidic} (net charge $\le 0$ at pH 7) with hydrophobic fraction in $[0.30,0.60]$---so that a high score can only be achieved by discovering which alternative physicochemical features (local hydrophobic patches, D/E--hydrophobic alternation geometry, aromatic content) the classifier responds to.

\noindent \textbf{Suite role.} AMP converts a memorization-friendly classification task into a search problem in which neither arm can rely on prior knowledge.

\noindent \textbf{Reference scale.} $[0, 1]$, the mathematical bounds of a classifier probability.

\begin{taskbox}
	\small
	\textbf{Atypical Antimicrobial-Peptide Design}

	\medskip
	\textbf{[Objective]} Discover an \emph{atypical} (acidic) peptide that the AMP classifier scores as antimicrobial, maximizing predicted AMP probability.

	\medskip
	\textbf{[Candidate Constraints]} (violations score $0.0$)
	\begin{itemize}[leftmargin=*, noitemsep, topsep=2pt]
		\item \textbf{Alphabet and length}: one-letter amino-acid code, 20 canonical residues only; length 12--50.
		\item \textbf{Acidic atypical profile}: net charge $\le 0$ at pH 7 (K,R $=+1$; D,E $=-1$; H $=+0.5$); hydrophobic fraction (A,V,L,I,M,F,W,Y,P) in $[0.30, 0.60]$; cationic (K/R-rich) amphipathic helices are rejected.
		\item \textbf{Anti-degeneracy}:
		\begin{itemize}[leftmargin=1.5em, noitemsep, topsep=1pt]
			\item no tandem repeats of any short segment; no exact repeats with period 2--4;
			\item no single-residue run $>4$;
			\item residue-composition Shannon entropy $\ge 2.0$ bits;
			\item 3-mer diversity $\ge 0.65$ (several distinct functional segments, not one copied motif).
		\end{itemize}
	\end{itemize}

	\textbf{[Guiding Hint]} Textbook AMPs use net positive charge to bind anionic membranes; that route is closed. Reason about which other physicochemical features an acidic peptide could use to be classified as antimicrobial.

	\medskip
	\textbf{[Example Record]} \texttt{"DWEFLPKGAHVDEILNWPTS"} (length 20, charge $-4$, hydrophobic fraction 0.50, 3-mer diversity 0.94) $\rightarrow$ AMP probability $0.34$.
\end{taskbox}

\subsection{Promoter Expression Optimization (Promoter)}
\label{subsec:promoter_task_benchmark}

\noindent \textbf{Scientific definition.} The Promoter task searches a 50-nucleotide DNA window for the strongest predicted transcriptional expression, scored by the LegNet surrogate trained on the DREAM2022 yeast-promoter expression challenge~\citep{deAlmeida2022GenerationOS}; the score is an expected expression bin, roughly $[0,17]$.

\noindent \textbf{Anti-memorization mechanism} (banned-answer). The dominant expression drivers in this yeast-trained model are the nucleosome-displacing general regulatory factors (GRFs) REB1, ABF1, and RAP1, together with GC-box-like cores---the memorizable textbook architecture. The task therefore bans the GRF consensus motifs on both strands and all GC-box cores (exact strings in the task box below), making it a counterfactual design problem.

\noindent \textbf{Residual landscape.} The reachable motif-free landscape still spans $\sim$7.7--14.7, so high scores remain achievable but require discovering non-canonical expression drivers: positional effects of partial motifs, A/T- versus G/C-rich region balance, initiator-like elements.

\noindent \textbf{Non-degeneracy gate.} A sequence-complexity gate (GC content $[0.30,0.70]$; no homopolymer run $>6$; no poly(dA:dT) tract $>8$, the yeast nucleosome-exclusion cheat; no di-nucleotide $>35\%$; combined CG+GC di-mer fraction $\le 45\%$; no period-2--6 repeats; no alternating di-nucleotide block $\ge 6$; 3-mer diversity $\ge 0.55$) prevents degenerate repeat padding that inflates the score without genuine design. Sequences of 45--55 nt are centre-cropped to exactly 50 as a pure format normalization.

\noindent \textbf{Generation note.} Because LLM-generated DNA drifts into repetition, this task uses generation temperature $1.1$ (vs.\ $0.6$ elsewhere), determined by an offline yield scan.

\noindent \textbf{Reference scale.} $[0, 17]$; the surrogate outputs an expected value over the DREAM2022 challenge's $18$ ordinal expression bins (indexed $0$--$17$), so $17$ is a mathematical bound of the score, not an empirical maximum.

\begin{taskbox}
	\small
	\textbf{Promoter Expression Optimization (Counterfactual)}

	\medskip
	\textbf{[Objective]} Discover a 50-nt DNA sequence driving strong predicted expression \emph{without} the yeast GRF motifs (REB1/ABF1/RAP1) or any GC-box variant---the memorizable drivers of the yeast-trained surrogate.

	\medskip
	\textbf{[Candidate Constraints]} (violations score $0.0$)
	\begin{itemize}[leftmargin=*, noitemsep, topsep=2pt]
		\item \textbf{Length}: 50-nt window, alphabet \{A,C,G,T\}; 45--55 nt is centre-cropped to 50, anything else scores $0.0$.
		\item \textbf{Motif ban}, on both strands and including near-matches:
		\begin{itemize}[leftmargin=1.5em, noitemsep, topsep=1pt]
			\item REB1: \texttt{TTACCCG} / \texttt{CGGGTAA}
			\item ABF1: \texttt{TCGGTAA} / \texttt{TTACCGA}
			\item RAP1: \texttt{ACACCC} / \texttt{GGGTGT}
			\item GC-box cores: \texttt{GGCG}, \texttt{CGCC}, \texttt{GCGC}, \texttt{CGCG} (subsuming the 6-mers \texttt{GGGCGG} / \texttt{CCGCCC})
		\end{itemize}
		\item \textbf{Non-degeneracy}:
		\begin{itemize}[leftmargin=1.5em, noitemsep, topsep=1pt]
			\item GC content in $[0.30, 0.70]$; no homopolymer run $>6$; no poly(dA:dT) tract $>8$;
			\item no dinucleotide $>35\%$; combined CG+GC di-mer fraction $\le 45\%$;
			\item no period-2--6 repeats; no alternating di-nucleotide block $\ge 6$; 3-mer diversity $\ge 0.55$.
		\end{itemize}
		\item \textbf{Construction procedure}: assemble as five 10-nt blocks with different compositional sketches, then self-check for repeated 6-mers, periodicity, GC balance, and motif bans before submitting.
	\end{itemize}

	\textbf{[Example Record]} (GC 0.58, 3-mer diversity 0.67, no GC-box) $\rightarrow$ expression score $14.38$:

	\smallskip
	\quad\texttt{TGAGGAGCCCGGTACCGGCATACCTCTACAGTGTGTTACTACCGGACGAC}
\end{taskbox}

\subsection{Transition-Metal-Complex Polarizability Optimization (TMC)}
\label{subsec:tmc_task_benchmark}

\noindent \textbf{Scientific definition.} TMC~\citep{Song2025EvaluatingLL} designs a charge-neutral Pd(II) complex by selecting exactly four \emph{distinct} ligands from a fixed pool of 49 (24 monoanions, 25 neutrals); the objective is maximum electronic polarizability (theoretical maximum 500).

\noindent \textbf{Search space.} Charge neutrality imposes the combinatorial structure: exactly two anionic and two neutral ligands, and any repeated ligand scores $0.0$ (re-using a high-scoring ligand is reward-padding), giving $\binom{24}{2}\binom{25}{2} = 82{,}800$ valid combinations, each a discrete choice over aromatic versus aliphatic, halide versus soft donor, and bulky versus compact ligands.

\noindent \textbf{Anti-memorization mechanism} (pool-computed optimum). The optimum is an explicit computation over the fixed pool, not a named entity that can be recited. The scientific content is physical-organic---polarizability grows with delocalized $\pi$-systems (aromatic SMILES atoms), soft heavy atoms (I, Br, S), and extended electron clouds---while the selection rules require parsing charge from bracket syntax, not chemical intuition.

\noindent \textbf{Suite role.} TMC is the suite's exemplar of a purely \emph{combinatorial} space with a smooth but subtle surrogate response.

\noindent \textbf{Reference scale.} $[0, 500]$, the originating benchmark's theoretical maximum polarizability~\citep{Song2025EvaluatingLL}, enforced at evaluation time by oracle clamping.

\begin{taskbox}
	\small
	\textbf{Transition-Metal-Complex Design}

	\medskip
	\textbf{[Objective]} Select exactly four ligands from the provided pool to coordinate with a central Pd$^{2+}$, maximizing polarizability.

	\medskip
	\textbf{[Critical Constraints]} (violations score $0.0$)
	\begin{itemize}[leftmargin=*, noitemsep, topsep=2pt]
		\item \textbf{Charge neutrality}: the complex must be neutral; with a $+2$ center and a pool of $-1$/$0$ ligands, select exactly \textbf{two} charge-$(-1)$ and \textbf{two} charge-$0$ ligands.
		\item \textbf{Pool restriction}: only the 49 tabulated ligands; no external molecules.
		\item \textbf{Distinct ligands}: the four ligands must all be different; repeating a ligand is reward-padding.
		\item \textbf{Charge reading}: charges must be read from the provided table, not inferred from SMILES or chemical names.
	\end{itemize}

	\textbf{[Submission Format]} \texttt{Pd\_\{SMILES\}\_\{SMILES\}\_\{SMILES\}\_\{SMILES\}}: exact pool SMILES joined by underscores, e.g.\ \texttt{Pd\_c1ccccn1\_S(=O)(C)C\_C1=C[C-]=CC=C1\_[N-]=[N+]=[N-]}.

	\medskip
	\textbf{[Ligand Pool]} (24 monoanions, 25 neutrals)

	\smallskip
	\textbf{Monoanions ($-1$):}
	\begin{multicols}{2}
		\scriptsize
		\begin{itemize}[leftmargin=1.4em, noitemsep, topsep=1pt]
			\item saccharinate \texttt{S1(=O)(=O)[N-]C(=O)c2c1cccc2}
			\item \texttt{O=[C-]OC}
			\item cyclopentadienyl \texttt{C1=C[C-]=CC=C1}
			\item \texttt{[I-]}
			\item \texttt{[S-]c1c(c(cc(c1F)F)F)F}
			\item \texttt{C1CC(=O)[N-]C1=O}
			\item \texttt{O=N(=O)[O-]}
			\item \texttt{[C-]1=CC=C(C=C1)F}
			\item \texttt{[CH3-]}
			\item tolyl \texttt{[C-]1=CC=C(C=C1)C}
			\item \texttt{O=N[O-]}
			\item trifluoromethyl \texttt{[C-](F)(F)F}
			\item \texttt{[Cl-]}
			\item \texttt{[C-]\#N}
			\item azide \texttt{[N-]=[N+]=[N-]}
			\item \texttt{[Br-]}
			\item \texttt{[S-]C\#N}
			\item phthalimide \texttt{[N-]1C(=O)c2c(C1=O)cccc2}
			\item \texttt{[C-]1=C(F)C(=C(C(=C1F)F)F)F}
			\item \texttt{c1c(C\#[C-])cccc1}
			\item \texttt{[O-]c1ccccc1}
			\item \texttt{[F-]}
			\item \texttt{[S-]c1ccccc1}
			\item acetyl \texttt{C[C-]=O}
		\end{itemize}
	\end{multicols}

	\textbf{Neutrals ($0$):}
	\begin{multicols}{2}
		\scriptsize
		\begin{itemize}[leftmargin=1.4em, noitemsep, topsep=1pt]
			\item lutidine \texttt{n1c(cc(cc1C)C)C}
			\item \texttt{n1ccc(cc1)C}
			\item \texttt{n1c(cccc1C)C}
			\item isocyanide \texttt{[C-]\#[N+]C(C)(C)C}
			\item \texttt{[C-]\#[N+]C1CCCCC1}
			\item \texttt{[C-]\#[N+]c1c(C)cccc1C}
			\item phosphine \texttt{CP(C)c1ccccc1}
			\item \texttt{CP(C)C}
			\item \texttt{n1cccc(c1)Cl}
			\item \texttt{O} (water)
			\item \texttt{N\#CC}
			\item \texttt{S(C)C}
			\item aminopyridine \texttt{c1ccnc(c1)N}
			\item \texttt{n1ccn(c1)C}
			\item \texttt{CN1[C]N(C)C=C1}
			\item \texttt{n1ccc(cc1)N(C)C}
			\item \texttt{n1[nH]c(cc1C)C}
			\item \texttt{[C-]\#[O+]}
			\item morpholine \texttt{O1CCNCC1}
			\item \texttt{NCC}
			\item \texttt{CS(=O)C}
			\item \texttt{N}
			\item \texttt{c1ccccn1}
			\item \texttt{S(=O)(C)C}
			\item \texttt{C(C)NCC}
		\end{itemize}
	\end{multicols}

	\textbf{[Example Records]}
	\begin{itemize}[leftmargin=1.5em, noitemsep, topsep=1pt]
		\item \texttt{Pd\_c1ccccn1\_CP(C)C\_[Cl-]\_[Br-]} $\rightarrow$ $198.45$
		\item \texttt{Pd\_c1ccccn1\_S(=O)(C)C\_C1=C[C-]=CC=C1\_[N-]=[N+]=[N-]} $\rightarrow$ $235.25$
	\end{itemize}
\end{taskbox}

\subsection{Superconductor Critical-Temperature Optimization (SPO)}
\label{subsec:spo_task_benchmark}

\noindent \textbf{Scientific definition.} SPO~\citep{Pu2025PiFlowPS} searches stoichiometric cuprate formulas for the highest critical temperature $T_c$ (K).

\noindent \textbf{Search space.} The explorable families are three copper-oxide systems (La-Sr-Cu-O, La-Ba-Cu-O, Nd-Ce-Cu-O) extended by genuine two-site co-doping (rare-earth-site plus Cu-site, or two rare earths), with ratios to two decimal places.

\noindent \textbf{Anti-memorization mechanism} (banned-answer). The counterfactual admission gate is the strictest in the suite: a candidate scores $0.0$ if it combines Y, Ba, and Cu (the YBCO backbone---the single most-recited textbook superconductor---at any stoichiometry), if it combines Bi, Sr, Ca, and Cu (the BSCCO backbone, at any stoichiometry), or if it bears Hg or Tl on the Ba-Ca-Cu backbone (the Hg-1223/Tl-1223/Tl-2223 record phases, including degenerate near-miss substitutions such as a tiny Pb/Bi replacement of the apical cation), because these are the memorizable textbook answers that the $T_c$ surrogate reproduces uncritically. Formulas with fewer than four distinct elements, or ratios beyond two decimal places, likewise score $0.0$.

\noindent \textbf{Suite role.} SPO provides the benchmark's clearest \emph{bimodal landscape}: surrogate-scored basins near 27--35\,K (conventional La-based) coexist with electron-doped T$'$-phase basins near 80--95\,K, and which basin a search trajectory locks onto is decided by early exploration---the mechanism behind the diversity-collapse failure mode analyzed in the experiments.

\noindent \textbf{Reference scale.} $[0, 298.15]$\,K, the task's stated target of room-temperature superconductivity---the defining physical goal of the field.

\begin{taskbox}
	\small
	\textbf{Superconductor Critical-Temperature Optimization}

	\medskip
	\textbf{[Objective]} Discover a material (within the given copper-oxide systems) with $T_c$ as high as possible, approaching room temperature (298.15 K). Only element-first, ratio-second formulas are considered; no environmental conditions.

	\medskip
	\textbf{[Suggested Systems]} La-Sr-Cu-O, La-Ba-Cu-O, Nd-Ce-Cu-O, plus genuine two-site co-doping (rare-earth-site + Cu-site, or two rare earths) that does not recreate a forbidden backbone.

	\medskip
	\textbf{[Counterfactual Constraint]} (violations score $0.0$) Record-$T_c$ and textbook-memorizable phases are forbidden:
	\begin{itemize}[leftmargin=*, noitemsep, topsep=2pt]
		\item no Hg- or Tl-bearing Ba-Ca-Cu-O (the Hg-1223/Tl-1223/Tl-2223 record phases; a small Pb/Bi substitution of the apical cation is a degenerate hack);
		\item no formula combining Bi, Sr, Ca, Cu (the BSCCO backbone, any stoichiometry);
		\item no formula combining Y, Ba, Cu (the YBCO backbone, any stoichiometry).
	\end{itemize}

	\textbf{[Format]} (violations score $0.0$)
	\begin{itemize}[leftmargin=*, noitemsep, topsep=2pt]
		\item a single stoichiometric-formula string with $\ge 4$ distinct elements;
		\item ratios with $\le 2$ decimal digits (excessive precision is numerical reward-hacking);
		\item no qualifiers or layer annotations.
	\end{itemize}

	\textbf{[Example Record]} \texttt{"La1.85Sr0.15Cu1O4"} $\rightarrow$ $T_c = 30.8$ K.
\end{taskbox}

\end{document}